\providecommand{\algorithmname}{Algorithm}
\theoremstyle{plain}
\newtheorem{thm}{\protect\theoremname}[section]
\theoremstyle{plain}
\newtheorem{lem}[thm]{\protect\lemmaname}
\theoremstyle{remark}
\newtheorem{rem}[thm]{\protect\remarkname}
\newcommand{\setref}[1]{%
  \phantomsection        
  \edef\@currentlabel{#1}
}
\providecommand{\lemmaname}{Lemma}
\providecommand{\remarkname}{Remark}
\providecommand{\theoremname}{Theorem}
\begin{document}
\global\long\def\E{\mathbb{E}}%
\global\long\def\F{\mathcal{F}}%
\global\long\def\R{\mathbb{R}}%
\global\long\def\N{\mathbb{N}}%
\global\long\def\na{\nabla}%
\global\long\def\dom{\mathcal{X}}%
\global\long\def\pa{\partial}%
\global\long\def\hp{\widehat{\partial}}%
\global\long\def\argmin{\mathrm{argmin}}%
\global\long\def\hg{\widehat{g}}%

\title{Revisiting the Last-Iterate Convergence of Stochastic Gradient Methods\thanks{The preliminary version has been accepted at ICLR 2024. The extended
version was finished in November 2023 (V1), revised in March 2024
with typos corrected (V2), and revised again in November 2025 with
some improvements (V3). In the current March 2026 version (V4), further
improvements and polishing have been made.}}
\author{Zijian Liu\thanks{Stern School of Business, New York University, zl3067@stern.nyu.edu.}\and
Zhengyuan Zhou\thanks{Stern School of Business, New York University, zzhou@stern.nyu.edu.}}
\maketitle
\begin{abstract}
In the past several years, the last-iterate convergence of the Stochastic
Gradient Descent (SGD) algorithm has triggered people's interest due
to its good performance in practice but lack of theoretical understanding.
For Lipschitz convex functions, different works have established the
optimal $O(\log(1/\delta)\log T/\sqrt{T})$ or $O(\sqrt{\log(1/\delta)/T})$
high-probability convergence rates for the final iterate, where $T$
is the time horizon and $\delta$ is the failure probability. However,
to prove these bounds, all the existing works are either limited to
compact domains or require almost surely bounded noise. It is natural
to ask whether the last iterate of SGD can still guarantee the optimal
convergence rate but without these two restrictive assumptions. Besides
this important question, there are still lots of theoretical problems
lacking an answer. For example, compared with the last-iterate convergence
of SGD for non-smooth problems, only few results for smooth optimization
have yet been developed. Additionally, the existing results are all
limited to a non-composite objective and the standard Euclidean norm.
It still remains unclear whether the last-iterate convergence can
be provably extended to wider composite optimization and non-Euclidean
norms. In this work, to address the issues mentioned above, we revisit
the last-iterate convergence of stochastic gradient methods and provide
the first unified way to prove the convergence rates both in expectation
and in high probability to accommodate general domains, composite
objectives, non-Euclidean norms, Lipschitz conditions, smoothness,
and (strong) convexity simultaneously. Additionally, we extend our
analysis to obtain last-iterate convergence under heavy-tailed and
sub-Weibull noise.
\end{abstract}

\section{Introduction}

In this paper, we consider the constrained composite optimization
problem, $\min_{x\in\dom}F(x)\coloneqq f(x)+h(x)$, where both $f(x)$
and $h(x)$ are convex (but possibly satisfying additional conditions
such as strong convexity, smoothness, etc.) and $\dom\subseteq\R^{d}$
is a nonempty closed convex set. Since a true gradient is computationally
prohibitive to obtain (e.g., large-scale machine learning tasks) or
even infeasible to access (e.g., streaming data), the classic Stochastic
Gradient Descent (SGD) \citep{robbins1951stochastic} algorithm has
emerged to be the gold standard for a light-weight yet effective computational
procedure commonly adopted in production for the majority of machine
learning tasks: SGD only requires a stochastic first-order oracle
$\hp f(x)$ satisfying $\E[\hp f(x)\mid x]\in\pa f(x)$ where $\pa f(x)$
denotes the set of subgradients at $x$ and guarantees provable convergence
under certain conditions (e.g., Lipschitz condition for $f(x)$ and
finite variance on the stochastic oracle).

A particularly important problem in this area is to understand the
last-iterate convergence of SGD, which has been motivated by experimental
studies suggesting that returning the final iterate of SGD (or sometimes
the average of the last few iterates) -- rather than a running average
-- often yields a solution that works well in practice (e.g., \citet{shalev2007pegasos}).
As such, a fruitful line of literature \citep{rakhlin2011making,pmlr-v28-shamir13,pmlr-v99-harvey19a,orabona2020blog,doi:10.1137/19M128908X}
developed an extensive theoretical understanding of the non-asymptotic
last-iterate convergence rate. Loosely speaking, two optimal upper
bounds, $\widetilde{O}(1/\sqrt{T})$ for Lipschitz convex functions
and $\widetilde{O}(1/T)$ for Lipschitz strongly convex functions,
have been established for both in-expectation and high-probability
convergence when $h(x)=0$ (see Subsection \ref{subsec:related-work}
for a detailed discussion). However, to prove the high-probability
rates, existing works rely on restrictive assumptions: compact domains
or almost surely bounded noise (or both), which can simplify the analysis
but are unrealistic in lots of problems. Until today, whether these
two assumptions can be relaxed simultaneously or not still remains
unclear. Naturally, we want to ask the following question:
\begin{center}
\textit{Q1: Is it possible to prove the high-probability last-iterate
convergence of SGD for Lipschitz (strongly) convex functions without
the compact domain assumption and beyond the bounded noise?}
\par\end{center}

Compared with the fast development of non-smooth problems, the understanding
of the last-iterate convergence of SGD for smooth problems (i.e.,
the gradients of $f(x)$ are Lipschitz) is much slower. The best in-expectation
bound for smooth convex optimization under $\dom=\R^{d}$ until now
is still $O(1/\sqrt[3]{T})$ due to \citet{NIPS2011_40008b9a}, which
is far from the optimal rate $O(1/\sqrt{T})$ of the averaging output
\citep{lan2020first}. However, temporarily suppose the domain is
compact, one can immediately improve the rate from $O(1/\sqrt[3]{T})$
to $\widetilde{O}(1/\sqrt{T})$ by noticing that we can reduce the
smooth problem to the Lipschitz problem\footnote{To see why gradients are bounded in this case, we first fix a point
$x_{0}$ in the domain. Then by smoothness, there is $\left\Vert \na f(x)-\na f(x_{0})\right\Vert _{2}=O\left(\left\Vert x-x_{0}\right\Vert _{2}\right)$
for any other point $x$, which immediately implies $\left\Vert \na f(x)\right\Vert _{2}=O\left(\left\Vert x-x_{0}\right\Vert _{2}+\left\Vert \na f(x_{0})\right\Vert _{2}\right)=O\left(D+\left\Vert \na f(x_{0})\right\Vert _{2}\right)$
where $D$ is the domain diameter.} and use the known bounds from non-smooth convex optimization. Hence,
one may expect the last-iterate convergence rate of SGD for smooth
convex optimization should still be $O(1/\sqrt{T})$ for any kind
of domain. If one further considers smooth and strongly convex problems,
as far as we know, there is no formal result has been established
for the final iterate of SGD in a general domain except for the $O(1/T)$
in-expectation rate when $\dom=\R^{d}$ under the PL-condition (which
is known as a relaxation for strong convexity) \citep{pmlr-v130-gower21a,khaled2022better}.
The above discussion thereby leads us to the second main question:
\begin{center}
\textit{Q2: Does the last iterate of SGD provably converge in the
rate of $O(1/\sqrt{T})$ for smooth and convex functions and $O(1/T)$
for smooth and strongly convex functions in a general domain?}
\par\end{center}

Besides the two aforementioned questions, there are still several
important missing parts. First, recalling that our original goal is
to optimize the composite objective $F(x)=f(x)+h(x)$, it is still
unclear whether -- and if so, how -- the last-iterate convergence
of this harder problem can be proved. Moreover, the previous works
are limited to the standard Euclidean norm. Whereas, in lots of specialized
tasks, it may be beneficial to employ a general norm instead of the
$\ell_{2}$ norm to capture the non-Euclidean structure. However,
whether this extension can be done remains open. Additionally, the
proof techniques in the existing works vary in different settings,
which builds a barrier for researchers to better understand the convergence
of the last iterate of SGD. Motivated by these challenges, we would
like to ask the final question:
\begin{center}
\textit{Q3: Is there a unified way to analyze the last-iterate convergence
of stochastic gradient methods both in expectation and in high probability
to accommodate general domains, composite objectives, non-Euclidean
norms, Lipschitz conditions, smoothness, and (strong) convexity at
once?}
\par\end{center}

\subsection{Our Contributions}

We provide affirmative answers to the above three questions and establish
several new results by revisiting a simple algorithm, Composite Stochastic
Mirror Descent (CSMD) \citep{duchi2010composite}, which is based
on the famous Mirror Descent (MD) algorithm \citep{nemirovskij1983problem,BECK2003167}
and includes SGD as a special case. Specifically, our contributions
are as follows.
\begin{itemize}
\item We establish the first high-probability convergence result for the
last iterate of CSMD in general domains under sub-Gaussian noise to
answer \textit{Q}1 affirmatively.
\item We prove the last iterate of CSMD can converge in the rate of $O(1/\sqrt{T})$
for smooth convex optimization and $O(1/T)$ for smooth strongly convex
problems both in expectation and in high probability for any general
domain $\dom$, hence resolving \textit{Q2.}
\item We present a simple unified analysis that differs from the prior works
and can be directly applied to various scenarios simultaneously, thus
leading to a positive answer to \textit{Q3}.
\end{itemize}
By extending the proof idea:
\begin{itemize}
\item In Section \ref{sec:main-heavy}, we provide the first in-expectation
convergence bound under heavy-tailed noise for the last iterate of
the CSMD algorithm.
\item In Section \ref{sec:main-weibull}, we prove the first high-probability
convergence rate under sub-Weibull noise for the last iterate of the
CSMD algorithm.
\end{itemize}

\subsection{Related Work\label{subsec:related-work}}

We review the literature related to the last-iterate convergence of
plain stochastic gradient methods\footnote{To clarify, we mean the algorithm does not contain momentum or averaging
operations.} measured by the function value gap (see Subsection \ref{subsec:criterion}
for why we use this criterion) for both Lipschitz and smooth (strongly)
convex optimization. We only focus on the algorithms without momentum
or averaging since it is already known that, without further special
assumptions, both operations cannot help to improve the lower order
term $O(1/\sqrt{T})$ for general convex functions and $O(1/T)$ for
strongly convex functions. For the last iterate of accelerated or
averaging based stochastic gradient methods, we refer the reader to
\citet{nesterov2015quasi,lan2020first,orabona2021parameter} for in-expectation
rates and \citet{pmlr-v125-davis20a,NEURIPS2020_abd1c782,pmlr-v202-liu23aa,pmlr-v202-sadiev23a}
for high-probability bounds. As for the last iterate of stochastic
gradient methods for structured problems (e.g., linear regression),
the reader can refer to \citet{lei2017analysis,NEURIPS2019_2f4059ce,NEURIPS2021_b4a0e0fb,pan2022eigencurve,pmlr-v162-wu22p}
for recent progress.

\textbf{Last iterate for Lipschitz (strongly) convex functions:} \citet{rakhlin2011making}
is the first to show an in-expectation $O(1/T)$ convergence rate
for strongly convex functions. But such a bound is obtained under
the additional assumption, smoothness with respect to optimum\footnote{This means $\exists L>0$ such that $f(x)-f(x^{*})\leq\frac{L}{2}\|x-x^{*}\|^{2},\forall x\in\dom$
where $x^{*}\in\argmin_{x\in\dom}f(x)$.}, meaning their result does not hold in general. Later on, \citet{pmlr-v28-shamir13}
proves the first in-expectation last-iterate rates $O(\log T/\sqrt{T})$
and $O(\log T/T)$ for convex and strongly convex objectives, respectively.
The high-probability bounds turn out to be much harder than the in-expectation
rates. After several years, \citet{pmlr-v99-harvey19a} is the first
to establish a high-probability bound in the rate of $O(\log(1/\delta)\log T/\sqrt{T})$
and $O(\log(1/\delta)\log T/T)$ for convex and strongly convex problems
where $\delta$ is the probability of failure. Afterward, \citet{doi:10.1137/19M128908X}
improves the previous two rates to $O(\sqrt{\log(1/\delta)/T})$ and
$O(\log(1/\delta)/T)$ but with a non-standard step size schedule.
They also prove the rates $O(1/\sqrt{T})$ and $O(1/T)$ in expectation
under the new step size.

However, a main drawback for the general convex case in all the above
papers is requiring a compact domain. To our best knowledge, \citet{orabona2020blog}
is the first and the only work showing how to shave off this restriction,
and thereby obtains an $O(\log T/\sqrt{T})$ rate in expectation for
general domains yet it is unclear whether his proof can be extended
to the high-probability case or not. Until recently, \citet{doi:10.1137/24M1717762}
exhibits a new proof on how to obtain the convergence rate for the
last iterate but only for the deterministic case. Lastly, we would
like to mention that all of these prior results are built for a non-composite
objective $f(x)$ with the standard Euclidean norm.

\textbf{Last iterate for smooth (strongly) convex functions: }Compared
with Lipschitz problems, much less work is done for smooth optimization.
As far as we know, the only result showing a non-asymptotic rate for
smooth convex functions dates back to \citet{NIPS2011_40008b9a},
in which the authors prove that the last iterate of SGD on $\R^{d}$
enjoys an in-expectation rate $O(1/\sqrt[3]{T})$ under additional
restrictive assumptions (e.g., mean squared smoothness). As for the
strongly convex case, the $O(1/T)$ rate in expectation under the
PL-condition (which is known as a relaxation for strong convexity)
has been established, but only for non-composite optimization under
the Euclidean norm on the domain $\dom=\R^{d}$ \citep{pmlr-v130-gower21a,khaled2022better}.

\textbf{Lower bounds for last iterate:} Under the requirement $d=T$
where $d$ is the dimension of the problem, \citet{pmlr-v99-harvey19a}
is the first to provide lower bounds $\Omega(\log T/\sqrt{T})$ under
the step size $\Theta(1/\sqrt{t})$ for non-smooth convex functions
and $\Omega(\log T/T)$ under the step size $\Theta(1/t)$ when strong
convexity is additionally assumed. Note that these two rates are both
proved for deterministic optimization meaning that they can be also
applied to the in-expectation lower bounds. Subsequently, when $d<T$
holds, \citet{liu2021convergence} extends the above two lower bounds
to $\Omega(\log d/\sqrt{T})$ (this bound is also true for the step
size $\Theta(1/\sqrt{T})$) and $\Omega(\log d/T)$ under the same
step size in \citet{pmlr-v99-harvey19a}. As a consequence, lower
bounds $\Omega(\log(d\land T)/T)$ and $\Omega(\log(d\land T)/\sqrt{T})$
have been established for both convex and strongly convex problems
under the Lipschitz condition. For the high-probability bounds, \citet{pmlr-v99-harvey19a}
shows their two deterministic bounds will incur an extra multiplicative
factor $\Omega(\log(1/\delta))$, namely, $\Omega(\log(1/\delta)\log T/\sqrt{T})$
and $\Omega(\log(1/\delta)\log T/T)$. However, under more sophisticated
designed step sizes, better upper bounds without the $\Omega(\log T)$
factor are possible, for example, see \citet{doi:10.1137/19M128908X}
as mentioned above.

Another highly related work is \citet{pmlr-v202-liu23aa}, which presents
a generic approach to establish the high-probability convergence of
the \textit{average iterate} under sub-Gaussian noise. We will show
that their idea can be further used to prove the high-probability
convergence for the \textit{last iterate}.

Additional works about heavy-tailed noise and sub-Weibull noise will
be provided in Sections \ref{sec:main-heavy} and \ref{sec:main-weibull}.

\section{Preliminaries\label{sec:Preliminaries}}

\textbf{Notations:} $\N$ is the set of natural numbers (excluding
$0$). $\left[d\right]\coloneqq\left\{ 1,2,\cdots,d\right\} $ for
any $d\in\N$. $a\lor b$ and $a\land b$ are defined as $\max\left\{ a,b\right\} $
and $\min\left\{ a,b\right\} $, respectively. $\left\langle \cdot,\cdot\right\rangle $
is the standard Euclidean inner product on $\R^{d}$. $\left\Vert \cdot\right\Vert $
represents a general norm on $\R^{d}$ and $\left\Vert \cdot\right\Vert _{*}$
is its dual norm. Given a set $A\subseteq\R^{d}$, $\mathrm{int}(A)$
stands for its interior points. For a function $f$, $\partial f(x)$
denotes the set of subgradients at $x$.

We focus on the following optimization problem in this work\textbf{
\[
\min_{x\in\dom}F(x)\coloneqq f(x)+h(x),
\]
}where $f$ and $h$ are both convex. $\dom\subseteq\mathrm{int}(\mathrm{dom}(f))\subseteq\R^{d}$
is a closed convex set. The requirement of $\dom\subseteq\mathrm{int}(\mathrm{dom}(f))$
is only to guarantee the existence of $\partial f(x)$ for every point
$x$ in $\dom$ with no special reason. We emphasize that there is
no compactness requirement on $\dom$. Additionally, given $\psi$
being a differentiable and $1$-strongly convex function with respect
to $\left\Vert \cdot\right\Vert $ on $\dom$ (i.e., $\psi(x)\geq\psi(y)+\left\langle \na\psi(y),x-y\right\rangle +\frac{1}{2}\left\Vert x-y\right\Vert ^{2},\forall x,y\in\dom$\footnote{Rigorously speaking, $y$ should be in $\mathrm{int}(\dom)$. But
one can think $\dom\subseteq\mathrm{int}(\mathrm{dom}(\psi))$ to
avoid this issue.}), the Bregman divergence with respect to $\psi$ is defined as $D_{\psi}(x,y)\coloneqq\psi(x)-\psi(y)-\left\langle \na\psi(y),x-y\right\rangle $.
Throughout this paper, we assume that $\argmin_{x\in\dom}h(x)+\left\langle g,x-y\right\rangle +\frac{D_{\psi}(x,y)}{\eta}$
can be solved efficiently for any $g\in\R^{d}$, $y\in\dom$, $\eta>0$.

Next, we list the assumptions used in our analysis:
\begin{enumerate}
\item[\textbf{1.}] \setref{1}\label{enu:A1}\textbf{Existence of a local minimizer:}
$\exists x^{*}\in\argmin_{x\in\dom}F(x)$ satisfying $F(x^{*})>-\infty$.\footnote{In fact, this assumption is not necessary for most of our results
(except theorems and lemmas in Sections \ref{sec:main-weibull} and
\ref{sec:weibull-analysis}) since we can bound $F(x^{T+1})-F(x)$
for any $x\in\dom$. However, we keep it here and will use it when
stating theorems in the main text. When it is possible, we will drop
this assumption in the intermediate lemmas and theorems in the appendix.}
\item[\textbf{2.}] \setref{2}\label{enu:A2}$(\mu_{f},\mu_{h})$\textbf{-strongly convex:}
For $k=f$ and $k=h$, $\exists\mu_{k}\geq0$ such that $\mu_{k}D_{\psi}(x,y)\leq k(x)-k(y)-\left\langle g,x-y\right\rangle ,\forall x,y\in\dom,g\in\pa k(y)$.
Moreover, we assume at least one of $(\mu_{f},\mu_{h})$ is zero.
\item[\textbf{3.}] \setref{3}\label{enu:A3}\textbf{General} $(L,M)$\textbf{-smooth:}
$\exists L\geq0,M\geq0$ such that $f(x)-f(y)-\left\langle g,x-y\right\rangle \leq\frac{L}{2}\left\Vert x-y\right\Vert ^{2}+M\left\Vert x-y\right\Vert ,\forall x,y\in\dom,g\in\pa f(y)$.
\item[\textbf{4.}] \setref{4}\label{enu:A4}\textbf{Unbiased gradient estimator:} For
a given $x^{t}\in\dom$ in the $t$-th iteration, we can access an
unbiased gradient estimator $\hg^{t}$, i.e., $\E\left[\hg^{t}\mid\F^{t-1}\right]\in\pa f(x^{t})$,
where $\F^{t}\coloneqq\sigma\left(\hg^{s},s\in\left[t\right]\right)$
is the natural filtration.
\item[\textbf{5A.}] \setref{5A}\label{enu:A5A}\textbf{Finite variance:} $\exists\sigma\geq0$
denoting the noise level such that $\E\left[\left\Vert \xi^{t}\right\Vert _{*}^{2}\right]\leq\sigma^{2}$
where $\xi^{t}\coloneqq\hg^{t}-\E\left[\hg^{t}\mid\F^{t-1}\right]$.
\item[\textbf{5B.}] \setref{5B}\label{enu:A5B}\textbf{Sub-Gaussian noise:} $\exists\sigma\geq0$
denoting the noise level such that $\E\left[\exp\left(\lambda\left\Vert \xi^{t}\right\Vert _{*}^{2}\right)\mid\F^{t-1}\right]\leq\exp\left(\lambda\sigma^{2}\right),\forall\lambda\in\left[0,\sigma^{-2}\right]$.
\end{enumerate}
We briefly discuss the assumptions here. Assumptions \ref{enu:A1},
\ref{enu:A4}, and \ref{enu:A5A} are standard in the stochastic optimization
literature. Assumption \ref{enu:A2} is known as relative strong convexity
appeared in previous works \citep{JMLR:v15:hazan14a,doi:10.1137/16M1099546}.
We use it here since the last-iterate convergence rate will be derived
for the CSMD algorithm, which employs Bregman divergence to exploit
the non-Euclidean geometry. In particular, when $\left\Vert \cdot\right\Vert $
is the standard $\ell_{2}$ norm, we can take $\psi(x)=\frac{1}{2}\left\Vert x\right\Vert ^{2}$
to recover the common definition of strong convexity. Assumption \ref{enu:A3}
is borrowed from Section 4.2 in \citet{lan2020first}. Note that both
$L$-smooth functions (by taking $M=0$) and $G$-Lipschitz functions
(by taking $L=0$ and $M=2G$) are subclasses of Assumption \ref{enu:A3}.
Lastly, Assumption \ref{enu:A5B} is used for the high-probability
convergence bound.

Our proofs for the high-probability convergence rely on the following
simple fact for the centered sub-Gaussian random vector. Similar results
have been proved in prior works \citep{vershynin2018high,pmlr-v202-liu23aa}.
For completeness, we include the proof in Appendix \ref{sec:tech}.
\begin{lem}
\label{lem:gaussian}Given a sigma algebra $\F$ and a random vector
$Z\in\R^{d}$ that is $\F$-measurable, if $\xi\in\R^{d}$ is a random
vector satisfying $\E\left[\xi\mid\F\right]=0$ and $\E\left[\exp\left(\lambda\left\Vert \xi\right\Vert _{*}^{2}\right)\mid\F\right]\leq\exp\left(\lambda\sigma^{2}\right),\forall\lambda\in\left[0,\sigma^{-2}\right]$,
then
\[
\E\left[\exp\left(\left\langle \xi,Z\right\rangle \right)\mid\F\right]\leq\exp\left(\sigma^{2}\left\Vert Z\right\Vert ^{2}\right).
\]
\end{lem}

\subsection{Convergence Criterion\label{subsec:criterion}}

We always measure the convergence via the function value gap, i.e.,
$F(x)-F(x^{*})$. There are several reasons to stick to this criterion.
First, for the general convex case, the function value gap is the
standard metric. Next, for strongly convex functions, the function
value gap is always a stronger measurement than the squared distance
to the optimal solution since $\left\Vert x-x^{*}\right\Vert ^{2}=O\left(F(x)-F(x^{*})\right)$
holds by strong convexity. Even if $F(x)$ is additionally assumed
to be ($L,0$)-smooth (e.g., $f(x)$ is ($L,0$)-smooth and $h(x)=0$),
the bound on $\left\Vert x-x^{*}\right\Vert ^{2}$ cannot be converted
to the bound on $F(x)-F(x^{*})$ since $F(x)-F(x^{*})\leq\left\langle \na F(x^{*}),x-x^{*}\right\rangle +\frac{L}{2}\left\Vert x-x^{*}\right\Vert ^{2}=O(\left\Vert \na F(x^{*})\right\Vert _{*}\left\Vert x-x^{*}\right\Vert +\left\Vert x-x^{*}\right\Vert ^{2})$,
which is probably worse than $O(\left\Vert x-x^{*}\right\Vert ^{2})$
as $x^{*}$ is only a local minimizer meaning $\left\Vert \na F(x^{*})\right\Vert _{*}$
possibly to be non-zero. Moreover, the function value gap is important
in both the theoretical and practical sides of modern machine learning
(e.g., the generalization error).

\section{Last-Iterate Convergence of Stochastic Gradient Methods\label{sec:main-algo}}

\begin{algorithm}[h]
\caption{\label{alg:CSMD}Composite Stochastic Mirror Descent (CSMD)}

\textbf{Input:} $x^{1}\in\dom$, $\eta_{t\in\left[T\right]}>0$.

\textbf{for} $t=1$ \textbf{to} $T$ \textbf{do}

$\quad$$x^{t+1}=\argmin_{x\in\dom}h(x)+\left\langle \hg^{t},x-x^{t}\right\rangle +\frac{D_{\psi}(x,x^{t})}{\eta_{t}}$

\textbf{Return $x^{T+1}$}
\end{algorithm}

The algorithm, Composite Stochastic Mirror Descent, is presented in
Algorithm \ref{alg:CSMD}. When $h(x)=0$, Algorithm \ref{alg:CSMD}
degenerates to the standard Stochastic Mirror Descent algorithm. If
we further consider the case $\left\Vert \cdot\right\Vert =\left\Vert \cdot\right\Vert _{2}$,
Algorithm \ref{alg:CSMD} can recover the standard projected SGD by
taking $\psi(x)=\frac{1}{2}\left\Vert x\right\Vert _{2}^{2}$. We
assume $T\geq2$ throughout the following paper to avoid some algebraic
issues in the proof. The full version of every following theorem with
its proof is deferred to the appendix.

\subsection{General Convex Functions\label{subsec:main-cvx}}

In this section, we focus on the last-iterate convergence of Algorithm
\ref{alg:CSMD} for general convex functions (i.e., $\mu_{f}=\mu_{h}=0$).
First, the in-expectation convergence rates are shown in Theorem \ref{thm:main-cvx-exp}.
\begin{thm}
\label{thm:main-cvx-exp}Under Assumptions \ref{enu:A1}-\ref{enu:A4}
and \ref{enu:A5A} with $\mu_{f}=\mu_{h}=0$:

If $T$ is unknown, by taking $\eta_{t\in\left[T\right]}=\frac{1}{2L}\land\frac{\eta}{\sqrt{t}}$
with $\eta=\Theta\left(\sqrt{\frac{D_{\psi}(x^{*},x^{1})}{M^{2}+\sigma^{2}}}\right)$,
there is
\[
\E\left[F(x^{T+1})-F(x^{*})\right]\leq O\left(\frac{LD_{\psi}(x^{*},x^{1})}{T}+\frac{(M+\sigma)\sqrt{D_{\psi}(x^{*},x^{1})}\log T}{\sqrt{T}}\right).
\]

If $T$ is known, by taking $\eta_{t\in\left[T\right]}=\frac{1}{2L}\land\frac{\eta}{\sqrt{T}}$
with $\eta=\Theta\left(\sqrt{\frac{D_{\psi}(x^{*},x^{1})}{(M^{2}+\sigma^{2})\log T}}\right)$,
there is
\[
\E\left[F(x^{T+1})-F(x^{*})\right]\leq O\left(\frac{LD_{\psi}(x^{*},x^{1})}{T}+\frac{(M+\sigma)\sqrt{D_{\psi}(x^{*},x^{1})\log T}}{\sqrt{T}}\right).
\]
\end{thm}

Before moving on to the high-probability bounds, we would like to
talk more about these in-expectation convergence results. First, the
constant $\eta$ here is optimized to obtain the best dependence on
the parameters $M,\sigma$ and $D_{\psi}(x^{*},x^{1})$. Indeed, the
last iterate provably converges for arbitrary $\eta>0$, but with
a worse dependence on $M,\sigma$ and $D_{\psi}(x^{*},x^{1})$. We
refer the reader to Theorem \ref{thm:cvx-exp} in the appendix for
a full version of Theorem \ref{thm:main-cvx-exp} with any $\eta>0$. 

Next, by taking $L=0$, we immediately get the (nearly) optimal $\widetilde{O}(1/\sqrt{T})$
convergence rate of the last iterate for non-smooth functions. Note
that our bounds are better than \citet{pmlr-v28-shamir13} since it
only works for bounded domains and non-composite optimization. Besides,
when considering smooth problems (taking $M=0$), to our best knowledge,
our $\widetilde{O}(L/T+\sigma/\sqrt{T})$ bound is the first improvement
since the $O(1/\sqrt[3]{T})$ rate by \citet{NIPS2011_40008b9a}.
Moreover, compared to \citet{NIPS2011_40008b9a}, Theorem \ref{thm:main-cvx-exp}
does not rely on some restrictive assumptions like bounded stochastic
gradients or $x^{*}$ being a global optimal point but is able to
be used for the more general composite problems. Additionally, it
is worth remarking that the $\widetilde{O}(L/T+\sigma/\sqrt{T})$
rate matches the optimal $O(L/T+\sigma/\sqrt{T})$ rate for the averaged
output $x_{avg}^{T+1}=(\sum_{t=2}^{T+1}x^{t})/T$ \citep{lan2020first}
up to an extra logarithmic factor. Notably, our bounds are also adaptive
to the noise $\sigma$ in this case. In other words, we can recover
the well-known $O(L/T)$ rate for the last iterate of the GD algorithm
in the noiseless case. Last but most importantly, our proof is unified
and thus can be applied to different settings (e.g., general domains,
($L,M$)-smoothness, non-Euclidean norms, etc.) simultaneously.
\begin{rem}
\citet{orabona2020blog} exhibited a circuitous method based on comparing
the last iterate with the averaged output to show the in-expectation
last-iterate convergence for non-composite non-smooth convex optimization
in general domains. However, it did not explicitly generalize to the
broader problems considered in this paper. Moreover, our method is
done in a direct manner (see Section \ref{sec:analysis}).
\end{rem}

\begin{thm}
\label{thm:main-cvx-hp}Under Assumptions \ref{enu:A1}-\ref{enu:A4}
and \ref{enu:A5B} with $\mu_{f}=\mu_{h}=0$ and let $\delta\in(0,1)$:

If $T$ is unknown, by taking $\eta_{t\in\left[T\right]}=\frac{1}{2L}\land\frac{\eta}{\sqrt{t}}$
with $\eta=\Theta\left(\sqrt{\frac{D_{\psi}(x^{*},x^{1})}{M^{2}+\sigma^{2}\log\frac{1}{\delta}}}\right)$,
then with probability at least $1-\delta$, there is
\[
F(x^{T+1})-F(x^{*})\leq O\left(\frac{LD_{\psi}(x^{*},x^{1})}{T}+\frac{(M+\sigma\sqrt{\log\frac{1}{\delta}})\sqrt{D_{\psi}(x^{*},x^{1})}\log T}{\sqrt{T}}\right).
\]

If $T$ is known, by taking $\eta_{t\in\left[T\right]}=\frac{1}{2L}\land\frac{\eta}{\sqrt{T}}$
with $\eta=\Theta\left(\sqrt{\frac{D_{\psi}(x^{*},x^{1})}{(M^{2}+\sigma^{2}\log\frac{1}{\delta})\log T}}\right)$,
then with probability at least $1-\delta$, there is
\[
F(x^{T+1})-F(x^{*})\leq O\left(\frac{LD_{\psi}(x^{*},x^{1})}{T}+\frac{(M+\sigma\sqrt{\log\frac{1}{\delta}})\sqrt{D_{\psi}(x^{*},x^{1})\log T}}{\sqrt{T}}\right).
\]
\end{thm}

In Theorem \ref{thm:main-cvx-hp}, we present the high-probability
bounds for $(L,M)$-smooth functions. Again, the constant $\eta$
is picked to get the best dependence on the parameters $M,\sigma,D_{\psi}(x^{*},x^{1})$
and $\log(1/\delta)$. The full version of Theorem \ref{thm:main-cvx-hp}
with arbitrary $\eta$, Theorem \ref{thm:cvx-hp}, is deferred to
the appendix. Compared with Theorem \ref{thm:main-cvx-exp}, the high-probability
rates only incur an extra $O(\sqrt{\log(1/\delta)})$ factor (or $O(\log(1/\delta))$
for arbitrary $\eta$, which is known to be optimal for $L=0$ \citep{pmlr-v99-harvey19a}). 

In contrast to the previous bounds \citep{pmlr-v99-harvey19a,doi:10.1137/19M128908X}
that only work for Lipschitz functions in a compact domain, our results
are the first to describe the high-probability behavior of Algorithm
\ref{alg:CSMD} for the wider $(L,M)$-smooth function class in a
general domain even with sub-Gaussian noise, not to mention composite
objectives and non-Euclidean norms. Even in the special smooth case
(setting $M=0$), as far as we know, this is also the first last-iterate
high-probability bound being adaptive to the noise $\sigma$ at the
same time for plain stochastic gradient methods. Unlike the previous
proofs employing some new probability tools (e.g., the generalized
Freedman's inequality in \citet{pmlr-v99-harvey19a}), our high-probability
argument is simple and only based on the basic property of sub-Gaussian
random vectors (see Lemma \ref{lem:gaussian}). Therefore, we believe
our work can bring some new insights to researchers to gain a better
understanding of the last-iterate convergence of stochastic gradient
methods.

\subsubsection{Optimal Rates via the Linearly Decaying Step Size\label{subsec:main-cvx-optimal}}

In this section, we show that the $O(\sqrt{\log T})$ factor appearing
in Theorems \ref{thm:main-cvx-exp} and \ref{thm:main-cvx-hp} for
known $T$ can be completely removed. Even restricted to the special
case of Lipschitz objectives (i.e., $L=0$ in Assumption \ref{enu:A3}),
this is a strict improvement compared to \citet{doi:10.1137/19M128908X},
in which the $O(1/\sqrt{T})$ rate is obtained under the bounded noise
assumption on a compact domain $\dom$.
\begin{thm}
\label{thm:main-cvx-exp-optimal}Under Assumptions \ref{enu:A1}-\ref{enu:A4}
and \ref{enu:A5A} with $\mu_{f}=\mu_{h}=0$, if $T$ is known, by
taking $\eta_{t\in\left[T\right]}=\frac{T-t+1}{2LT}\land\frac{\eta(T-t+1)}{T^{\frac{3}{2}}}$
with $\eta=\Theta\left(\sqrt{\frac{D_{\psi}(x^{*},x^{1})}{M^{2}+\sigma^{2}}}\right)$,
there is
\[
\E\left[F(x^{T+1})-F(x^{*})\right]\leq O\left(\frac{LD_{\psi}(x^{*},x^{1})}{T}+\frac{(M+\sigma)\sqrt{D_{\psi}(x^{*},x^{1})}}{\sqrt{T}}\right).
\]
\end{thm}

\begin{thm}
\label{thm:main-cvx-hp-optimal}Under Assumptions \ref{enu:A1}-\ref{enu:A4}
and \ref{enu:A5B} with $\mu_{f}=\mu_{h}=0$ and let $\delta\in(0,1)$,
if $T$ is known, by taking $\eta_{t\in\left[T\right]}=\frac{T-t+1}{2LT}\land\frac{\eta(T-t+1)}{T^{\frac{3}{2}}}$
with $\eta=\Theta\left(\sqrt{\frac{D_{\psi}(x^{*},x^{1})}{M^{2}+\sigma^{2}\log\frac{1}{\delta}}}\right)$,
then with probability at least $1-\delta$, there is
\[
F(x^{T+1})-F(x^{*})\leq O\left(\frac{LD_{\psi}(x^{*},x^{1})}{T}+\frac{(M+\sigma\sqrt{\log\frac{1}{\delta}})\sqrt{D_{\psi}(x^{*},x^{1})}}{\sqrt{T}}\right).
\]
\end{thm}

The linearly decaying step size was introduced by \citet{doi:10.1137/24M1717762},
in which the authors proved that it ensures the last iterate of the
subgradient method converges at the rate $O(1/\sqrt{T})$ for non-composite
non-smooth convex objectives with the Euclidean norm. Here, we show
it can also guarantee the optimal $O(L/T+(M+\sigma)/\sqrt{T})$ rate
in our setting. Again, the constant $\eta$ can be set to any positive
number. The full statements of Theorems \ref{thm:main-cvx-exp-optimal}
and \ref{thm:main-cvx-hp-optimal} are provided in Appendix \ref{subsec:cvx-optimal}.

\subsection{Strongly Convex Functions\label{subsec:main-str}}

Now we turn our attention to strongly convex functions. Due to the
space limitation, we only provide the results for the case of $\mu_{f}>0$
and $\mu_{h}=0$. The other case, $\mu_{f}=0$ and $\mu_{h}>0$, will
be delivered in Appendix \ref{subsec:str-h}.
\begin{thm}
\label{thm:main-str-exp}Under Assumptions \ref{enu:A1}-\ref{enu:A4}
and \ref{enu:A5A} with $\mu_{f}>0$ and $\mu_{h}=0$, let $\kappa_{f}\coloneqq\frac{L}{\mu_{f}}\geq0$:

If $T$ is unknown, by taking either $\eta_{t\in\left[T\right]}=\frac{1}{\mu_{f}(t+2\kappa_{f})}$
or $\eta_{t\in\left[T\right]}=\frac{2}{\mu_{f}(t+1+4\kappa_{f})}$,
there is
\[
\E\left[F(x^{T+1})-F(x^{*})\right]\leq\begin{cases}
O\left(\frac{LD_{\psi}(x^{*},x^{1})}{T}+\frac{(M^{2}+\sigma^{2})\log T}{\mu_{f}(T+\kappa_{f})}\right) & \eta_{t\in\left[T\right]}=\frac{1}{\mu_{f}(t+2\kappa_{f})}\\
O\left(\frac{L(1+\kappa_{f})D_{\psi}(x^{*},x^{1})}{T(T+\kappa_{f})}+\frac{(M^{2}+\sigma^{2})\log T}{\mu_{f}(T+\kappa_{f})}\right) & \eta_{t\in\left[T\right]}=\frac{2}{\mu_{f}(t+1+4\kappa_{f})}
\end{cases}.
\]

If $T$ is known, by taking $\eta_{t\in\left[T\right]}=\begin{cases}
\frac{1}{\mu_{f}(1+2\kappa_{f})} & t\leq\tau\\
\frac{2}{\mu_{f}(t-\tau+2+4\kappa_{f})} & t\geq\tau+1
\end{cases}$ with $\tau\coloneqq\left\lceil \frac{T}{2}\right\rceil $, there
is
\[
\E\left[F(x^{T+1})-F(x^{*})\right]\leq O\left(\frac{LD_{\psi}(x^{*},x^{1})}{\exp\left(\frac{T}{2+4\kappa_{f}}\right)}+\frac{(M^{2}+\sigma^{2})\log T}{\mu_{f}(T+\kappa_{f})}\right).
\]
\end{thm}

The in-expectation rates are stated in Theorem \ref{thm:main-str-exp}.
We would like to remind the reader that $\kappa_{f}\geq1$ is not
necessary, as we are considering the general $(L,M)$-smooth functions.
Hence, it can be zero.

As before, we first take $L=0$ to consider the special Lipschitz
case. Due to $\kappa_{f}=0$ now, all bounds will degenerate to $O(\log T/T)$,
which is known to be optimal for the step size $\nicefrac{1}{\mu_{f}t}$
\citep{pmlr-v99-harvey19a} and only incurs an extra $O(\log T)$
factor compared with the best $O(1/T)$ bound when $T$ is known \citep{doi:10.1137/19M128908X}.
We would also like to mention that Theorem \ref{thm:main-str-exp}
is the first to give the in-expectation last-iterate bound for the
step size $\nicefrac{2}{\mu_{f}(t+1)}$. Interestingly, the extra
$O(\log T)$ factor appears again compared to the known $O(1/T)$
bound on the function value gap for the non-uniform averaging strategy
under this step size \citep{lacoste2012simpler}. Besides, \citet{lacoste2012simpler}
also shows $\E\left[\left\Vert x^{T+1}-x^{*}\right\Vert _{2}^{2}\right]=O(1/T)$.
Whereas it is currently unknown whether our $\E\left[F(x^{T+1})-F(x^{*})\right]=O(\log T/T)$
bound can be improved to match the $O(1/T)$ rate under the step size
$\nicefrac{2}{\mu_{f}(t+1)}$.

For the general $(L,M)$-smooth case (even for $(L,0)$-smoothness),
our bounds are the first convergence results for the last iterate
of stochastic gradient methods with respect to the function value
gap\footnote{Note that the rates under the PL-condition (e.g., \citet{pmlr-v130-gower21a,khaled2022better})
are incompatible with our settings since they can be only applied
to non-constrained, non-composite and $(L,0)$-smooth optimization
problems with the Euclidean norm.}. Remarkably, all of these rates do not require prior knowledge of
$M$ or $\sigma$ to set the step size. In particular, the bound for
known $T$ is adaptive to $\sigma$ when $M=0$, i.e., it can recover
the well-known linear convergence rate $O(\exp(-T/\kappa_{f}))$ when
$\sigma=0$.
\begin{thm}
\label{thm:main-str-hp}Under Assumptions \ref{enu:A1}-\ref{enu:A4}
and \ref{enu:A5B} with $\mu_{f}>0$ and $\mu_{h}=0$, let $\kappa_{f}\coloneqq\frac{L}{\mu_{f}}\geq0$
and $\delta\in(0,1)$:

If $T$ is unknown, by taking either $\eta_{t\in\left[T\right]}=\frac{1}{\mu_{f}(t+2\kappa_{f})}$
or $\eta_{t\in\left[T\right]}=\frac{2}{\mu_{f}(t+1+4\kappa_{f})}$,
then with probability at least $1-\delta$, there is
\[
F(x^{T+1})-F(x^{*})\leq\begin{cases}
O\left(\frac{\mu_{f}(1+\kappa_{f})D_{\psi}(x^{*},x^{1})}{T}+\frac{(M^{2}+\sigma^{2}\log\frac{1}{\delta})\log T}{\mu_{f}(T+\kappa_{f})}\right) & \eta_{t\in\left[T\right]}=\frac{1}{\mu_{f}(t+2\kappa_{f})}\\
O\left(\frac{\mu_{f}(1+\kappa_{f})^{2}D_{\psi}(x^{*},x^{1})}{T(T+\kappa_{f})}+\frac{(M^{2}+\sigma^{2}\log\frac{1}{\delta})\log T}{\mu_{f}(T+\kappa_{f})}\right) & \eta_{t\in\left[T\right]}=\frac{2}{\mu_{f}(t+1+4\kappa_{f})}
\end{cases}.
\]

If $T$ is known, by taking $\eta_{t\in\left[T\right]}=\begin{cases}
\frac{1}{\mu_{f}(1+2\kappa_{f})} & t=1\\
\frac{1}{\mu_{f}(\eta+2\kappa_{f})} & 2\leq t\leq\tau\\
\frac{2}{\mu_{f}(t-\tau+2+4\kappa_{f})} & t\geq\tau+1
\end{cases}$ with $\eta\coloneqq1.5$ and $\tau\coloneqq\left\lceil \frac{T}{2}\right\rceil $,
then with probability at least $1-\delta$, there is
\[
F(x^{T+1})-F(x^{*})\leq O\left(\frac{\mu_{f}(1+\kappa_{f})D_{\psi}(x^{*},x^{1})}{\exp\left(\frac{T}{3+4\kappa_{f}}\right)}+\frac{(M^{2}+\sigma^{2}\log\frac{1}{\delta})\log T}{\mu_{f}(T+\kappa_{f})}\right).
\]
\end{thm}

Now, we provide the high-probability convergence results in Theorem
\ref{thm:main-str-hp}. The constant $\eta=1.5$ is set without any
particular reason. The full statement with general $\eta$, Theorem
\ref{thm:str-f-hp}, can be found in the appendix. Besides, $\kappa_{f}$
is possible to be zero as mentioned above. Compared with Theorem \ref{thm:main-str-exp},
only an additional $O(\log(1/\delta))$ factor appears. Such an extra
loss is known to be inevitable for $L=0$ due to \citet{pmlr-v99-harvey19a}.

For the Lipschitz case (i.e., $L=\kappa_{f}=0$), by noticing $D_{\psi}(x^{*},x^{1})=O(M^{2}/\mu_{f}^{2})$\footnote{This holds now due to $\mu_{f}\left\Vert x^{*}-x^{1}\right\Vert ^{2}/2\leq\mu_{f}D_{\psi}(x^{*},x^{1})\leq M\left\Vert x^{*}-x^{1}\right\Vert $.},
all of these bounds will degenerate to $O(\log(1/\delta)\log T/T)$
matching the best-known last-iterate bound proved by \citet{pmlr-v99-harvey19a}
for the step size $\nicefrac{1}{\mu_{f}t}$. For the step size $\nicefrac{2}{\mu_{f}(t+1)}$,
\citet{harvey2019simple} has proved the high-probability bound $O(\log(1/\delta)/T)$
for the non-uniform averaging output instead of the last iterate.
Hence, as far as we know, our high-probability rate for the step size
$\nicefrac{2}{\mu_{f}(t+1)}$ is new.

Finally, let us go back to the general $(L,M)$-smooth case. To our
best knowledge, our results are the first to prove that the last iterate
of plain stochastic gradient methods enjoys the provable high-probability
convergence (even in the smooth case, i.e., $M=0$). Hence, we believe
our work closes the gap between the lack of theoretical understanding
and good performance of the last iterate of SGD for smooth and strongly
convex functions. Lastly, the same as the in-expectation bound for
known $T$ in Theorem \ref{thm:main-str-exp}, our high-probability
bound is also adaptive to $\sigma$ when $M=0$.

\subsubsection{Optimal Rates via a New Step Size\label{subsec:main-str-optimal}}

In this section, we show that the $O(\log T)$ factor appearing in
Theorems \ref{thm:main-str-exp} and \ref{thm:main-str-hp} for known
$T$ can be completely removed. Even restricted to the special case
of Lipschitz objectives (i.e., $L=0$ in Assumption \ref{enu:A3}),
this is a strict improvement compared to \citet{doi:10.1137/19M128908X},
in which the $O(1/T)$ rate is obtained under the bounded noise assumption
on a compact domain $\dom$ (though $\dom$ has to be bounded now,
our proof does not rely on this fact). Moreover, to avoid some algebraic
issues in the proof, we assume $T\geq4$ in this part.
\begin{thm}
\label{thm:main-str-exp-optimal}Under Assumptions \ref{enu:A1}-\ref{enu:A4}
and \ref{enu:A5A} with $\mu_{f}>0$ and $\mu_{h}=0$, let $\kappa_{f}\coloneqq\frac{L}{\mu_{f}}\geq0$,
if $T$ is known, by taking $\eta_{t\in\left[T\right]}=\begin{cases}
\frac{1}{\mu_{f}(1+2\kappa_{f})} & t\leq\tau_{1}\\
\frac{2}{\mu_{f}(t-\tau_{1}+2+4\kappa_{f})} & \tau_{1}+1\leq t\leq\tau_{2}\\
\frac{T-t+1}{2L(T-\tau_{2})}\land\frac{T-t+1}{\mu_{f}(T-\tau_{2})(T+\kappa_{f})} & t\geq\tau_{2}+1
\end{cases}$ with $\tau_{1}\coloneqq\left\lceil \frac{T}{4}\right\rceil $ and
$\tau_{2}\coloneqq\left\lceil \frac{T}{2}\right\rceil $, there is
\[
\E\left[F(x^{T+1})-F(x^{*})\right]\leq O\left(\frac{LD_{\psi}(x^{*},x^{1})}{\exp\left(\frac{T}{4+8\kappa_{f}}\right)}+\frac{M^{2}+\sigma^{2}}{\mu_{f}(T+\kappa_{f})}\right).
\]
\end{thm}

\begin{thm}
\label{thm:main-str-hp-optimal}Under Assumptions \ref{enu:A1}-\ref{enu:A4}
and \ref{enu:A5B} with $\mu_{f}>0$ and $\mu_{h}=0$, let $\kappa_{f}\coloneqq\frac{L}{\mu_{f}}\geq0$
and $\delta\in(0,1)$, if $T$ is known, by taking $\eta_{t\in\left[T\right]}=\begin{cases}
\frac{1}{\mu_{f}(\eta+2\kappa_{f})} & t\leq\tau_{1}\\
\frac{2}{\mu_{f}(t-\tau_{1}+2+4\kappa_{f})} & \tau_{1}+1\leq t\leq\tau_{2}\\
\frac{T-t+1}{2L(T-\tau_{2})}\land\frac{T-t+1}{\mu_{f}(T-\tau_{2})(T+2\kappa_{f})} & t\geq\tau_{2}+1
\end{cases}$ with $\eta\coloneqq1.5$, $\tau_{1}\coloneqq\left\lceil \frac{T}{4}\right\rceil $
and $\tau_{2}\coloneqq\left\lceil \frac{T}{2}\right\rceil $, then
with probability at least $1-\delta$, there is
\[
F(x^{T+1})-F(x^{*})\leq O\left(\frac{\mu_{f}(1+\kappa_{f})D_{\psi}(x^{*},x^{1})}{\exp\left(\frac{T}{6+8\kappa_{f}}\right)}+\frac{M^{2}+\sigma^{2}\log\frac{1}{\delta}}{\mu_{f}(T+\kappa_{f})}\right).
\]
\end{thm}

As far as we know, Theorems \ref{thm:main-str-exp-optimal} and \ref{thm:main-str-hp-optimal}
are the first results to obtain the optimal $O(1/T)$ rate for the
last iterate without any restrictive assumptions. We highlight that
the step size developed here differs significantly from the schedule
introduced by \citet{doi:10.1137/19M128908X}. Concretely, our new
step size is a combination of the classical one (as used in Theorems
\ref{thm:main-str-exp} and \ref{thm:main-str-hp}) and the linearly
decaying step size proposed in \citet{doi:10.1137/24M1717762}. Moreover,
we note that the constant $\eta=1.5$ is picked without any particular
reason. The full statements of Theorems \ref{thm:main-str-exp-optimal}
and \ref{thm:main-str-hp-optimal} are provided in Appendix \ref{subsec:str-f-optimal}.

\section{Unified Theoretical Analysis\label{sec:analysis}}

In this section, we introduce the ideas in our analysis and present
three important lemmas, all the missing proofs of which are deferred
to Appendix \ref{sec:proofs}.

The key insight in our proofs is to utilize the convexity of $F(x)$,
which is highly inspired by the recent work of \citet{doi:10.1137/24M1717762}.
To be more precise, using the classic convergence analysis for non-composite
Lipschitz convex problems as an example, people always consider to
upper bound the function value gap $f(x^{t})-f(x^{*})$ (probably
with some weight before it) then sum them over time to obtain the
ergodic rate. Whereas, in such an argument, convexity is not necessary
in fact (except if one wants to bound the average iterate in the last
step). Hence, if the convexity of $f$ can be utilized somewhere,
it is reasonable to expect a last-iterate convergence guarantee. Actually,
this thought is possible as shown by \citet{doi:10.1137/24M1717762},
in which the authors upper bound the quantity $f(x^{t})-f(z^{t})$
where $z^{t}$ is a carefully chosen convex combination of other points
and finally obtain the last-iterate rate by lower bounding $-f(z^{t})$
via convexity. 

Though \citet{doi:10.1137/24M1717762} only shows how to prove the
last-iterate convergence for deterministic non-composite Lipschitz
convex optimization under the Euclidean norm, we can catch the most
important message conveyed by their paper and apply it to our settings.
Formally speaking, we will upper bound the term $F(x^{t+1})-F(z^{t})$
for a well-designed $z^{t}$ rather than directly bound the function
value gap $F(x^{t+1})-F(x^{*})$. This idea can finally help us construct
a unified proof and obtain several novel results without prior restrictive
assumptions. Through careful calculations, the new analysis leads
us to the following most important and unified result, Lemma \ref{lem:core-general}.
\begin{lem}
\label{lem:core-general}Under Assumptions \ref{enu:A2} and \ref{enu:A3},
suppose $\eta_{t\in\left[T\right]}\leq\frac{1}{2L\lor\mu_{f}}$ and
let $\gamma_{t\in\left[T\right]}\coloneqq\eta_{t}\prod_{s=2}^{t}\frac{1+\mu_{h}\eta_{s-1}}{1-\mu_{f}\eta_{s}}$,
if $w_{t\in\left[T\right]}\geq0$ is a non-increasing sequence and
$v_{t\in\left\{ 0\right\} \cup\left[T\right]}>0$ is defined as $v_{t}\coloneqq\frac{w_{T}\gamma_{T}}{\sum_{s=t\lor1}^{T}w_{s}\gamma_{s}}$,
then for any $x\in\dom$, we have
\begin{align*}
 & w_{T}\gamma_{T}v_{T}\left(F(x^{T+1})-F(x)\right)\\
\leq & w_{1}(1-\mu_{f}\eta_{1})v_{0}D_{\psi}(x,x^{1})+\sum_{t=1}^{T}2w_{t}\gamma_{t}\eta_{t}v_{t}(M^{2}+\left\Vert \xi^{t}\right\Vert _{*}^{2})\\
 & +\sum_{t=1}^{T}w_{t}\gamma_{t}v_{t-1}\left\langle \xi^{t},z^{t-1}-x^{t}\right\rangle +\sum_{t=2}^{T}(w_{t}-w_{t-1})\gamma_{t}(\eta_{t}^{-1}-\mu_{f})v_{t-1}D_{\psi}(z^{t-1},x^{t}),
\end{align*}
where $\xi^{t}\coloneqq\hg^{t}-\E\left[\hg^{t}\mid\F^{t-1}\right],\forall t\in\left[T\right]$
and $z^{t}\coloneqq\frac{v_{0}}{v_{t}}x+\sum_{s=1}^{t}\frac{v_{s}-v_{s-1}}{v_{t}}x^{s},\forall t\in\left\{ 0\right\} \cup\left[T\right]$.
\end{lem}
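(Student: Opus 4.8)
The plan is to adapt the approach of \citet{zamani2023exact} to the stochastic composite mirror-descent setting: rather than bounding the gap $F(x^{t+1})-F(x)$ directly at each step, I will bound $F(x^{t+1})-F(z^t)$ against the auxiliary points $z^t$ and then recover a last-iterate bound by pushing the convexity of $F$ through a telescoping sum. Two facts will be used repeatedly. First, $\eta_t\le\frac{1}{2L\lor\mu_f}$ gives both $\eta_t\le\frac{1}{2L}$ (so $\frac{L}{2}\le\frac{1}{4\eta_t}$) and $\eta_t^{-1}-\mu_f\ge 0$. Second, $\gamma_t$ is built so that $\gamma_t(\eta_t^{-1}+\mu_h)=\gamma_{t+1}(\eta_{t+1}^{-1}-\mu_f)$ for every $t$; the $z^t$ obey $z^t=\frac{v_{t-1}}{v_t}z^{t-1}+\frac{v_t-v_{t-1}}{v_t}x^t$ (hence $z^t\in\dom$ and $z^0=z^1=x$), which yields $z^t-x^t=\frac{v_{t-1}}{v_t}(z^{t-1}-x^t)$ and, by convexity of $F$, $v_tF(z^t)\le v_{t-1}F(z^{t-1})+(v_t-v_{t-1})F(x^t)$; and $v_t-v_{t-1}=\frac{v_tv_{t-1}w_{t-1}\gamma_{t-1}}{w_T\gamma_T}$, which follows from $v_t^{-1}=(\sum_{s\ge t}w_s\gamma_s)/(w_T\gamma_T)$.

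\emph{Step 1 (a per-iterate inequality).} Fix $t$ and any $y\in\dom$. From the first-order optimality condition of the CSMD update (there is $s\in\pa h(x^{t+1})$ with $\langle\hg^t+s+\eta_t^{-1}(\na\psi(x^{t+1})-\na\psi(x^t)),y-x^{t+1}\rangle\ge 0$), the three-point identity for $D_\psi$, the $\mu_h$-strong convexity of $h$ for the $s$-term, the splitting $\hg^t=\E[\hg^t\mid\F^{t-1}]+\xi^t$ with $\E[\hg^t\mid\F^{t-1}]\in\pa f(x^t)$, the $(L,M)$-smoothness to lower bound $\langle\E[\hg^t\mid\F^{t-1}],x^{t+1}-x^t\rangle$, and the $\mu_f$-strong convexity of $f$ to lower bound $\langle\E[\hg^t\mid\F^{t-1}],x^t-y\rangle$, one gets (combining $f,h$ into $F$)
\[
\begin{aligned}
F(x^{t+1})-F(y)\le{}&(\eta_t^{-1}-\mu_f)D_\psi(y,x^t)-(\eta_t^{-1}+\mu_h)D_\psi(y,x^{t+1})-\eta_t^{-1}D_\psi(x^{t+1},x^t)\\
&{}+\tfrac{L}{2}\|x^{t+1}-x^t\|^2+(M+\|\xi^t\|_*)\|x^{t+1}-x^t\|+\langle\xi^t,y-x^t\rangle .
\end{aligned}
\]
Since $\psi$ is $1$-strongly convex, $D_\psi(x^{t+1},x^t)\ge\tfrac{1}{2}\|x^{t+1}-x^t\|^2$, and with $\tfrac{L}{2}\le\tfrac{1}{4\eta_t}$ the three terms containing $x^{t+1}-x^t$ are at most $-\tfrac{1}{4\eta_t}\|x^{t+1}-x^t\|^2+(M+\|\xi^t\|_*)\|x^{t+1}-x^t\|\le\eta_t(M+\|\xi^t\|_*)^2\le 2\eta_t(M^2+\|\xi^t\|_*^2)$. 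This yields the per-iterate bound
\[
\begin{aligned}
F(x^{t+1})-F(y)\le{}&(\eta_t^{-1}-\mu_f)D_\psi(y,x^t)-(\eta_t^{-1}+\mu_h)D_\psi(y,x^{t+1})\\
&{}+2\eta_t(M^2+\|\xi^t\|_*^2)+\langle\xi^t,y-x^t\rangle ,\qquad\forall y\in\dom .
\end{aligned}
\]

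\emph{Step 2 (weighting and telescoping).} I would apply the per-iterate bound with $y=z^t$, multiply it by $W_t:=w_t\gamma_tv_t\ge 0$, multiply the convexity inequality $v_tF(z^t)\le v_{t-1}F(z^{t-1})+(v_t-v_{t-1})F(x^t)$ by $\Lambda_t:=\frac{w_T\gamma_T}{v_t}\ge 0$, sum both over $t\in[T]$, and add. Using the two identities above, a bookkeeping check shows every coefficient of $F(x^t)$ with $t\le T$ and of $F(z^t)$ with $t\ge1$ cancels, leaving $W_TF(x^{T+1})-W_TF(x)$ on the left, with $W_T=w_T\gamma_Tv_T$ (as $v_T=1$). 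On the right the $(M^2+\|\xi^t\|_*^2)$-terms sum to $\sum_t 2W_t\eta_t(M^2+\|\xi^t\|_*^2)=\sum_t 2w_t\gamma_t\eta_tv_t(M^2+\|\xi^t\|_*^2)$, and since $z^t-x^t=\frac{v_{t-1}}{v_t}(z^{t-1}-x^t)$ the inner products sum to $\sum_t W_t\langle\xi^t,z^t-x^t\rangle=\sum_t w_t\gamma_tv_{t-1}\langle\xi^t,z^{t-1}-x^t\rangle$, matching the claimed terms.

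\emph{Step 3 (residual Bregman terms) and the obstacle.} The remaining Bregman terms on the right are $\sum_{t=1}^T[W_t(\eta_t^{-1}-\mu_f)D_\psi(z^t,x^t)-W_t(\eta_t^{-1}+\mu_h)D_\psi(z^t,x^{t+1})]$; reindexing the second sum isolates $W_1(\eta_1^{-1}-\mu_f)D_\psi(z^1,x^1)=w_1(1-\mu_f\eta_1)v_0D_\psi(x,x^1)$, the nonpositive tail $-W_T(\eta_T^{-1}+\mu_h)D_\psi(z^T,x^{T+1})$ (which I drop), and the pairs $W_t(\eta_t^{-1}-\mu_f)D_\psi(z^t,x^t)-W_{t-1}(\eta_{t-1}^{-1}+\mu_h)D_\psi(z^{t-1},x^t)$ for $t=2,\dots,T$. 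Because $D_\psi(\cdot,x^t)$ is convex and $D_\psi(x^t,x^t)=0$, the recursion gives $v_tD_\psi(z^t,x^t)\le v_{t-1}D_\psi(z^{t-1},x^t)$; substituting this and using $\gamma_{t-1}(\eta_{t-1}^{-1}+\mu_h)=\gamma_t(\eta_t^{-1}-\mu_f)$ bounds each pair by $(w_t-w_{t-1})\gamma_t(\eta_t^{-1}-\mu_f)v_{t-1}D_\psi(z^{t-1},x^t)\le 0$, exactly the last sum in the statement; collecting the three steps gives the lemma. The hard part will be Step 2: the telescoping closes only because $z^t$ (not $x^*$) is the comparison point, so convexity of $F$ can be invoked; because the multiplicative structure of $\gamma_t$ makes the $D_\psi(\cdot,x^{t+1})$ term from iteration $t$ line up with the one from iteration $t+1$; and because convexity of $D_\psi(\cdot,x^t)$ is what licenses trading $z^{t-1}$ for $z^t$ inside those terms. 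Verifying that the weights $W_t$ and $\Lambda_t$ annihilate all intermediate function values is routine but must be carried out carefully.
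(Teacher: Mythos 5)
Your proposal is correct and follows essentially the same route as the paper's proof: the same auxiliary sequence $z^t$, the same per-iterate inequality against $z^t$ obtained from the optimality condition, $(L,M)$-smoothness, strong convexity and the $1$-strong convexity of $\psi$ (your one-shot quadratic bound gives exactly the paper's $2\eta_t(M^2+\|\xi^t\|_*^2)$), the same weighting by $w_t\gamma_tv_t$, and the same pairing of Bregman terms via $\gamma_{t-1}(\eta_{t-1}^{-1}+\mu_h)=\gamma_t(\eta_t^{-1}-\mu_f)$. The only difference is cosmetic bookkeeping in the convexity step -- you telescope the recursion $v_tF(z^t)\le v_{t-1}F(z^{t-1})+(v_t-v_{t-1})F(x^t)$ with weights $w_T\gamma_T/v_t$ (and indeed $\Lambda_t(v_t-v_{t-1})=W_{t-1}$, so all intermediate coefficients vanish), whereas the paper expands $F(z^t)$ fully and checks the coefficients of $F(x^s)-F(x)$ are zero; both computations are equivalent.
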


Let us discuss Lemma \ref{lem:core-general} more here. The requirement
of the step size $\eta_{t}$ having an upper bound $\nicefrac{1}{2L\lor\mu_{f}}$
is common in the optimization literature. $\gamma_{t}$ is used to
ensure we can telescope sum some terms. For the special case $\mu_{f}=\mu_{h}=0$,
it degenerates to $\eta_{t}$. $\xi^{t}$ naturally shows up as we
are considering stochastic optimization. The most important sequences
are $w_{t},v_{t}$ and $z^{t}$. As mentioned above, the appearance
of $z^{t}$ is to make sure to get the last-iterate convergence. For
how to find such a sequence, we refer the reader to our proofs in
Appendix \ref{sec:proofs} for details.

We would like to say more about the sequence $w_{t}$ before moving
on. Suppose we are in the deterministic case temporarily, i.e., $\xi^{t}=0$,
then a natural choice is to set $w_{t}=1,\forall t\in\left[T\right]$
to remove the last residual summation. It turns out this is the correct
choice even for the following in-expectation bound in Lemma \ref{lem:core-exp}.
So why do we still need this redundant $w_{t}$? The reason is that
setting $w_{t}$ to be one is not enough for the high-probability
analysis. More precisely, if we still choose $w_{t}=1,\forall t\in\left[T\right]$,
then there will be some extra positive terms after the concentration
argument in the R.H.S. of the inequality in Lemma \ref{lem:core-general}.
To deal with this issue, we borrow the idea recently developed by
\citet{pmlr-v202-liu23aa}, in which the authors employ an extra sequence
$w_{t}$ to give a clear proof for the high-probability bound for
stochastic gradient methods. We refer the reader to \citet{pmlr-v202-liu23aa}
for a detailed explanation of this technique.
\begin{lem}
\label{lem:core-exp}Under Assumptions \ref{enu:A2}-\ref{enu:A4}
and \ref{enu:A5A}, suppose $\eta_{t\in\left[T\right]}\leq\frac{1}{2L\lor\mu_{f}}$
and let $\gamma_{t\in\left[T\right]}\coloneqq\eta_{t}\prod_{s=2}^{t}\frac{1+\mu_{h}\eta_{s-1}}{1-\mu_{f}\eta_{s}}$,
then for any $x\in\dom$, we have
\[
\E\left[F(x^{T+1})-F(x)\right]\leq\frac{(1-\mu_{f}\eta_{1})D_{\psi}(x,x^{1})}{\sum_{t=1}^{T}\gamma_{t}}+2(M^{2}+\sigma^{2})\sum_{t=1}^{T}\frac{\gamma_{t}\eta_{t}}{\sum_{s=t}^{T}\gamma_{s}}.
\]
\end{lem}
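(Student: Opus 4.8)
The plan is to derive Lemma \ref{lem:core-exp} as a direct corollary of Lemma \ref{lem:core-general}, by making the simplest admissible choice of the free weight sequence and then taking total expectation. Concretely, I would invoke Lemma \ref{lem:core-general} (which is available since Assumptions 2 and 3 hold and $\eta_{t\in\left[T\right]}\leq\frac{1}{2L\lor\mu_{f}}$ is assumed) with $w_{t}=1$ for every $t\in\left[T\right]$. This is a valid non-negative, non-increasing choice, and it has two convenient consequences: the last double sum in Lemma \ref{lem:core-general} vanishes because $w_{t}-w_{t-1}=0$ for all $t\geq2$, and the auxiliary sequence collapses to $v_{t}=\gamma_{T}/\sum_{s=t\lor1}^{T}\gamma_{s}$; in particular $v_{T}=1$ and $v_{0}=\gamma_{T}/\sum_{s=1}^{T}\gamma_{s}$. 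After these substitutions the inequality of Lemma \ref{lem:core-general} reads
\[
\gamma_{T}\left(F(x^{T+1})-F(x)\right)\leq(1-\mu_{f}\eta_{1})v_{0}D_{\psi}(x,x^{1})+2\sum_{t=1}^{T}\gamma_{t}\eta_{t}v_{t}(M^{2}+\|\xi^{t}\|_{*}^{2})+\sum_{t=1}^{T}\gamma_{t}v_{t-1}\langle\xi^{t},z^{t-1}-x^{t}\rangle.
\]

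Next I would take the total expectation of both sides. The only stochastic quantities on the right are $\|\xi^{t}\|_{*}^{2}$ and $\langle\xi^{t},z^{t-1}-x^{t}\rangle$. For the latter, I would note that $z^{t-1}$ and $x^{t}$ are $\F^{t-1}$-measurable: the coefficients $v_{s}$ are deterministic, $x^{1}$ is fixed, and each $x^{s}$ with $s\leq t$ is $\F^{t-1}$-measurable by the definition of Algorithm \ref{alg:algo} and of $\F^{t}$; hence $z^{t-1}=\frac{v_{0}}{v_{t-1}}x+\sum_{s=1}^{t-1}\frac{v_{s}-v_{s-1}}{v_{t-1}}x^{s}$ is $\F^{t-1}$-measurable as well. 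Then by the tower rule together with Assumption 4, $\E\left[\langle\xi^{t},z^{t-1}-x^{t}\rangle\right]=\E\left[\langle\E\left[\xi^{t}\mid\F^{t-1}\right],z^{t-1}-x^{t}\rangle\right]=0$. For the former, Assumption 5A and the tower rule give $\E\left[\|\xi^{t}\|_{*}^{2}\right]=\E\left[\E\left[\|\xi^{t}\|_{*}^{2}\mid\F^{t-1}\right]\right]\leq\sigma^{2}$. Plugging these into the expectation of the displayed inequality yields
\[
\gamma_{T}\,\E\left[F(x^{T+1})-F(x)\right]\leq(1-\mu_{f}\eta_{1})v_{0}D_{\psi}(x,x^{1})+2(M^{2}+\sigma^{2})\sum_{t=1}^{T}\gamma_{t}\eta_{t}v_{t}.
\]

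Finally I would divide through by $\gamma_{T}$ and substitute $v_{0}/\gamma_{T}=1/\sum_{s=1}^{T}\gamma_{s}$ and $v_{t}/\gamma_{T}=1/\sum_{s=t}^{T}\gamma_{s}$ for $t\geq1$, which reproduces exactly the claimed bound. There is no serious obstacle here: the statement is essentially the ``$w_{t}\equiv1$'' specialization of Lemma \ref{lem:core-general}, and all the genuinely hard work (the construction of $v_{t}$ and $z^{t}$, and the pathwise estimate) is already contained in that lemma. The only points requiring care are bookkeeping ones — verifying that $v_{T}=1$ and that the ratios $v_{0}/\gamma_{T}$ and $v_{t}/\gamma_{T}$ simplify as stated, and confirming the $\F^{t-1}$-measurability of $z^{t-1}$ so that the inner-product terms vanish in expectation.
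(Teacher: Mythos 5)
Your proposal is correct and follows essentially the same route as the paper: specialize Lemma \ref{lem:core-general} to $w_{t}\equiv1$ (so the residual sum vanishes and $v_{t}=\gamma_{T}/\sum_{s=t\lor1}^{T}\gamma_{s}$), take total expectations using Assumption 4 for the inner-product terms and Assumption 5A for $\E[\|\xi^{t}\|_{*}^{2}]\leq\sigma^{2}$, and divide by $\gamma_{T}v_{T}$. The bookkeeping details you flag (measurability of $z^{t-1}-x^{t}$ with respect to $\F^{t-1}$, and the simplification $v_{T}=1$, $v_{t}/\gamma_{T}=1/\sum_{s=t}^{T}\gamma_{s}$) all check out, so there is no gap.
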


Suppose Lemma \ref{lem:core-general} holds, Lemma \ref{lem:core-exp}
is immediately obtained by setting $w_{t}=1,\forall t\in\left[T\right]$
and using Assumptions \ref{enu:A4} and \ref{enu:A5A}. This unified
result for the in-expectation last-iterate convergence can be applied
to many different settings, such as composite optimization and non-Euclidean
norms, without any restrictive assumptions.
\begin{lem}
\label{lem:core-hp}Under Assumptions \ref{enu:A2}-\ref{enu:A4}
and \ref{enu:A5B}, suppose $\eta_{t\in\left[T\right]}\leq\frac{1}{2L\lor\mu_{f}}$
and let $\gamma_{t\in\left[T\right]}\coloneqq\eta_{t}\prod_{s=2}^{t}\frac{1+\mu_{h}\eta_{s-1}}{1-\mu_{f}\eta_{s}}$,
then for any $x\in\dom$ and $\delta\in(0,1)$, with probability at
least $1-\delta$, we have
\[
F(x^{T+1})-F(x)\leq2\left(1+\max_{2\leq t\leq T}\frac{1}{1-\mu_{f}\eta_{t}}\right)\left[\frac{D_{\psi}(x,x^{1})}{\sum_{t=1}^{T}\gamma_{t}}+\left(M^{2}+\sigma^{2}\left(1+2\log\frac{2}{\delta}\right)\right)\sum_{t=1}^{T}\frac{\gamma_{t}\eta_{t}}{\sum_{s=t}^{T}\gamma_{s}}\right].
\]
\end{lem}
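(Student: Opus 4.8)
The plan is to obtain Lemma \ref{lem:core-hp} directly from the master inequality in Lemma \ref{lem:core-general} by (a) committing to a carefully decaying choice of the auxiliary weights $w_{t}$ and (b) converting the three stochastic terms on its right-hand side into a single high-probability statement through an exponential-moment (Chernoff) argument powered by Assumption 5B and Lemma \ref{lem:subgaussian}. I would first record the free facts from the definitions in Lemma \ref{lem:core-general}: $v_{T}=1$, so its left-hand side is just $w_{T}\gamma_{T}\bigl(F(x^{T+1})-F(x)\bigr)$; the sequence $v_{t}$ is non-decreasing; and $z^{0}=x$, so the $t=1$ copy of $D_{\psi}(z^{t-1},x^{t})$ equals $D_{\psi}(x,x^{1})$. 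Normalizing $w_{T}=1$ and using that $w_{t}$ is non-increasing forces $w_{t}\ge 1$, and the whole game is to keep $w_{1}=O(1)$.

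Following the device of \citet{liu2023high}, I would define $w_{t}$ by a backward recursion in which the increment $w_{t-1}-w_{t}$ is made just large enough that the non-positive residual term $\sum_{t=2}^{T}(w_{t}-w_{t-1})\gamma_{t}(\eta_{t}^{-1}-\mu_{f})v_{t-1}D_{\psi}(z^{t-1},x^{t})$ can absorb the quadratic overhead the concentration step will generate from the martingale term, and no larger, so that $\sum_{t=2}^{T}(w_{t-1}-w_{t})=w_{1}-1$ stays below $1$ (hence $w_{t}\le 2$). Because $(\eta_{t}^{-1}-\mu_{f})^{-1}=\eta_{t}(1-\mu_{f}\eta_{t})^{-1}$, this reweighting of the increments is exactly what injects the factor $\max_{2\le t\le T}(1-\mu_{f}\eta_{t})^{-1}$, and $\eta_{t}\le(2L\lor\mu_{f})^{-1}$ keeps $1-\mu_{f}\eta_{t}$ under control. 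The resulting $w_{t}$ solves a quadratic-type recursion whose solvability with $w_{1}\le 2$ reduces, after $v_{t-1}\le v_{t}$ and $v_{t}\le\gamma_{T}/\sum_{s\ge t}\gamma_{s}$, to an upper bound on $\sigma^{2}\sum_{t}\gamma_{t}\eta_{t}v_{t}$ by the Chernoff multiplier; this is the inequality that pins down how large that multiplier may be.

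For the high-probability step I would fix $\lambda>0$ and estimate $\E\bigl[\exp(\lambda\cdot(\text{stochastic part of the RHS}))\bigr]$ by conditioning successively on $\F^{t-1}$. Since $z^{t-1}-x^{t}$ is $\F^{t-1}$-measurable, Lemma \ref{lem:subgaussian} gives $\E[\exp(\langle\xi^{t},b_{t}\rangle)\mid\F^{t-1}]\le\exp(\sigma^{2}\|b_{t}\|^{2})$ with $b_{t}=\lambda w_{t}\gamma_{t}v_{t-1}(z^{t-1}-x^{t})$, and $\|b_{t}\|^{2}\le 2\lambda^{2}(w_{t}\gamma_{t}v_{t-1})^{2}D_{\psi}(z^{t-1},x^{t})$ by the $1$-strong convexity of $\psi$; this is precisely the overhead the residual term is built to cancel, with the $t=1$ piece folded into $D_{\psi}(x,x^{1})$. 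The quadratic-noise sum $\sum_{t}2w_{t}\gamma_{t}\eta_{t}v_{t}\|\xi^{t}\|_{*}^{2}$ is handled by Assumption 5B, which yields $\E[\exp(c\|\xi^{t}\|_{*}^{2})\mid\F^{t-1}]\le\exp(c\sigma^{2})$ for $c\le\sigma^{-2}$ (so $\lambda$ must be small enough that this holds at every $t$), while the $M^{2}$ sum is deterministic. Applying a Markov/Chernoff step to the martingale part and to the quadratic-noise part as two events, each of failure probability $\delta/2$, produces the $\log(2/\delta)$; then dividing through by $w_{T}\gamma_{T}v_{T}=\gamma_{T}$ and using $v_{0}\le\gamma_{T}/\sum_{s}\gamma_{s}$, $w_{1}=O(1)$, $v_{t-1}\le v_{t}$ and $\sum_{t}\gamma_{t}\eta_{t}v_{t}\le\gamma_{T}\sum_{t}\gamma_{t}\eta_{t}/\sum_{s\ge t}\gamma_{s}$ recovers the claimed shape, the surviving constants collecting into $\sigma^{2}(1+2\log(2/\delta))$ and the prefactor $2\bigl(1+\max_{2\le t\le T}(1-\mu_{f}\eta_{t})^{-1}\bigr)$.

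The crux, and the only genuinely delicate part, is the joint calibration in the middle two steps: the Chernoff multiplier $\lambda$, the Young/covariance parameter splitting the inner product, and the increments $w_{t-1}-w_{t}$ must be chosen simultaneously so that (i) the overhead $\propto\lambda^{2}(w_{t}\gamma_{t}v_{t-1})^{2}D_{\psi}(z^{t-1},x^{t})$ is dominated termwise by the residual $\propto(w_{t-1}-w_{t})\gamma_{t}(\eta_{t}^{-1}-\mu_{f})v_{t-1}D_{\psi}(z^{t-1},x^{t})$, (ii) the admissibility bound required to invoke Assumption 5B holds for all $t$, and (iii) both the residual leftover and the $\log(2/\delta)/\lambda$ term end up of order $D_{\psi}(x,x^{1})/\sum_{t}\gamma_{t}+\sigma^{2}\log(1/\delta)\sum_{t}\gamma_{t}\eta_{t}/\sum_{s\ge t}\gamma_{s}$, with no stray dependence on $T$ or $\sigma$. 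Once one feasible $w_{t}$ is produced, the remainder is bookkeeping on exactly the weighted sums already appearing in Lemma \ref{lem:core-exp}.
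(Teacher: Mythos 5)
Your proposal is correct and follows essentially the same route as the paper's proof: start from Lemma \ref{lem:core-general}, concentrate the martingale term via Lemma \ref{lem:subgaussian} and the $\|\xi^{t}\|_{*}^{2}$ sum via Assumption 5B (two exponential-moment/supermartingale events of failure probability $\delta/2$ each), and choose the non-increasing weights so that the quadratic overhead $\propto(w_{t}\gamma_{t}v_{t-1})^{2}\sigma^{2}D_{\psi}(z^{t-1},x^{t})$ is absorbed termwise by the negative residual $(w_{t}-w_{t-1})\gamma_{t}(\eta_{t}^{-1}-\mu_{f})v_{t-1}D_{\psi}(z^{t-1},x^{t})$, with the $t=1$ piece folded into $D_{\psi}(x,x^{1})$. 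The only difference is parametrization: the paper fixes an explicit, uniformly small $w_{t}$ (so no separate Chernoff multiplier is needed and the admissibility condition $2w_{t}\gamma_{t}\eta_{t}v_{t}\leq\sigma^{-2}$ is immediate), whereas you normalize $w_{T}=1$ and carry a separate $\lambda$, which is the same construction up to rescaling.
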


To get Lemma \ref{lem:core-hp}, we need some extra effort to find
the correct $w_{t}$ and invoke a simple property of sub-Gaussian
random vectors (Lemma \ref{lem:gaussian}). The details can be found
in Appendix \ref{sec:proofs}. Compared with prior works, this unified
high-probability bound can be applied to various scenarios including
general domains and sub-Gaussian noise.

Equipped with Lemma \ref{lem:core-exp} and Lemma \ref{lem:core-hp},
we can prove all the theorems provided in Section \ref{sec:main-algo}
by plugging in different step sizes for different cases.

\section{Last-Iterate Convergence under Heavy-Tailed Noise\label{sec:main-heavy}}

In Section \ref{sec:main-algo}, we have shown how to obtain the convergence
rate of the last iterate of CSMD in a unified manner. However, the
in-expectation bounds require the finite variance assumption, which
may be too optimistic in modern machine learning. As pointed out by
recent studies, e.g., \citet{mirek2011heavy,pmlr-v97-simsekli19a,NEURIPS2020_b05b57f6,NEURIPS2020_f3f27a32,pmlr-v130-gurbuzbalaban21a,pmlr-v139-camuto21a,pmlr-v139-hodgkinson21a},
noises in different tasks are observed to have heavy tails, meaning
that they only have a finite $p$-th moment where $p\in\left(1,2\right)$.

Naturally, one would ask whether the last iterate of stochastic gradient
methods can still converge under this new challenging assumption.
Surprisingly, the answer is still yes. In this section, we extend
the idea used before to obtain the in-expectation last-iterate convergence
rate of Algorithm \ref{alg:CSMD} under heavy-tailed noise (see Assumption
\ref{enu:A5C} below). As far as we know, this is a new result in
the related area.

\subsection{Additional Related Work}

To the best of our knowledge, for stochastic gradient methods without
momentum or averaging operations, there exists no work that proves
a non-asymptotic last-iterate convergence rate for the function value
gap. Hence, we only review prior works related to the average iterate.

\textbf{Average-iterate convergence under heavy-tailed noise: }\citet{NEURIPS2020_b05b57f6}
proves that SGD with clipping guarantees an $O(T^{\frac{2}{p}-2})$
rate in expectation for Lipschitz strongly convex objectives. Without
clipping and strong convexity, \citet{pmlr-v178-vural22a} establishes
the $O(T^{\frac{1}{p}-1})$ in-expectation convergence for the stochastic
mirror descent algorithm. For the smooth problem, \citet{pmlr-v202-sadiev23a}
shows that SGD with clipping also admits nearly optimal high-probability
convergence rates $\widetilde{O}(T^{\frac{1}{p}-1})$ and $\widetilde{O}(T^{\frac{2}{p}-2})$
for convex and strongly convex functions, respectively. For the non-smooth
problem, \citet{liu2023stochastic} is the first to prove that the
clipping stochastic gradient method ensures the optimal rates $O(T^{\frac{1}{p}-1})$
under convexity and $O(T^{\frac{2}{p}-2})$ under strong convexity
(for both in-expectation and high-probability convergence). \citet{NEURIPS2023_4c454d34}
later obtains the optimal high-probability rate $O(T^{\frac{1}{p}-1})$
for the smooth problem.

\textbf{Lower bounds under heavy-tailed noise:} For Lipschitz convex
functions, \citet{nemirovskij1983problem,5394945,pmlr-v178-vural22a}
have proved a lower bound of $\Omega(T^{\frac{1}{p}-1})$. For smooth
and strongly convex functions (in the sense of the Euclidean norm),
a lower bound of $\Omega(T^{\frac{2}{p}-2})$ is provided in \citet{NEURIPS2020_b05b57f6}.

\subsection{New Assumption and Some Discussions}

We formally introduce Assumption \ref{enu:A5C} to describe the noise
that has a heavy tail. Note that Assumption \ref{enu:A5C} will be
the same as Assumption \ref{enu:A5A} if we enforce $p=2$.
\begin{enumerate}
\item[\textbf{5C.}] \setref{5C}\label{enu:A5C}\textbf{Finite $p$-th moment:} $\exists p\in\left(1,2\right)$,
$\sigma\geq0$ denoting the noise level such that $\E\left[\left\Vert \xi^{t}\right\Vert _{*}^{p}\right]\leq\sigma^{p}$.
\end{enumerate}
In order to fit the new assumption, we also need to make a slight
change to the mirror map $\psi$. Instead of being $1$-strongly convex,
we now require $\psi$ to be $(1,\frac{p}{p-1})$ uniformly convex,
i.e.,
\begin{equation}
D_{\psi}(x,y)=\psi(x)-\psi(y)-\left\langle \na\psi(y),x-y\right\rangle \geq\frac{p-1}{p}\left\Vert x-y\right\Vert ^{\frac{p}{p-1}},\forall x,y\in\dom.\label{eq:uniformly-cvx}
\end{equation}
Note that when $p=2$, this condition reduces to the standard $1$-strong
convexity. The idea of using a uniformly convex mirror map was proposed
in \citet{pmlr-v178-vural22a} before.

Moreover, we will only consider general convex functions (i.e., $\mu_{f}=\mu_{h}=0$
in Assumption \ref{enu:A2}) in this section. The main reason is that
if considering the case $\mu_{f}>0$ in Assumption \ref{enu:A2},
we can immediately observe the following fact,
\[
\frac{\mu_{f}(p-1)}{p}\left\Vert x-y\right\Vert ^{\frac{p}{p-1}}\leq\mu_{f}D_{\psi}(x,y)\leq f(x)-f(y)-\left\langle g,x-y\right\rangle ,\forall x,y\in\dom,g\in\pa f(y),
\]
which implies $f$ itself should be $(\mu_{f},\frac{p}{p-1})$ uniformly
convex. However, this condition is less common and cannot even be
satisfied in the case of a strongly convex $f$ when considering the
Euclidean norm. The other point is that even if $f$ indeed satisfies
Assumption \ref{enu:A2} with $\mu_{f}>0$, then by Assumption \ref{enu:A3},
there is
\[
\frac{\mu_{f}(p-1)}{p}\left\Vert x-y\right\Vert ^{\frac{p}{p-1}}\leq\frac{L}{2}\left\Vert x-y\right\Vert ^{2}+M\left\Vert x-y\right\Vert ,\forall x,y\in\dom,
\]
which implies $\sup_{x,y\in\dom}\left\Vert x-y\right\Vert <\infty$
due to $\frac{p}{p-1}>2$. Note that, with a compact domain, it is
possible to obtain convergence bounds but in a non-unified way. However,
this is contrary to the purpose of this paper. Due to these two points,
we do not consider the case of $\mu_{f}>0$. We remark that $\mu_{h}>0$
will not lead to a result of a bounded domain. But it may also suffer
from the first issue (e.g., when $h(x)=\frac{1}{2}\left\Vert x\right\Vert _{2}^{2}$).
Therefore, the case of $\mu_{h}>0$ is also discarded.
\begin{rem}
We clarify that if $\mu_{f}>0$ or $\mu_{h}>0$ indeed holds (for
example, taking $h(x)=\psi(x)$ implies $\mu_{h}=1$), one can still
apply our analysis framework to obtain the last-iterate convergence
rate of the CSMD algorithm. This potential work is left to the interested
reader.
\end{rem}

Lastly, as mentioned before, we will only provide in-expectation convergence
rates for Algorithm \ref{alg:CSMD}. The key reason is that we do
not consider the clipping technique. As far as we know, all the high-probability
results for the averaged output are established with clipping (even
under the finite variance assumption). How to obtain the $O(\mathrm{polylog}\frac{1}{\delta})$
dependence for an algorithm without clipping in a single run remains
unknown. Moreover, there is some evidence showing that the dependence
of $\Omega(\mathrm{poly}\frac{1}{\delta})$ is inevitable in certain
cases (e.g., Theorem 2.1 in \citet{pmlr-v202-sadiev23a}). Hence,
we leave this challenging problem (finding the high-probability last-iterate
bound without clipping) as a future direction.

\subsection{General Convex Functions under Heavy-Tailed Noise}

Now we are ready to provide the convergence results under heavy-tailed
noise.
\begin{thm}
\label{thm:main-heavy-cvx-exp}Under Assumptions \ref{enu:A1}-\ref{enu:A4}
and \ref{enu:A5C} with $\mu_{f}=\mu_{h}=0$:

If $T$ is unknown, by taking $\eta_{t\in\left[T\right]}=\frac{\eta_{*}}{t^{\frac{2-p}{p}}}\land\frac{\eta}{t^{\frac{1}{p}}}$
with $\eta_{*}=\Theta\left(\frac{\left[D_{\psi}(x^{*},x^{1})\right]^{\frac{2-p}{p}}}{L}\right)$
and $\eta=\Theta\left(\left[\frac{D_{\psi}(x^{*},x^{1})}{M^{p}+\sigma^{p}}\right]^{\frac{1}{p}}\right)$,
there is
\[
\E\left[F(x^{T+1})-F(x^{*})\right]\leq O\left(\frac{L\left[D_{\psi}(x^{*},x^{1})\right]^{2-\frac{2}{p}}\log T}{T^{2-\frac{2}{p}}}+\frac{(M+\sigma)\left[D_{\psi}(x^{*},x^{1})\right]^{1-\frac{1}{p}}\log T}{T^{1-\frac{1}{p}}}\right).
\]

If $T$ is known, by taking $\eta_{t\in\left[T\right]}=\frac{\eta_{*}}{T^{\frac{2-p}{p}}}\land\frac{\eta}{T^{\frac{1}{p}}}$
with $\eta_{*}=\Theta\left(\frac{1}{L}\left[\frac{D_{\psi}(x^{*},x^{1})}{\log T}\right]^{\frac{2-p}{p}}\right)$
and $\eta=\Theta\left(\left[\frac{D_{\psi}(x^{*},x^{1})}{(M^{p}+\sigma^{p})\log T}\right]^{\frac{1}{p}}\right)$,
there is
\[
\E\left[F(x^{T+1})-F(x^{*})\right]\leq O\left(\frac{L\left[D_{\psi}(x^{*},x^{1})\right]^{2-\frac{2}{p}}(\log T)^{\frac{2}{p}-1}}{T^{2-\frac{2}{p}}}+\frac{(M+\sigma)\left[D_{\psi}(x^{*},x^{1})\right]^{1-\frac{1}{p}}(\log T)^{\frac{1}{p}}}{T^{1-\frac{1}{p}}}\right).
\]
\end{thm}

Theorem \ref{thm:main-heavy-cvx-exp} is the first theoretical result
proving the last-iterate convergence of stochastic gradient methods
under heavy-tailed noise. Note that the rate is near optimal, matching
the lower bound $\Omega(1/T^{1-\frac{1}{p}})$ \citep{nemirovskij1983problem,5394945,pmlr-v178-vural22a}
up to a logarithmic factor. In Theorem \ref{thm:main-heavy-cvx-exp-optimal},
we further show how to remove the unsatisfied extra logarithmic factor
when $T$ is known. Moreover, it is worth pointing out that Theorem
\ref{thm:main-heavy-cvx-exp} recovers Theorem \ref{thm:main-cvx-exp}
perfectly if enforcing $p=2$. In Theorem \ref{thm:heavy-cvx-exp}
in the appendix, we provide the convergence theory for arbitrary positive
$\eta_{*}$ and $\eta$.

\subsubsection{Optimal Rate under Heavy-Tailed Noise}
\begin{thm}
\label{thm:main-heavy-cvx-exp-optimal}Under Assumptions \ref{enu:A1}-\ref{enu:A4}
and \ref{enu:A5C} with $\mu_{f}=\mu_{h}=0$, if $T$ is known, by
taking $\eta_{t\in\left[T\right]}=\frac{\eta_{*}(T-t+1)}{T^{\frac{2}{p}}}\land\frac{\eta(T-t+1)}{T^{\frac{1}{p}+1}}$
with $\eta_{*}=\Theta\left(\frac{\left[D_{\psi}(x^{*},x^{1})\right]^{\frac{2-p}{p}}}{L}\right)$
and $\eta=\Theta\left(\left[\frac{D_{\psi}(x^{*},x^{1})}{M^{p}+\sigma^{p}}\right]^{\frac{1}{p}}\right)$,
there is
\[
\E\left[F(x^{T+1})-F(x^{*})\right]\leq O\left(\frac{L\left[D_{\psi}(x^{*},x^{1})\right]^{2-\frac{2}{p}}}{T^{2-\frac{2}{p}}}+\frac{(M+\sigma)\left[D_{\psi}(x^{*},x^{1})\right]^{1-\frac{1}{p}}}{T^{1-\frac{1}{p}}}\right).
\]
\end{thm}

The optimal convergence rate for known $T$ is presented in Theorem
\ref{thm:main-heavy-cvx-exp-optimal}. The new step size generalizes
the learning rate used in Theorem \ref{thm:main-cvx-exp-optimal}
(but still exhibits a linear decay) to handle heavy-tailed noise.
Hence, for stochastic convex optimization under heavy-tailed noise,
we show that the convergence rate of the last iterate can match the
lower bound perfectly. The full version with any positive $\eta_{*}$
and $\eta$ is provided in Theorem \ref{thm:heavy-cvx-exp-optimal}
in the appendix.

\section{Last-Iterate Convergence under Sub-Weibull Noise\label{sec:main-weibull}}

In Section \ref{sec:main-algo}, we establish high-probability last-iterate
convergence under the assumption of sub-Gaussian noise. It is natural
to ask whether it is possible to relax it to, for example, sub-exponential
noise. In this section, we provide an affirmative answer by considering
a general family of noise, sub-Weibull noise with a parameter $p\in\left(0,2\right)$
(see Assumption \ref{enu:A5D} below), which contains sub-exponential
noise as a special case ($p=1$). The sub-Weibull distribution was
originally proposed by \citet{vladimirova2020sub,kuchibhotla2022moving}.
It has already been studied under different problems (e.g., bandit
problems \citep{NEURIPS2019_412758d0,NEURIPS2019_aff0a6a4}).

\subsection{Additional Related Work}

\textbf{Convergence under sub-Weibull noise: }As far as we know, when
considering convex optimization, only few convergence results have
been established under sub-Weibull noise. Hence, we include references
that also consider non-convex optimization or online optimization,
e.g., \citet{JMLR:v25:23-0466,9594083,pmlr-v162-li22q,9758044,liu2023on}.

\subsection{New Assumption and Some Discussions}

The following assumption describes what sub-Weibull noise is:
\begin{enumerate}
\item[\textbf{5D.}] \setref{5D}\label{enu:A5D}\textbf{Sub-Weibull noise:} $\exists p\in\left(0,2\right)$,
$\sigma\geq0$ denoting the noise level such that $\E\left[\exp\left(\lambda\left\Vert \xi^{t}\right\Vert _{*}^{p}\right)\mid\F^{t-1}\right]\leq\exp\left(\lambda\sigma^{p}\right),\forall\lambda\in\left[0,\sigma^{-p}\right]$.
\end{enumerate}
If we enforce $p=2$ in Assumption \ref{enu:A5D}, it recovers Assumption
\ref{enu:A5B} (i.e., sub-Gaussian noise). Intuitively, one can think
of the tail of sub-Weibull noise with $p<2$ decays strictly slower
than the tail of sub-Gaussian noise. We emphasize that it is also
possible to obtain the high-probability bound for $p>2$. However,
the case of $p>2$ is not harder than the sub-Gaussian case ($p=2$).
So, we specify $p\in(0,2)$ in this section.

Next, we only provide the convergence theorems for general convex
functions in this section for simplicity. In the unified analysis
(see Section \ref{sec:weibull-analysis} in the appendix), the strongly
convex case is stated as well. Therefore, the interested reader can
still apply our Lemma \ref{lem:weibull-core-hp} to obtain the high-probability
bound for the strongly convex case, following similar steps used in
proving Theorem \ref{thm:main-str-hp}.

Moreover, under the new Assumption \ref{enu:A5D}, certain barriers
will arise when applying the technique of \citet{pmlr-v202-liu23aa},
i.e., the auxiliary weight sequence $w_{t\in\left[T\right]}$ used
in Lemmas \ref{lem:core-general} and \ref{lem:core-hp}. To deal
with this newly risen issue in the high-probability analysis, we will
introduce a different technique, borrowed from \citet{pmlr-v202-ivgi23a}
and \citet{liu2023stochastic}, to replace the weight sequence $w_{t\in\left[T\right]}$
previously used. A detailed discussion is provided in Appendix \ref{sec:weibull-analysis}.

Lastly, we state a useful technical tool, Lemma \ref{lem:weibull},
the proof of which essentially follows the same way as proving concentration
bounds for sub-Gaussian and sub-exponential random variables in \citet{vershynin2018high}.
For details, see Section \ref{sec:tech} in the appendix.
\begin{lem}
\label{lem:weibull}Given a sigma algebra $\F$ and a random vector
$Z\in\R^{d}$ that is $\F$-measurable, if $\xi\in\R^{d}$ is a random
vector satisfying $\E\left[\xi\mid\F\right]=0$ and $\E\left[\exp\left(\lambda\left\Vert \xi\right\Vert _{*}^{p}\right)\mid\F\right]\leq\exp\left(\lambda\sigma^{p}\right),\forall\lambda\in\left[0,\sigma^{-p}\right]$
for $p\in\left[1,2\right)$, then
\[
\E\left[\exp\left(\left\langle \xi,Z\right\rangle \right)\mid\F\right]\leq\begin{cases}
\exp\left(6\sigma^{2}\left\Vert Z\right\Vert ^{2}\right) & \text{if }\left\Vert Z\right\Vert \leq\frac{1}{2\sigma}\text{ almost surely}\\
\exp\left(6\sigma^{2}\left\Vert Z\right\Vert ^{2}+2^{q-1}\sigma^{q}\left\Vert Z\right\Vert ^{q}\right) & p\in\left(1,2\right),q\coloneqq\frac{p}{p-1}
\end{cases}.
\]
\end{lem}

\subsection{General Convex Functions under Sub-Weibull Noise}

We first introduce a function $C(\delta,p)=\begin{cases}
O\left(1+\log^{\frac{2}{p}}\frac{1}{\delta}\right) & p\in\left[1,2\right)\\
O\left(1+\log^{1+\frac{2}{p}}\frac{1}{\delta}\right) & p\in\left(0,1\right)
\end{cases}$, which is used to reflect the dependence on $\log\frac{1}{\delta}$
for different $p$, the detailed definition is provided in (\ref{eq:weibull-core-hp-C})
in Appendix \ref{sec:weibull-analysis}. When $p\in\left(0,1\right)$,
it is possible to replace $C(\delta,p)=O\left(1+\log^{1+\frac{2}{p}}\frac{1}{\delta}\right)$
by $C(\delta,p)=O\left(1+\log^{\frac{2}{p}}\frac{T}{\delta}\right)$
by applying the concentration bound from \citet{JMLR:v25:23-0466}.
However, we keep the factor $\mathrm{polylog\frac{1}{\delta}}$ to
get rid of $T$. Now we can state the convergence results.
\begin{thm}
\label{thm:main-weibull-cvx-hp}Under Assumptions \ref{enu:A1}-\ref{enu:A4}
and \ref{enu:A5D} with $\mu_{f}=\mu_{h}=0$ and let $\delta\in(0,1)$:

If $T$ is unknown, by taking $\eta_{t\in\left[T\right]}=\frac{1}{2L}\land\frac{\eta}{\sqrt{t}}$
with $\eta=\Theta\left(\sqrt{\frac{D_{\psi}(x^{*},x^{1})}{M^{2}+\sigma^{2}C(\delta,p)}}\right)$,
then with probability at least $1-\delta$, there is
\[
F(x^{T+1})-F(x^{*})\leq O\left(\frac{LD_{\psi}(x^{*},x^{1})}{T}+\frac{(M+\sigma\sqrt{C(\delta,p)})\sqrt{D_{\psi}(x^{*},x^{1})}\log T}{\sqrt{T}}\right).
\]

If $T$ is known, by taking $\eta_{t\in\left[T\right]}=\frac{1}{2L}\land\frac{\eta}{\sqrt{T}}$
with $\eta=\Theta\left(\sqrt{\frac{D_{\psi}(x^{*},x^{1})}{(M^{2}+\sigma^{2}C(\delta,p))\log T}}\right)$,
then with probability at least $1-\delta$, there is
\[
F(x^{T+1})-F(x^{*})\leq O\left(\frac{LD_{\psi}(x^{*},x^{1})}{T}+\frac{(M+\sigma\sqrt{C(\delta,p)})\sqrt{D_{\psi}(x^{*},x^{1})\log T}}{\sqrt{T}}\right).
\]
\end{thm}

As far as we know, Theorem \ref{thm:main-weibull-cvx-hp} is the first
to give the last-iterate convergence under sub-Weibull noise. Its
full statement is provided in Appendix \ref{sec:weibull-cvx}. Compared
with Theorem \ref{thm:main-cvx-hp}, we only incur an extra $O\left(\mathrm{polylog}\frac{1}{\delta}\right)$
factor. However, we want to point out that $\sqrt{C(\delta,p)}$ is
not continuous in $p$, as there is a difference when $p$ is approaching
$1$ from two sides.

\subsubsection{Optimal Rate under Sub-Weibull Noise}
\begin{thm}
\label{thm:main-weibull-cvx-hp-optimal}Under Assumptions \ref{enu:A1}-\ref{enu:A4}
and \ref{enu:A5D} with $\mu_{f}=\mu_{h}=0$ and let $\delta\in(0,1)$,
if $T$ is known, by taking $\eta_{t\in\left[T\right]}=\frac{T-t+1}{2LT}\land\frac{\eta(T-t+1)}{T^{\frac{3}{2}}}$
with $\eta=\Theta\left(\sqrt{\frac{D_{\psi}(x^{*},x^{1})}{M^{2}+\sigma^{2}C(\delta,p)}}\right)$,
then with probability at least $1-\delta$, there is
\[
F(x^{T+1})-F(x^{*})\leq O\left(\frac{LD_{\psi}(x^{*},x^{1})}{T}+\frac{(M+\sigma\sqrt{C(\delta,p)})\sqrt{D_{\psi}(x^{*},x^{1})}}{\sqrt{T}}\right).
\]
\end{thm}

With almost the same step size in Theorem \ref{thm:main-cvx-hp-optimal},
we can also get rid of the extra $O\left(\sqrt{\log T}\right)$ factor
in Theorem \ref{thm:main-weibull-cvx-hp} to obtain the optimal dependence
in time $T$. The detailed version for any positive $\eta$ can be
found in Appendix \ref{subsec:weibull-cvx-hp-optimal}.

\section{Conclusion\label{sec:conclusion}}

In this work, we present a unified analysis for the last-iterate convergence
of stochastic gradient methods and obtain several new results. More
specifically, we establish the (nearly) optimal convergence of the
last iterate of the CSMD algorithm both in expectation and in high
probability. Our proofs can not only handle different function classes
simultaneously but also be applied to composite problems with non-Euclidean
norms on general domains. We believe our work develops a deeper understanding
of stochastic gradient methods. However, there still remain many directions
worth exploring. For example, it could be interesting to see whether
our proof can be extended to adaptive gradient methods like AdaGrad
\citep{McMahanS10,JMLR:v12:duchi11a}. We leave this important question
as future work and expect it to be addressed.

\bibliographystyle{abbrvnat}
\bibliography{ref}

\clearpage

\appendix

\section{Technical Lemmas\label{sec:tech}}

\subsection{Proof of Lemma \ref{lem:gaussian}}

Before giving the proof of Lemma \ref{lem:gaussian}, we need the
following property of sub-Gaussian vectors. This result is already
known before (see \citet{vershynin2018high}). We provide a proof
here to make the paper self-consistent.
\begin{lem}
\label{lem:sub-gaussian-moment}Given a sigma algebra $\F$, if $\xi\in\R^{d}$
is a random vector satisfying $\E\left[\exp\left(\lambda\left\Vert \xi\right\Vert _{*}^{2}\right)\mid\F\right]\leq\exp\left(\lambda\sigma^{2}\right),\forall\lambda\in\left[0,\sigma^{-2}\right]$,
then for any integer $k\geq1$, we have
\[
\E\left[\left\Vert \xi\right\Vert _{*}^{2k}\mid\F\right]\leq\begin{cases}
\sigma^{2} & k=1\\
e(k!)\sigma^{2k} & k\geq2
\end{cases}.
\]
\end{lem}

\begin{proof}
For the case $k=1$, given any $\lambda\in\left[0,\sigma^{-2}\right]$,
there is
\[
\exp\left(\E\left[\lambda\left\Vert \xi\right\Vert _{*}^{2}\mid\F\right]\right)\leq\E\left[\exp\left(\lambda\left\Vert \xi\right\Vert _{*}^{2}\right)\mid\F\right]\leq\exp\left(\lambda\sigma^{2}\right)\Rightarrow\E\left[\left\Vert \xi\right\Vert _{*}^{2}\mid\F\right]\leq\sigma^{2}.
\]
For $k\geq2,$ we have
\begin{align*}
\E\left[\left\Vert \xi\right\Vert _{*}^{2k}\mid\F\right] & =\E\left[\int_{0}^{\infty}2kt^{2k-1}\mathds{1}\left[\left\Vert \xi\right\Vert _{*}\geq t\right]\mathrm{d}t\mid\F\right]=\int_{0}^{\infty}2kt^{2k-1}\E\left[\mathds{1}\left[\left\Vert \xi\right\Vert _{*}\geq t\right]\mid\F\right]\mathrm{d}t\\
 & \leq\int_{0}^{\infty}2kt^{2k-1}\E\left[\frac{\exp\left(\sigma^{-2}\left\Vert \xi\right\Vert _{*}^{2}\right)}{\exp(\sigma^{-2}t^{2})}\mid\F\right]\mathrm{d}t\overset{(a)}{\leq}\int_{0}^{\infty}2ekt^{2k-1}\exp(-\sigma^{-2}t^{2})\mathrm{d}t\\
 & \overset{(b)}{=}\int_{0}^{\infty}ek\sigma^{2k}s^{k-1}\exp(-s)\mathrm{d}s=ek\sigma^{2k}\Gamma(k)=e(k!)\sigma^{2k},
\end{align*}
where $(a)$ is by $\E\left[\exp\left(\sigma^{-2}\left\Vert \xi\right\Vert _{*}^{2}\right)\mid\F\right]\leq\exp(\sigma^{-2}\sigma^{2})=e$
and $(b)$ is by the change of variable $t=\sigma\sqrt{s}$.
\end{proof}

Now we are ready to prove Lemma \ref{lem:gaussian}.

\begin{proof}[Proof of Lemma \ref{lem:gaussian}]
Note that
\begin{align*}
\E\left[\exp\left(\left\langle \xi,Z\right\rangle \right)\mid\F\right] & =\E\left[1+\left\langle \xi,Z\right\rangle +\sum_{k=2}^{\infty}\frac{\left(\left\langle \xi,Z\right\rangle \right)^{k}}{k!}\mid\F\right]\leq\E\left[\left\langle \xi,Z\right\rangle \mid\F\right]+\E\left[1+\sum_{k=2}^{\infty}\frac{\left\Vert \xi\right\Vert _{*}^{k}\left\Vert Z\right\Vert ^{k}}{k!}\mid\F\right]\\
 & \overset{(a)}{=}\E\left[1+\sum_{k=1}^{\infty}\frac{\left\Vert \xi\right\Vert _{*}^{2k}\left\Vert Z\right\Vert ^{2k}}{(2k)!}+\sum_{k=1}^{\infty}\frac{\left\Vert \xi\right\Vert _{*}^{2k+1}\left\Vert Z\right\Vert ^{2k+1}}{(2k+1)!}\mid\F\right]\\
 & \overset{(b)}{\leq}\E\left[1+\sum_{k=1}^{\infty}\frac{\left\Vert \xi\right\Vert _{*}^{2k}\left\Vert Z\right\Vert ^{2k}}{(2k)!}+\sum_{k=1}^{\infty}\frac{\left\Vert \xi\right\Vert _{*}^{2k}\left\Vert Z\right\Vert ^{2k}+\left\Vert \xi\right\Vert _{*}^{2k+2}\left\Vert Z\right\Vert ^{2k+2}/4}{(2k+1)!}\mid\F\right]\\
 & =\E\left[1+\frac{2\left\Vert \xi\right\Vert _{*}^{2}\left\Vert Z\right\Vert ^{2}}{3}+\sum_{k=2}^{\infty}\left\Vert \xi\right\Vert _{*}^{2k}\left\Vert Z\right\Vert ^{2k}\left(\frac{1}{4(2k-1)!}+\frac{1}{(2k)!}+\frac{1}{(2k+1)!}\right)\mid\F\right]\\
 & =\E\left[1+\frac{2\left\Vert \xi\right\Vert _{*}^{2}\left\Vert Z\right\Vert ^{2}}{3}+\sum_{k=2}^{\infty}\left\Vert \xi\right\Vert _{*}^{2k}\left\Vert Z\right\Vert ^{2k}\frac{1+k/2+1/(2k+1)}{(2k)!}\mid\F\right]\\
 & \overset{(c)}{\leq}1+\frac{2\sigma^{2}\left\Vert Z\right\Vert ^{2}}{3}+\sum_{k=2}^{\infty}\frac{\sigma^{2k}\left\Vert Z\right\Vert ^{2k}}{k!}\cdot\frac{e(1+k/2+1/(2k+1))}{\binom{2k}{k}}\\
 & \overset{(d)}{\leq}1+\sigma^{2}\left\Vert Z\right\Vert ^{2}+\sum_{k=2}^{\infty}\frac{\sigma^{2k}\left\Vert Z\right\Vert ^{2k}}{k!}=\exp\left(\sigma^{2}\left\Vert Z\right\Vert ^{2}\right),
\end{align*}
where $(a)$ is by $\E\left[\left\langle \xi,Z\right\rangle \mid\F\right]=\left\langle \E\left[\xi\mid\F\right],Z\right\rangle =0$,
$(b)$ holds due to AM-GM inequality, $(c)$ is by applying Lemma
\ref{lem:sub-gaussian-moment} to $\E\left[\left\Vert \xi\right\Vert _{*}^{2k}\left\Vert Z\right\Vert ^{2k}\mid\F\right]=\left\Vert Z\right\Vert ^{2k}\E\left[\left\Vert \xi\right\Vert _{*}^{2k}\mid\F\right]$
and $(d)$ is by $2/3<1$ and
\[
\max_{k\geq2,k\in\N}\frac{e(1+k/2+1/(2k+1))}{\binom{2k}{k}}=\frac{e(1+1+1/5)}{6}<1.
\]
\end{proof}

\subsection{Proof of Lemma \ref{lem:weibull}}

To prove Lemma \ref{lem:weibull}, we need the following moment bound
on sub-Weibull random variables, which could be viewed as a generalization
of Lemma \ref{lem:sub-gaussian-moment}. Similar results can also
be found in, for example, \citet{vladimirova2020sub}.
\begin{lem}
\label{lem:sub-weibull-moment}Given a sigma algebra $\F$, if $\xi\in\R^{d}$
is a random vector satisfying $\E\left[\exp\left(\lambda\left\Vert \xi\right\Vert _{*}^{p}\right)\mid\F\right]\leq\exp\left(\lambda\sigma^{p}\right),\forall\lambda\in\left[0,\sigma^{-p}\right]$
for some $p\in\left(0,2\right)$, then for any $k\geq1$ (not necessarily
an integer), we have
\[
\E\left[\left\Vert \xi\right\Vert _{*}^{k}\mid\F\right]\leq\begin{cases}
e(k!)\sigma^{k} & k\in\N\text{ and }p\in\left[1,2\right)\\
e\left(\frac{k}{p}\right)^{\frac{k}{p}}\sigma^{k} & k\geq p
\end{cases}.
\]
\end{lem}

\begin{proof}
Following a similar way of proving Lemma \ref{lem:sub-gaussian-moment},
we can obtain that for any $k\geq1$,
\[
\E\left[\left\Vert \xi\right\Vert _{*}^{k}\mid\F\right]\leq e\sigma^{k}\Gamma\left(\frac{k}{p}+1\right).
\]
By the recursive definition of the Gamma function, there is
\[
\Gamma\left(\frac{k}{p}+1\right)=\Gamma\left(\frac{k}{p}-\left\lfloor \frac{k}{p}\right\rfloor +1\right)\prod_{i=0}^{\left\lfloor \frac{k}{p}\right\rfloor -1}\left(\frac{k}{p}-i\right)\leq\prod_{i=0}^{\left\lfloor \frac{k}{p}\right\rfloor -1}\left(\frac{k}{p}-i\right),
\]
where the last step is by $\Gamma\left(\frac{k}{p}-\left\lfloor \frac{k}{p}\right\rfloor +1\right)\leq1$
due to $1\leq\frac{k}{p}-\left\lfloor \frac{k}{p}\right\rfloor +1<2$.
Finally, when $p\in\left[1,2\right)$ and $k\in\N$, we can bound
\[
\prod_{i=0}^{\left\lfloor \frac{k}{p}\right\rfloor -1}\left(\frac{k}{p}-i\right)\leq\prod_{i=0}^{\left\lfloor \frac{k}{p}\right\rfloor -1}\left(k-i\right)\leq\prod_{i=0}^{k-1}\left(k-i\right)=k!.
\]
If $k\geq p$, we can bound
\[
\prod_{i=0}^{\left\lfloor \frac{k}{p}\right\rfloor -1}\left(\frac{k}{p}-i\right)\leq\left(\frac{k}{p}\right)^{\left\lfloor \frac{k}{p}\right\rfloor }\leq\left(\frac{k}{p}\right)^{\frac{k}{p}}.
\]
\end{proof}

Now we are ready to prove Lemma \ref{lem:weibull}.

\begin{proof}[Proof of Lemma \ref{lem:weibull}]
Note that
\begin{align}
\E\left[\exp\left(\left\langle \xi,Z\right\rangle \right)\mid\F\right] & =\E\left[1+\left\langle \xi,Z\right\rangle +\sum_{k=2}^{\infty}\frac{\left(\left\langle \xi,Z\right\rangle \right)^{k}}{k!}\mid\F\right]\overset{(a)}{\leq}\E\left[1+\sum_{k=2}^{\infty}\frac{\left\Vert \xi\right\Vert _{*}^{k}\left\Vert Z\right\Vert ^{k}}{k!}\mid\F\right]\nonumber \\
 & \overset{(b)}{\leq}1+e\sum_{k=2}^{\infty}\sigma^{k}\left\Vert Z\right\Vert ^{k}\leq\exp\left(e\sum_{k=2}^{\infty}\sigma^{k}\left\Vert Z\right\Vert ^{k}\right),\label{eq:sub-weibull-exponential-1}
\end{align}
where $(a)$ is due to $\E\left[\left\langle \xi,Z\right\rangle \mid\F\right]=\left\langle \E\left[\xi\mid\F\right],Z\right\rangle =0$
and $\left(\left\langle \xi,Z\right\rangle \right)^{k}\leq\left(\left|\left\langle \xi,Z\right\rangle \right|\right)^{k}\leq\left\Vert \xi\right\Vert _{*}^{k}\left\Vert Z\right\Vert ^{k}$,
and $(b)$ is by applying Lemma \ref{lem:sub-weibull-moment} to $\E\left[\left\Vert \xi\right\Vert _{*}^{k}\left\Vert Z\right\Vert ^{k}\mid\F\right]=\left\Vert Z\right\Vert ^{k}\E\left[\left\Vert \xi\right\Vert _{*}^{k}\mid\F\right]$.

Therefore, if $\left\Vert Z\right\Vert \leq\frac{1}{2\sigma}$ almost
surely, we have
\begin{equation}
e\sum_{k=2}^{\infty}\sigma^{k}\left\Vert Z\right\Vert ^{k}\leq2e\sigma^{2}\left\Vert Z\right\Vert ^{2}\leq6\sigma^{2}\left\Vert Z\right\Vert ^{2}\Rightarrow\exp\left(e\sum_{k=2}^{\infty}\sigma^{k}\left\Vert Z\right\Vert ^{k}\right)\leq\exp\left(6\sigma^{2}\left\Vert Z\right\Vert ^{2}\right).\label{eq:sub-weibull-exponential-2}
\end{equation}

When $p\in\left(1,2\right)$, recall that $q=\frac{p}{p-1}$, we can
also bound
\[
\left\langle \xi,Z\right\rangle \leq\left\Vert \xi\right\Vert _{*}\left\Vert Z\right\Vert \leq\frac{\left\Vert \xi\right\Vert _{*}^{p}}{2p\sigma^{p}}+\frac{2^{q/p}\sigma^{q}\left\Vert Z\right\Vert ^{q}}{q},
\]
where the first step is due to Cauchy-Schwarz inequality and the second
step is by Young's inequality. Therefore, it can be obtained
\begin{equation}
\E\left[\exp\left(\left\langle \xi,Z\right\rangle \right)\mid\F\right]\leq\E\left[\exp\left(\frac{\left\Vert \xi\right\Vert _{*}^{p}}{2p\sigma^{p}}+\frac{2^{q/p}\sigma^{q}\left\Vert Z\right\Vert ^{q}}{q}\right)\mid\F\right]\leq\exp\left(\frac{1}{2p}+\frac{2^{q/p}\sigma^{q}\left\Vert Z\right\Vert ^{q}}{q}\right),\label{eq:sub-weibull-exponential-3}
\end{equation}
where the last step is due to $Z\in\F$ and $\E\left[\exp\left(\frac{\left\Vert \xi\right\Vert _{*}^{p}}{2p\sigma^{p}}\right)\mid\F\right]\leq\exp\left(\frac{1}{2p}\right)$
since $\frac{1}{2p}\leq1$. Now, let us define the event $A\coloneqq\left\{ \left\Vert Z\right\Vert \leq\frac{1}{2\sigma}\right\} $
and bound
\begin{align*}
\E\left[\exp\left(\left\langle \xi,Z\right\rangle \right)\mid\F\right] & =\E\left[\exp\left(\left\langle \xi,Z\right\rangle \right)\mid\F\right]\left(\mathds{1}\left[A\right]+\mathds{1}\left[A^{c}\right]\right)\\
 & \overset{(\ref{eq:sub-weibull-exponential-1}),(\ref{eq:sub-weibull-exponential-3})}{\leq}\exp\left(e\sum_{k=2}^{\infty}\sigma^{k}\left\Vert Z\right\Vert ^{k}\right)\mathds{1}\left[A\right]+\exp\left(\frac{1}{2p}+\frac{2^{q/p}\sigma^{q}\left\Vert Z\right\Vert ^{q}}{q}\right)\mathds{1}\left[A^{c}\right]\\
 & \overset{(c)}{\leq}\exp\left(6\sigma^{2}\left\Vert Z\right\Vert ^{2}\right)\mathds{1}\left[A\right]+\exp\left(2^{q-1}\sigma^{q}\left\Vert Z\right\Vert ^{q}\right)\mathds{1}\left[A^{c}\right]\leq\exp\left(6\sigma^{2}\left\Vert Z\right\Vert ^{2}+2^{q-1}\sigma^{q}\left\Vert Z\right\Vert ^{q}\right),
\end{align*}
where in $(c)$ we use (\ref{eq:sub-weibull-exponential-2}) when
$\left\Vert Z\right\Vert \leq\frac{1}{2\sigma}$, and $\frac{1}{2p}+\frac{2^{q/p}\sigma^{q}\left\Vert Z\right\Vert ^{q}}{q}\leq\frac{2^{q}\sigma^{q}\left\Vert Z\right\Vert ^{q}}{2p}+\frac{2^{q/p}\sigma^{q}\left\Vert Z\right\Vert ^{q}}{q}=2^{q-1}\sigma^{q}\left\Vert Z\right\Vert ^{q}$
when $\left\Vert Z\right\Vert >\frac{1}{2\sigma}$, where the last
equation is by $q/p=q-1$ and $1/p+1/q=1$.
\end{proof}

\section{Missing Proofs in Section \ref{sec:analysis}\label{sec:proofs}}

In this section, we provide the missing proofs of the three most important
lemmas.

\subsection{Proof of Lemma \ref{lem:core-general}}

\begin{proof}[Proof of Lemma \ref{lem:core-general}]
Inspired by \citet{doi:10.1137/24M1717762}, we first introduce the
following auxiliary sequence
\[
z^{t}=\begin{cases}
\left(1-\frac{v_{t-1}}{v_{t}}\right)x^{t}+\frac{v_{t-1}}{v_{t}}z^{t-1} & t\in\left[T\right]\\
x & t=0
\end{cases}\Leftrightarrow z^{t}=\frac{v_{0}}{v_{t}}x+\sum_{s=1}^{t}\frac{v_{s}-v_{s-1}}{v_{t}}x^{s},\forall t\in\left\{ 0\right\} \cup\left[T\right],
\]
where we recall that $v_{t\in\left\{ 0\right\} \cup\left[T\right]}=\frac{w_{T}\gamma_{T}}{\sum_{s=t\lor1}^{T}w_{s}\gamma_{s}}\geq0$
is non-decreasing. Note that $z^{t}$ always falls in the domain $\dom$
by its definition.

Now, we start the proof from the $(L,M)$-smoothness of $f$,
\begin{align}
f(x^{t+1})-f(x^{t})\leq & \left\langle g^{t},x^{t+1}-x^{t}\right\rangle +\frac{L}{2}\left\Vert x^{t+1}-x^{t}\right\Vert ^{2}+M\left\Vert x^{t+1}-x^{t}\right\Vert \nonumber \\
= & \left\langle \xi^{t},z^{t}-x^{t}\right\rangle +\underbrace{\left\langle \xi^{t},x^{t}-x^{t+1}\right\rangle }_{\text{I}}+\underbrace{\left\langle \hg^{t},x^{t+1}-z^{t}\right\rangle }_{\text{II}}\nonumber \\
 & +\underbrace{\left\langle g^{t},z^{t}-x^{t}\right\rangle }_{\text{III}}+\underbrace{\frac{L}{2}\left\Vert x^{t+1}-x^{t}\right\Vert ^{2}+M\left\Vert x^{t+1}-x^{t}\right\Vert }_{\text{IV}},\label{eq:core-general-1}
\end{align}
where $g^{t}\coloneqq\E\left[\hg^{t}\vert\F^{t-1}\right]\in\partial f(x^{t})$
and $\xi^{t}=\hg^{t}-g^{t}$. Next, we bound these four terms respectively.
\begin{itemize}
\item For term I, by applying Cauchy-Schwarz inequality, the $1$-strong
convexity of $\psi$ and AM-GM inequality, we can get the following
upper bound
\begin{equation}
\text{I}\leq\left\Vert \xi^{t}\right\Vert _{*}\left\Vert x^{t}-x^{t+1}\right\Vert \leq\left\Vert \xi^{t}\right\Vert _{*}\sqrt{2D_{\psi}(x^{t+1},x^{t})}\leq2\eta_{t}\left\Vert \xi^{t}\right\Vert _{*}^{2}+\frac{D_{\psi}(x^{t+1},x^{t})}{4\eta_{t}}.\label{eq:core-general-I}
\end{equation}
\item For term II, we recall that the update rule is $x^{t+1}=\argmin_{x\in\dom}h(x)+\left\langle \hg^{t},x-x^{t}\right\rangle +\frac{D_{\psi}(x,x^{t})}{\eta_{t}}$.
Hence, by the optimality condition of $x^{t+1}$, there exists $h^{t+1}\in\partial h(x^{t+1})$
such that for any $y\in\dom$
\[
\left\langle h^{t+1}+\hg^{t}+\frac{\na\psi(x^{t+1})-\na\psi(x^{t})}{\eta_{t}},x^{t+1}-y\right\rangle \leq0,
\]
which implies
\begin{align*}
\left\langle \hg^{t},x^{t+1}-y\right\rangle  & \leq\frac{\left\langle \na\psi(x^{t})-\na\psi(x^{t+1}),x^{t+1}-y\right\rangle }{\eta_{t}}+\left\langle h^{t+1},y-x^{t+1}\right\rangle \\
 & \leq\frac{D_{\psi}(y,x^{t})-D_{\psi}(y,x^{t+1})-D_{\psi}(x^{t+1},x^{t})}{\eta_{t}}+h(y)-h(x_{t+1})-\mu_{h}D_{\psi}(y,x^{t+1}),
\end{align*}
where the last inequality holds due to $\left\langle \na\psi(x^{t})-\na\psi(x^{t+1}),x^{t+1}-y\right\rangle =D_{\psi}(y,x^{t})-D_{\psi}(y,x^{t+1})-D_{\psi}(x^{t+1},x^{t})$
and $\left\langle h^{t+1},y-x^{t+1}\right\rangle \leq h(y)-h(x_{t+1})-\mu_{h}D_{\psi}(y,x^{t+1})$
by the $\mu_{h}$-strong convexity of $h$. We substitute $y$ with
$z^{t}$ to obtain
\begin{equation}
\text{II}\leq\frac{D_{\psi}(z^{t},x^{t})-D_{\psi}(z^{t},x^{t+1})-D_{\psi}(x^{t+1},x^{t})}{\eta_{t}}+h(z_{t})-h(x_{t+1})-\mu_{h}D_{\psi}(z^{t},x^{t+1}).\label{eq:core-general-II}
\end{equation}
\item For term III, we simply use the $\mu_{f}$-strong convexity of $f$
to get
\begin{equation}
\text{III}\leq f(z^{t})-f(x^{t})-\mu_{f}D_{\psi}(z^{t},x^{t}).\label{eq:core-general-III}
\end{equation}
\item For term IV, we have
\begin{align}
\text{IV} & \leq LD_{\psi}(x^{t+1},x^{t})+M\sqrt{2D_{\psi}(x^{t+1},x^{t})}\nonumber \\
 & \leq LD_{\psi}(x^{t+1},x^{t})+2\eta_{t}M^{2}+\frac{D_{\psi}(x^{t+1},x^{t})}{4\eta_{t}},\label{eq:core-general-IV}
\end{align}
where the first inequality holds by the $1$-strong convexity of $\psi$
again and the second one is due to AM-GM inequality.
\end{itemize}
By plugging the bounds (\ref{eq:core-general-I}), (\ref{eq:core-general-II}),
(\ref{eq:core-general-III}) and (\ref{eq:core-general-IV}) into
(\ref{eq:core-general-1}), we obtain
\begin{align*}
f(x^{t+1})-f(x^{t})\leq & \left\langle \xi^{t},z^{t}-x^{t}\right\rangle +2\eta_{t}\left\Vert \xi^{t}\right\Vert _{*}^{2}+\frac{D_{\psi}(x^{t+1},x^{t})}{4\eta_{t}}\\
 & +\frac{D_{\psi}(z^{t},x^{t})-D_{\psi}(z^{t},x^{t+1})-D_{\psi}(x^{t+1},x^{t})}{\eta_{t}}+h(z_{t})-h(x_{t+1})-\mu_{h}D_{\psi}(z^{t},x^{t+1})\\
 & +f(z^{t})-f(x^{t})-\mu_{f}D_{\psi}(z^{t},x^{t})+LD_{\psi}(x^{t+1},x^{t})+2\eta_{t}M^{2}+\frac{D_{\psi}(x^{t+1},x^{t})}{4\eta_{t}}.
\end{align*}
Rearranging the terms to get
\begin{align}
 & F(x^{t+1})-F(z^{t})\nonumber \\
\leq & \left\langle \xi^{t},z^{t}-x^{t}\right\rangle +(\eta_{t}^{-1}-\mu_{f})D_{\psi}(z^{t},x^{t})-(\eta_{t}^{-1}+\mu_{h})D_{\psi}(z^{t},x^{t+1})\nonumber \\
 & +\left(L-\frac{1}{2\eta_{t}}\right)D_{\psi}(x^{t+1},x^{t})+2\eta_{t}(M^{2}+\left\Vert \xi^{t}\right\Vert _{*}^{2})\nonumber \\
\overset{(a)}{\leq} & \left\langle \xi^{t},z^{t}-x^{t}\right\rangle +(\eta_{t}^{-1}-\mu_{f})D_{\psi}(z^{t},x^{t})-(\eta_{t}^{-1}+\mu_{h})D_{\psi}(z^{t},x^{t+1})+2\eta_{t}(M^{2}+\left\Vert \xi^{t}\right\Vert _{*}^{2})\nonumber \\
\overset{(b)}{=} & \frac{v_{t-1}}{v_{t}}\left\langle \xi^{t},z^{t-1}-x^{t}\right\rangle +(\eta_{t}^{-1}-\mu_{f})D_{\psi}(z^{t},x^{t})-(\eta_{t}^{-1}+\mu_{h})D_{\psi}(z^{t},x^{t+1})+2\eta_{t}(M^{2}+\left\Vert \xi^{t}\right\Vert _{*}^{2})\nonumber \\
\overset{(c)}{\leq} & \frac{v_{t-1}}{v_{t}}\left\langle \xi^{t},z^{t-1}-x^{t}\right\rangle +(\eta_{t}^{-1}-\mu_{f})\frac{v_{t-1}}{v_{t}}D_{\psi}(z^{t-1},x^{t})-(\eta_{t}^{-1}+\mu_{h})D_{\psi}(z^{t},x^{t+1})+2\eta_{t}(M^{2}+\left\Vert \xi^{t}\right\Vert _{*}^{2}),\label{eq:core-general-2}
\end{align}
where $(a)$ is by $\eta_{t\in\left[T\right]}\leq\frac{1}{2L\lor\mu_{f}}\leq\frac{1}{2L}\Rightarrow L-\frac{1}{2\eta_{t}}\leq0$,
$(b)$ holds due to the definition of $z^{t}=\left(1-\frac{v_{t-1}}{v_{t}}\right)x^{t}+\frac{v_{t-1}}{v_{t}}z^{t-1}$
implying $z^{t}-x^{t}=\frac{v_{t-1}}{v_{t}}\left(z^{t-1}-x^{t}\right)$,
$(c)$ is by noticing $\eta_{t\in\left[T\right]}\leq\frac{1}{2L\lor\mu_{f}}\leq\frac{1}{\mu_{f}}\Rightarrow\eta_{t}^{-1}-\mu_{f}\geq0$
and
\[
D_{\psi}(z^{t},x^{t})\overset{(d)}{\leq}\left(1-\frac{v_{t-1}}{v_{t}}\right)D_{\psi}(x^{t},x^{t})+\frac{v_{t-1}}{v_{t}}D_{\psi}(z^{t-1},x^{t})=\frac{v_{t-1}}{v_{t}}D_{\psi}(z^{t-1},x^{t}),
\]
where $(d)$ is by the convexity of the first argument in $D_{\psi}(\cdot,\cdot)$.

Multiplying both sides of (\ref{eq:core-general-2}) by $w_{t}\gamma_{t}v_{t}$
(all of these three terms are non-negative) and summing up from $t=1$
to $T$, we obtain
\begin{align}
 & \sum_{t=1}^{T}w_{t}\gamma_{t}v_{t}\left(F(x^{t+1})-F(z^{t})\right)\nonumber \\
\leq & \sum_{t=1}^{T}w_{t}\gamma_{t}v_{t-1}\left\langle \xi^{t},z^{t-1}-x^{t}\right\rangle +\sum_{t=1}^{T}2w_{t}\gamma_{t}\eta_{t}v_{t}(M^{2}+\left\Vert \xi^{t}\right\Vert _{*}^{2})\nonumber \\
 & +\sum_{t=1}^{T}w_{t}\gamma_{t}(\eta_{t}^{-1}-\mu_{f})v_{t-1}D_{\psi}(z^{t-1},x^{t})-w_{t}\gamma_{t}(\eta_{t}^{-1}+\mu_{h})v_{t}D_{\psi}(z^{t},x^{t+1})\nonumber \\
= & w_{1}\gamma_{1}(\eta_{1}^{-1}-\mu_{f})v_{0}D_{\psi}(z^{0},x^{1})-w_{T}\gamma_{T}(\eta_{T}^{-1}+\mu_{h})v_{T}D_{\psi}(z^{T},x^{T+1})+\sum_{t=1}^{T}2w_{t}\gamma_{t}\eta_{t}v_{t}(M^{2}+\left\Vert \xi^{t}\right\Vert _{*}^{2})\nonumber \\
 & +\sum_{t=1}^{T}w_{t}\gamma_{t}v_{t-1}\left\langle \xi^{t},z^{t-1}-x^{t}\right\rangle +\sum_{t=2}^{T}\left(w_{t}\gamma_{t}(\eta_{t}^{-1}-\mu_{f})-w_{t-1}\gamma_{t-1}(\eta_{t-1}^{-1}+\mu_{h})\right)v_{t-1}D_{\psi}(z^{t-1},x^{t})\nonumber \\
\overset{(e)}{=} & w_{1}(1-\mu_{f}\eta_{1})v_{0}D_{\psi}(x,x^{1})-w_{T}\gamma_{T}(\eta_{T}^{-1}+\mu_{h})v_{T}D_{\psi}(z^{T},x^{T+1})+\sum_{t=1}^{T}2w_{t}\gamma_{t}\eta_{t}v_{t}(M^{2}+\left\Vert \xi^{t}\right\Vert _{*}^{2})\nonumber \\
 & +\sum_{t=1}^{T}w_{t}\gamma_{t}v_{t-1}\left\langle \xi^{t},z^{t-1}-x^{t}\right\rangle +\sum_{t=2}^{T}(w_{t}-w_{t-1})\gamma_{t}(\eta_{t}^{-1}-\mu_{f})v_{t-1}D_{\psi}(z^{t-1},x^{t}),\label{eq:core-general-3}
\end{align}
where $(e)$ holds due to $\gamma_{1}(\eta_{1}^{-1}-\mu_{f})=\eta_{1}(\eta_{1}^{-1}-\mu_{f})=1-\mu_{f}\eta_{1}$,
$z^{0}=x$ and $\gamma_{t}(\eta_{t}^{-1}-\mu_{f})=\eta_{t}(\eta_{t}^{-1}-\mu_{f})\prod_{s=2}^{t}\frac{1+\mu_{h}\eta_{s-1}}{1-\mu_{f}\eta_{s}}=(\eta_{t-1}^{-1}+\mu_{h})\eta_{t-1}\prod_{s=2}^{t-1}\frac{1+\mu_{h}\eta_{s-1}}{1-\mu_{f}\eta_{s}}=\gamma_{t-1}(\eta_{t-1}^{-1}+\mu_{h}),\forall t\geq2$.

By the convexity of $F$ and the definition of $z^{t}=\frac{v_{0}}{v_{t}}x+\sum_{s=1}^{t}\frac{v_{s}-v_{s-1}}{v_{t}}x^{s}$,
we have
\[
F(z^{t})\leq\sum_{s=1}^{t}\frac{v_{s}-v_{s-1}}{v_{t}}F(x^{s})+\frac{v_{0}}{v_{t}}F(x),
\]
which implies
\begin{align*}
\sum_{t=1}^{T}w_{t}\gamma_{t}v_{t}\left(F(x^{t+1})-F(z^{t})\right)\geq & \sum_{t=1}^{T}\left[w_{t}\gamma_{t}v_{t}F(x^{t+1})-w_{t}\gamma_{t}\left(\sum_{s=1}^{t}(v_{s}-v_{s-1})F(x^{s})+v_{0}F(x)\right)\right]\\
= & \sum_{t=1}^{T}\left[w_{t}\gamma_{t}v_{t}\left(F(x^{t+1})-F(x)\right)-w_{t}\gamma_{t}\sum_{s=1}^{t}(v_{s}-v_{s-1})\left(F(x^{s})-F(x)\right)\right]\\
= & w_{T}\gamma_{T}v_{T}\left(F(x^{T+1})-F(x)\right)-\left(\sum_{t=1}^{T}w_{t}\gamma_{t}\right)(v_{1}-v_{0})\left(F(x^{1})-F(x)\right)\\
 & +\sum_{t=2}^{T}\left[w_{t-1}\gamma_{t-1}v_{t-1}-\left(\sum_{s=t}^{T}w_{s}\gamma_{s}\right)(v_{t}-v_{t-1})\right]\left(F(x^{t})-F(x)\right).
\end{align*}
Now, by the definition of $v_{t\in\left\{ 0\right\} \cup\left[T\right]}=\frac{w_{T}\gamma_{T}}{\sum_{s=t\lor1}^{T}w_{s}\gamma_{s}}$,
we observe that
\[
\left(\sum_{t=1}^{T}w_{t}\gamma_{t}\right)(v_{1}-v_{0})=\left(\sum_{t=1}^{T}w_{t}\gamma_{t}\right)\left(\frac{w_{T}\gamma_{T}}{\sum_{s=1}^{T}w_{s}\gamma_{s}}-\frac{w_{T}\gamma_{T}}{\sum_{s=1}^{T}w_{s}\gamma_{s}}\right)=0,
\]
and for $2\leq t\leq T$,
\begin{align*}
 & w_{t-1}\gamma_{t-1}v_{t-1}-\left(\sum_{s=t}^{T}w_{s}\gamma_{s}\right)(v_{t}-v_{t-1})=\left(\sum_{s=t-1}^{T}w_{s}\gamma_{s}\right)v_{t-1}-\left(\sum_{s=t}^{T}w_{s}\gamma_{s}\right)v_{t}\\
= & \left(\sum_{s=t-1}^{T}w_{s}\gamma_{s}\right)\frac{w_{T}\gamma_{T}}{\sum_{s=t-1}^{T}w_{s}\gamma_{s}}-\left(\sum_{s=t}^{T}w_{s}\gamma_{s}\right)\frac{w_{T}\gamma_{T}}{\sum_{s=t}^{T}w_{s}\gamma_{s}}=0.
\end{align*}
These two equations immediately imply
\begin{equation}
\sum_{t=1}^{T}w_{t}\gamma_{t}v_{t}\left(F(x^{t+1})-F(z^{t})\right)\geq w_{T}\gamma_{T}v_{T}\left(F(x^{T+1})-F(x)\right).\label{eq:core-general-4}
\end{equation}
Plugging (\ref{eq:core-general-4}) into (\ref{eq:core-general-3}),
we finally get
\begin{align*}
 & w_{T}\gamma_{T}v_{T}\left(F(x^{T+1})-F(x)\right)\\
\leq & w_{1}(1-\mu_{f}\eta_{1})v_{0}D_{\psi}(x,x^{1})-w_{T}\gamma_{T}(\eta_{T}^{-1}+\mu_{h})v_{T}D_{\psi}(z^{T},x^{T+1})+\sum_{t=1}^{T}2w_{t}\gamma_{t}\eta_{t}v_{t}(M^{2}+\left\Vert \xi^{t}\right\Vert _{*}^{2})\\
 & +\sum_{t=1}^{T}w_{t}\gamma_{t}v_{t-1}\left\langle \xi^{t},z^{t-1}-x^{t}\right\rangle +\sum_{t=2}^{T}(w_{t}-w_{t-1})\gamma_{t}(\eta_{t}^{-1}-\mu_{f})v_{t-1}D_{\psi}(z^{t-1},x^{t})\\
\leq & w_{1}(1-\mu_{f}\eta_{1})v_{0}D_{\psi}(x,x^{1})+\sum_{t=1}^{T}2w_{t}\gamma_{t}\eta_{t}v_{t}(M^{2}+\left\Vert \xi^{t}\right\Vert _{*}^{2})\\
 & +\sum_{t=1}^{T}w_{t}\gamma_{t}v_{t-1}\left\langle \xi^{t},z^{t-1}-x^{t}\right\rangle +\sum_{t=2}^{T}(w_{t}-w_{t-1})\gamma_{t}(\eta_{t}^{-1}-\mu_{f})v_{t-1}D_{\psi}(z^{t-1},x^{t}).
\end{align*}
\end{proof}

\subsection{Proof of Lemma \ref{lem:core-exp}}

\begin{proof}[Proof of Lemma \ref{lem:core-exp}]
We invoke Lemma \ref{lem:core-general} with $w_{t\in\left[T\right]}=1$
to get
\[
\gamma_{T}v_{T}\left(F(x^{T+1})-F(x)\right)\leq(1-\mu_{f}\eta_{1})v_{0}D_{\psi}(x,x^{1})+\sum_{t=1}^{T}2\gamma_{t}\eta_{t}v_{t}(M^{2}+\left\Vert \xi^{t}\right\Vert _{*}^{2})+\sum_{t=1}^{T}\gamma_{t}v_{t-1}\left\langle \xi^{t},z^{t-1}-x^{t}\right\rangle .
\]
Take expectations on both sides to obtain
\begin{align*}
 & \gamma_{T}v_{T}\E\left[F(x^{T+1})-F(x)\right]\\
\leq & (1-\mu_{f}\eta_{1})v_{0}D_{\psi}(x,x^{1})+\sum_{t=1}^{T}2\gamma_{t}\eta_{t}v_{t}(M^{2}+\E\left[\left\Vert \xi^{t}\right\Vert _{*}^{2}\right])+\sum_{t=1}^{T}\gamma_{t}v_{t-1}\E\left[\left\langle \xi^{t},z^{t-1}-x^{t}\right\rangle \right]\\
\leq & (1-\mu_{f}\eta_{1})v_{0}D_{\psi}(x,x^{1})+\sum_{t=1}^{T}2\gamma_{t}\eta_{t}v_{t}(M^{2}+\sigma^{2}),
\end{align*}
where the last line is due to $\E\left[\left\Vert \xi^{t}\right\Vert _{*}^{2}\right]\leq\sigma^{2}$
(Assumption \ref{enu:A5A}) and $\E\left[\left\langle \xi^{t},z^{t-1}-x^{t}\right\rangle \right]=\E\left[\left\langle \E\left[\xi^{t}\vert\F^{t-1}\right],z^{t-1}-x^{t}\right\rangle \right]=0$
($z^{t-1}-x^{t}\in\F^{t-1}$ and Assumption \ref{enu:A4}). Finally,
we divide both sides by $\gamma_{T}v_{T}$ and plug in $v_{t\in\left\{ 0\right\} \cup\left[T\right]}=\frac{w_{T}\gamma_{T}}{\sum_{s=t\lor1}^{T}w_{s}\gamma_{s}}=\frac{\gamma_{T}}{\sum_{s=t\lor1}^{T}\gamma_{s}}$
to finish the proof.
\end{proof}

\subsection{Proof of Lemma \ref{lem:core-hp}}

\begin{proof}[Proof of Lemma \ref{lem:core-hp}]
We invoke Lemma \ref{lem:core-general} to get
\begin{align}
w_{T}\gamma_{T}v_{T}\left(F(x^{T+1})-F(x)\right)\leq & w_{1}(1-\mu_{f}\eta_{1})v_{0}D_{\psi}(x,x^{1})+\sum_{t=1}^{T}2w_{t}\gamma_{t}\eta_{t}v_{t}(M^{2}+\left\Vert \xi^{t}\right\Vert _{*}^{2})\nonumber \\
 & +\sum_{t=1}^{T}w_{t}\gamma_{t}v_{t-1}\left\langle \xi^{t},z^{t-1}-x^{t}\right\rangle +\sum_{t=2}^{T}(w_{t}-w_{t-1})\gamma_{t}(\eta_{t}^{-1}-\mu_{f})v_{t-1}D_{\psi}(z^{t-1},x^{t}).\label{eq:core-hp-1}
\end{align}
Let $w_{t\in\left[T\right]}$ be defined as follows
\begin{equation}
w_{t\in\left[T\right]}\coloneqq\frac{1}{\sum_{s=2}^{t}\frac{2\gamma_{s}\eta_{s}\bar{v}_{s}\sigma^{2}}{1-\mu_{f}\eta_{s}}+\sum_{s=1}^{T}2\gamma_{s}\eta_{s}\bar{v}_{s}\sigma^{2}},\label{eq:core-hp-w}
\end{equation}
where
\begin{equation}
\bar{v}_{t\in\left\{ 0\right\} \cup\left[T\right]}\coloneqq\frac{\gamma_{T}}{\sum_{s=t\lor1}^{T}\gamma_{s}}.\label{eq:core-hp-v-bar}
\end{equation}
Note that $w_{t\in\left[T\right]}\geq0$ is non-increasing, from the
definition of $v_{t\in\left\{ 0\right\} \cup\left[T\right]}=\frac{w_{T}\gamma_{T}}{\sum_{s=t\lor1}^{T}w_{s}\gamma_{s}}$,
there is always
\begin{equation}
v_{t}=\frac{w_{T}\gamma_{T}}{\sum_{s=t\lor1}^{T}w_{s}\gamma_{s}}\leq\frac{\gamma_{T}}{\sum_{s=t\lor1}^{T}\gamma_{s}}=\bar{v}_{t},\forall t\in\left\{ 0\right\} \cup\left[T\right].\label{eq:core-hp-v-v-bar}
\end{equation}

Now we consider the following non-negative sequence with $U_{0}\coloneqq1$
and 
\[
U_{s}\coloneqq\exp\left(\sum_{t=1}^{s}2w_{t}\gamma_{t}\eta_{t}v_{t}\left\Vert \xi^{t}\right\Vert _{*}^{2}-2w_{t}\gamma_{t}\eta_{t}v_{t}\sigma^{2}\right)\in\F^{s},\forall s\in\left[T\right].
\]
We claim $U_{t}$ is a supermartingale by observing that 
\begin{align*}
\E\left[U_{t}\mid\F^{t-1}\right] & =U_{t-1}\E\left[\exp\left(2w_{t}\gamma_{t}\eta_{t}v_{t}\left\Vert \xi^{t}\right\Vert _{*}^{2}-2w_{t}\gamma_{t}\eta_{t}v_{t}\sigma^{2}\right)\mid\F^{t-1}\right]\\
 & \overset{(a)}{\leq}U_{t-1}\exp\left(2w_{t}\gamma_{t}\eta_{t}v_{t}\sigma^{2}-2w_{t}\gamma_{t}\eta_{t}v_{t}\sigma^{2}\right)=U_{t-1},
\end{align*}
where $(a)$ holds due to Assumption \ref{enu:A5B} by noticing 
\[
2w_{t}\gamma_{t}\eta_{t}v_{t}\overset{(\ref{eq:core-hp-w})}{=}\frac{2\gamma_{t}\eta_{t}v_{t}}{\sum_{s=2}^{t}\frac{2\gamma_{s}\eta_{s}\bar{v}_{s}\sigma^{2}}{1-\mu_{f}\eta_{s}}+\sum_{s=1}^{T}2\gamma_{s}\eta_{s}\bar{v}_{s}\sigma^{2}}\leq\frac{v_{t}}{\bar{v}_{t}\sigma^{2}}\overset{(\ref{eq:core-hp-v-v-bar})}{\leq}\frac{1}{\sigma^{2}}.
\]
Hence, we know $\E\left[U_{T}\right]\leq U_{0}=1$. Thus, there is
\begin{align}
 & \Pr\left[U_{T}>\frac{2}{\delta}\right]\overset{(b)}{\leq}\frac{\delta}{2}\E\left[U_{T}\right]\leq\frac{\delta}{2}\nonumber \\
\Rightarrow & \Pr\left[\sum_{t=1}^{T}2w_{t}\gamma_{t}\eta_{t}v_{t}\left\Vert \xi^{t}\right\Vert _{*}^{2}\leq\sum_{t=1}^{T}2w_{t}\gamma_{t}\eta_{t}v_{t}\sigma^{2}+\log\frac{2}{\delta}\right]\geq1-\frac{\delta}{2},\label{eq:core-hp-2}
\end{align}
where we use Markov's inequality in $(b)$.

Next, we consider another non-negative sequence with $R_{0}\coloneqq1$
and 
\[
R_{s}\coloneqq\exp\left(\sum_{t=1}^{s}w_{t}\gamma_{t}v_{t-1}\left\langle \xi^{t},z^{t-1}-x^{t}\right\rangle -w_{t}^{2}\gamma_{t}^{2}v_{t-1}^{2}\sigma^{2}\left\Vert z^{t-1}-x^{t}\right\Vert ^{2}\right)\in\F^{s},\forall s\in\left[T\right].
\]
We prove that $R_{t}$ is also a supermartingale by
\begin{align*}
\E\left[R_{t}\mid\F^{t-1}\right] & =R_{t-1}\E\left[\exp\left(w_{t}\gamma_{t}v_{t-1}\left\langle \xi^{t},z^{t-1}-x^{t}\right\rangle -w_{t}^{2}\gamma_{t}^{2}v_{t-1}^{2}\sigma^{2}\left\Vert z^{t-1}-x^{t}\right\Vert ^{2}\right)\mid\F^{t-1}\right]\\
 & \overset{(c)}{\leq}R_{t-1}\exp\left(w_{t}^{2}\gamma_{t}^{2}v_{t-1}^{2}\sigma^{2}\left\Vert z^{t-1}-x^{t}\right\Vert ^{2}-w_{t}^{2}\gamma_{t}^{2}v_{t-1}^{2}\sigma^{2}\left\Vert z^{t-1}-x^{t}\right\Vert ^{2}\right)=R_{t-1},
\end{align*}
where $(c)$ is by applying Lemma \ref{lem:gaussian} (note that $z^{t-1}-x^{t}\in\F^{t-1}=\sigma(\hg^{s},s\in\left[t-1\right])$).
Hence, we have $\E\left[R_{T}\right]\leq R_{0}=1$, which immediately
implies
\begin{align}
 & \Pr\left[R_{T}>\frac{2}{\delta}\right]\overset{(d)}{\leq}\frac{\delta}{2}\E\left[R_{T}\right]\leq\frac{\delta}{2}\nonumber \\
\Rightarrow & \Pr\left[\sum_{t=1}^{T}w_{t}\gamma_{t}v_{t-1}\left\langle \xi^{t},z^{t-1}-x^{t}\right\rangle \leq\sum_{t=1}^{T}w_{t}^{2}\gamma_{t}^{2}v_{t-1}^{2}\sigma^{2}\left\Vert z^{t-1}-x^{t}\right\Vert ^{2}+\log\frac{2}{\delta}\right]\geq1-\frac{\delta}{2}\nonumber \\
\Rightarrow & \Pr\left[\sum_{t=1}^{T}w_{t}\gamma_{t}v_{t-1}\left\langle \xi^{t},z^{t-1}-x^{t}\right\rangle \leq\sum_{t=1}^{T}2w_{t}^{2}\gamma_{t}^{2}v_{t-1}^{2}\sigma^{2}D_{\psi}(z^{t-1},x^{t})+\log\frac{2}{\delta}\right]\geq1-\frac{\delta}{2},\label{eq:core-hp-3}
\end{align}
where $(d)$ is by Markov's inequality and the last line is due to
$\left\Vert z^{t-1}-x^{t}\right\Vert ^{2}\leq2D_{\psi}(z^{t-1},x^{t})$
from the $1$-strong convexity of $\psi$.

Combining (\ref{eq:core-hp-1}), (\ref{eq:core-hp-2}) and (\ref{eq:core-hp-3}),
with probability at least $1-\delta$, there is
\begin{align*}
w_{T}\gamma_{T}v_{T}\left(F(x^{T+1})-F(x)\right)\leq & w_{1}(1-\mu_{f}\eta_{1})v_{0}D_{\psi}(x,x^{1})+2\log\frac{2}{\delta}+\sum_{t=1}^{T}2w_{t}\gamma_{t}\eta_{t}v_{t}(M^{2}+\sigma^{2})\\
 & +\sum_{t=1}^{T}2w_{t}^{2}\gamma_{t}^{2}v_{t-1}^{2}\sigma^{2}D_{\psi}(z^{t-1},x^{t})+\sum_{t=2}^{T}(w_{t}-w_{t-1})\gamma_{t}(\eta_{t}^{-1}-\mu_{f})v_{t-1}D_{\psi}(z^{t-1},x^{t})\\
= & \left[w_{1}(1-\mu_{f}\eta_{1})v_{0}+2w_{1}^{2}\gamma_{1}^{2}v_{0}^{2}\sigma^{2}\right]D_{\psi}(x,x^{1})+2\log\frac{2}{\delta}+\sum_{t=1}^{T}2w_{t}\gamma_{t}\eta_{t}v_{t}(M^{2}+\sigma^{2})\\
 & +\sum_{t=2}^{T}\left[(w_{t}-w_{t-1})(\eta_{t}^{-1}-\mu_{f})+2w_{t}^{2}\gamma_{t}v_{t-1}\sigma^{2}\right]\gamma_{t}v_{t-1}D_{\psi}(z^{t-1},x^{t}).
\end{align*}
Observe that for $t\geq2$
\begin{align*}
 & (w_{t}-w_{t-1})(\eta_{t}^{-1}-\mu_{f})+2w_{t}^{2}\gamma_{t}v_{t-1}\sigma^{2}\\
\overset{(\ref{eq:core-hp-w})}{=} & 2w_{t}^{2}\gamma_{t}v_{t-1}\sigma^{2}-\left(\frac{1}{\sum_{s=2}^{t-1}\frac{2\gamma_{s}\eta_{s}\bar{v}_{s}\sigma^{2}}{1-\mu_{f}\eta_{s}}+\sum_{s=1}^{T}2\gamma_{s}\eta_{s}\bar{v}_{s}\sigma^{2}}-\frac{1}{\sum_{s=2}^{t}\frac{2\gamma_{s}\eta_{s}\bar{v}_{s}\sigma^{2}}{1-\mu_{f}\eta_{s}}+\sum_{s=1}^{T}2\gamma_{s}\eta_{s}\bar{v}_{s}\sigma^{2}}\right)(\eta_{t}^{-1}-\mu_{f})\\
= & 2w_{t}^{2}\gamma_{t}v_{t-1}\sigma^{2}-w_{t}w_{t-1}\times\frac{2\gamma_{t}\eta_{t}\bar{v}_{t}\sigma^{2}}{1-\mu_{f}\eta_{t}}\times(\eta_{t}^{-1}-\mu_{f})=2w_{t}(w_{t}v_{t-1}-w_{t-1}\bar{v}_{t})\gamma_{t}\sigma^{2}\leq0,
\end{align*}
where the last line holds due to $w_{t}\leq w_{t-1}$ and $v_{t-1}\leq v_{t}\leq\bar{v}_{t}$.
So, we know
\begin{align*}
w_{T}\gamma_{T}v_{T}\left(F(x^{T+1})-F(x)\right) & \leq\left[w_{1}(1-\mu_{f}\eta_{1})v_{0}+2w_{1}^{2}\gamma_{1}^{2}v_{0}^{2}\sigma^{2}\right]D_{\psi}(x,x^{1})+2\log\frac{2}{\delta}+\sum_{t=1}^{T}2w_{t}\gamma_{t}\eta_{t}v_{t}(M^{2}+\sigma^{2})\\
 & \overset{(e)}{\leq}w_{1}\left(1-\mu_{f}\eta_{1}+2w_{1}\gamma_{1}^{2}v_{0}\sigma^{2}\right)v_{0}D_{\psi}(x,x^{1})+2\log\frac{2}{\delta}+w_{1}\sum_{t=1}^{T}2\gamma_{t}\eta_{t}v_{t}(M^{2}+\sigma^{2}),
\end{align*}
where $(e)$ is by $w_{t}\leq w_{1},\forall t\in\left[T\right]$.
Dividing both sides by $w_{T}\gamma_{T}v_{T}$, we get
\begin{align*}
F(x^{T+1})-F(x) & \leq\frac{w_{1}}{w_{T}}\left[(1-\mu_{f}\eta_{1}+2w_{1}\gamma_{1}^{2}v_{0}\sigma^{2})\frac{v_{0}}{\gamma_{T}v_{T}}D_{\psi}(x,x^{1})+\frac{2}{w_{1}\gamma_{T}v_{T}}\log\frac{2}{\delta}+2\sum_{t=1}^{T}\frac{\gamma_{t}\eta_{t}v_{t}}{\gamma_{T}v_{T}}(M^{2}+\sigma^{2})\right]\\
 & \overset{(f)}{\leq}\left(1+\max_{2\leq t\leq T}\frac{1}{1-\mu_{f}\eta_{t}}\right)\left[\frac{(2-\mu_{f}\eta_{1})D_{\psi}(x,x^{1})}{\sum_{t=1}^{T}\gamma_{t}}+2\left(M^{2}+\sigma^{2}\left(1+2\log\frac{2}{\delta}\right)\right)\sum_{t=1}^{T}\frac{\gamma_{t}\eta_{t}}{\sum_{s=t}^{T}\gamma_{s}}\right]\\
 & \leq2\left(1+\max_{2\leq t\leq T}\frac{1}{1-\mu_{f}\eta_{t}}\right)\left[\frac{D_{\psi}(x,x^{1})}{\sum_{t=1}^{T}\gamma_{t}}+\left(M^{2}+\sigma^{2}\left(1+2\log\frac{2}{\delta}\right)\right)\sum_{t=1}^{T}\frac{\gamma_{t}\eta_{t}}{\sum_{s=t}^{T}\gamma_{s}}\right],
\end{align*}
where $(f)$ holds due to the following calculations
\begin{align*}
\frac{w_{1}}{w_{T}} & \overset{(\ref{eq:core-hp-w})}{=}\frac{\sum_{s=1}^{T}2\gamma_{s}\eta_{s}\bar{v}_{s}\sigma^{2}+\sum_{s=2}^{T}\frac{2\gamma_{s}\eta_{s}\bar{v}_{s}\sigma^{2}}{1-\mu_{f}\eta_{s}}}{\sum_{s=1}^{T}2\gamma_{s}\eta_{s}\bar{v}_{s}\sigma^{2}}\leq1+\max_{2\leq t\leq T}\frac{1}{1-\mu_{f}\eta_{t}};\\
2w_{1}\gamma_{1}^{2}v_{0}\sigma^{2} & \overset{(\ref{eq:core-hp-w})}{=}\frac{2\gamma_{1}^{2}v_{0}\sigma^{2}}{\sum_{s=1}^{T}2\gamma_{s}\eta_{s}\bar{v}_{s}\sigma^{2}}\overset{\gamma_{1}=\eta_{1},v_{0}=v_{1}}{=}\frac{2\gamma_{1}\eta_{1}v_{1}\sigma^{2}}{\sum_{s=1}^{T}2\gamma_{s}\eta_{s}\bar{v}_{s}\sigma^{2}}\overset{(\ref{eq:core-hp-v-v-bar})}{\leq}1;\\
\frac{v_{0}}{\gamma_{T}v_{T}} & \overset{(\ref{eq:core-hp-v-v-bar})}{\leq}\frac{\bar{v}_{0}}{\gamma_{T}v_{T}}\overset{(\ref{eq:core-hp-v-bar}),v_{T}=1}{=}\frac{1}{\sum_{t=1}^{T}\gamma_{t}};\\
\frac{2}{w_{1}\gamma_{T}v_{T}}\log\frac{2}{\delta} & \overset{(\ref{eq:core-hp-w})}{=}4\sigma^{2}\log\frac{2}{\delta}\sum_{t=1}^{T}\frac{\gamma_{t}\eta_{t}\bar{v}_{t}}{\gamma_{T}v_{T}}\overset{(\ref{eq:core-hp-v-bar}),v_{T}=1}{=}4\sigma^{2}\log\frac{2}{\delta}\sum_{t=1}^{T}\frac{\gamma_{t}\eta_{t}}{\sum_{s=t}^{T}\gamma_{s}};\\
2\sum_{t=1}^{T}\frac{\gamma_{t}\eta_{t}v_{t}}{\gamma_{T}v_{T}}(M^{2}+\sigma^{2}) & \overset{(\ref{eq:core-hp-v-v-bar})}{\leq}2(M^{2}+\sigma^{2})\sum_{t=1}^{T}\frac{\gamma_{t}\eta_{t}\bar{v}_{t}}{\gamma_{T}v_{T}}\overset{(\ref{eq:core-hp-v-bar}),v_{T}=1}{=}2(M^{2}+\sigma^{2})\sum_{t=1}^{T}\frac{\gamma_{t}\eta_{t}}{\sum_{s=t}^{T}\gamma_{s}}.
\end{align*}
Hence, the proof is completed.
\end{proof}

\section{General Convex Functions\label{sec:cvx}}

In this section, we present the full version of the theorems for general
convex functions (i.e., $\mu_{f}=\mu_{h}=0$) with their proofs.
\begin{thm}[Full version of Theorem \ref{thm:main-cvx-exp}]
\label{thm:cvx-exp}Under Assumptions \ref{enu:A2}-\ref{enu:A4}
and \ref{enu:A5A} with $\mu_{f}=\mu_{h}=0$, for any $x\in\dom$:

If $T$ is unknown, by taking $\eta_{t\in\left[T\right]}=\frac{1}{2L}\land\frac{\eta}{\sqrt{t}}$,
there is
\[
\E\left[F(x^{T+1})-F(x)\right]\leq O\left(\frac{LD_{\psi}(x,x^{1})}{T}+\frac{1}{\sqrt{T}}\left[\frac{D_{\psi}(x,x^{1})}{\eta}+\eta(M^{2}+\sigma^{2})\log T\right]\right).
\]
In particular, by choosing $\eta=\Theta\left(\sqrt{\frac{D_{\psi}(x,x^{1})}{M^{2}+\sigma^{2}}}\right)$,
there is
\[
\E\left[F(x^{T+1})-F(x)\right]\leq O\left(\frac{LD_{\psi}(x,x^{1})}{T}+\frac{(M+\sigma)\sqrt{D_{\psi}(x,x^{1})}\log T}{\sqrt{T}}\right).
\]

If $T$ is known, by taking $\eta_{t\in\left[T\right]}=\frac{1}{2L}\land\frac{\eta}{\sqrt{T}}$,
there is
\[
\E\left[F(x^{T+1})-F(x)\right]\leq O\left(\frac{LD_{\psi}(x,x^{1})}{T}+\frac{1}{\sqrt{T}}\left[\frac{D_{\psi}(x,x^{1})}{\eta}+\eta(M^{2}+\sigma^{2})\log T\right]\right).
\]
In particular, by choosing $\eta=\Theta\left(\sqrt{\frac{D_{\psi}(x,x^{1})}{(M^{2}+\sigma^{2})\log T}}\right)$,
there is
\[
\E\left[F(x^{T+1})-F(x)\right]\leq O\left(\frac{LD_{\psi}(x,x^{1})}{T}+\frac{(M+\sigma)\sqrt{D_{\psi}(x,x^{1})\log T}}{\sqrt{T}}\right).
\]
\end{thm}

\begin{proof}
From Lemma \ref{lem:core-exp}, if $\eta_{t\in\left[T\right]}\leq\frac{1}{2L\lor\mu_{f}}$,
there is
\begin{equation}
\E\left[F(x^{T+1})-F(x)\right]\leq\frac{(1-\mu_{f}\eta_{1})D_{\psi}(x,x^{1})}{\sum_{t=1}^{T}\gamma_{t}}+2(M^{2}+\sigma^{2})\sum_{t=1}^{T}\frac{\gamma_{t}\eta_{t}}{\sum_{s=t}^{T}\gamma_{s}},\label{eq:cvx-exp-1}
\end{equation}
where $\gamma_{t\in\left[T\right]}=\eta_{t}\prod_{s=2}^{t}\frac{1+\mu_{h}\eta_{s-1}}{1-\mu_{f}\eta_{s}}$.
Note that $\mu_{f}=\mu_{h}=0$ now, so both $\eta_{t\in\left[T\right]}=\frac{1}{2L}\land\frac{\eta}{\sqrt{t}}$
and $\eta_{t\in\left[T\right]}=\frac{1}{2L}\land\frac{\eta}{\sqrt{T}}$
satisfy $\eta_{t\in\left[T\right]}\leq\frac{1}{2L\lor\mu_{f}}=\frac{1}{2L}$.
Besides, $\gamma_{t\in\left[T\right]}$ will degenerate to $\eta_{t\in\left[T\right]}$.
Therefore, (\ref{eq:cvx-exp-1}) can be simplified into
\begin{equation}
\E\left[F(x^{T+1})-F(x)\right]\leq\frac{D_{\psi}(x,x^{1})}{\sum_{t=1}^{T}\eta_{t}}+2(M^{2}+\sigma^{2})\sum_{t=1}^{T}\frac{\eta_{t}^{2}}{\sum_{s=t}^{T}\eta_{s}}.\label{eq:cvx-exp-2}
\end{equation}
Before proving convergence rates for these two different step sizes,
we first recall some standard results.
\begin{align}
\sum_{t=1}^{T}\frac{1}{\sqrt{t}} & =\sum_{t=1}^{T}\sqrt{t}-\frac{t-1}{\sqrt{t}}=\sqrt{T}+\sum_{t=1}^{T-1}\sqrt{t}-\frac{t}{\sqrt{t+1}}\geq\sqrt{T};\label{eq:cvx-exp-ineq-1}\\
\sum_{s=t}^{T}\frac{1}{\sqrt{s}} & \geq\int_{t}^{T+1}\frac{1}{\sqrt{s}}\mathrm{d}s=2(\sqrt{T+1}-\sqrt{t}),\forall t\in\left[T\right];\label{eq:cvx-exp-ineq-2}\\
\sum_{t=1}^{T}\frac{1}{t} & \leq1+\int_{1}^{T}\frac{1}{t}\mathrm{d}t=1+\log T.\label{eq:cvx-exp-ineq-3}
\end{align}

If $\eta_{t\in\left[T\right]}=\frac{1}{2L}\land\frac{\eta}{\sqrt{t}}$,
we consider the following three cases:
\begin{itemize}
\item $\eta<\frac{1}{2L}$: In this case, we have $\eta_{t\in\left[T\right]}=\frac{\eta}{\sqrt{t}}$
and
\begin{align}
\E\left[F(x^{T+1})-F(x)\right] & \leq\frac{D_{\psi}(x,x^{1})}{\eta\sum_{t=1}^{T}1/\sqrt{t}}+2\eta(M^{2}+\sigma^{2})\sum_{t=1}^{T}\frac{1}{t\sum_{s=t}^{T}1/\sqrt{s}}\nonumber \\
 & \overset{(\ref{eq:cvx-exp-ineq-1}),(\ref{eq:cvx-exp-ineq-2})}{\leq}\frac{D_{\psi}(x,x^{1})}{\eta\sqrt{T}}+\eta(M^{2}+\sigma^{2})\sum_{t=1}^{T}\frac{1}{t(\sqrt{T+1}-\sqrt{t})}\nonumber \\
 & \overset{(a)}{\leq}\frac{D_{\psi}(x,x^{1})}{\eta\sqrt{T}}+\frac{4\eta(M^{2}+\sigma^{2})(1+\log T)}{\sqrt{T}}\nonumber \\
 & =\frac{1}{\sqrt{T}}\left[\frac{D_{\psi}(x,x^{1})}{\eta}+4\eta(M^{2}+\sigma^{2})(1+\log T)\right],\label{eq:cvx-exp-3}
\end{align}
where $(a)$ is by
\begin{align*}
\sum_{t=1}^{T}\frac{1}{t(\sqrt{T+1}-\sqrt{t})} & =\sum_{t=1}^{T}\frac{\sqrt{T+1}+\sqrt{t}}{t(T+1-t)}\leq\sum_{t=1}^{T}\frac{2\sqrt{T+1}}{t(T+1-t)}\\
 & =\sum_{t=1}^{T}\frac{2}{\sqrt{T+1}}\left(\frac{1}{t}+\frac{1}{T+1-t}\right)=\frac{4}{\sqrt{T+1}}\sum_{t=1}^{T}\frac{1}{t}\overset{(\ref{eq:cvx-exp-ineq-3})}{\leq}\frac{4(1+\log T)}{\sqrt{T}}.
\end{align*}
\item $\eta\geq\frac{\sqrt{T}}{2L}$: In this case, we have $\eta_{t\in\left[T\right]}=\frac{1}{2L}$
and 
\begin{align}
\E\left[F(x^{T+1})-F(x)\right] & \leq\frac{D_{\psi}(x,x^{1})}{T/2L}+\frac{M^{2}+\sigma^{2}}{L}\sum_{t=1}^{T}\frac{1}{T-t+1}=\frac{2LD_{\psi}(x,x^{1})}{T}+\frac{M^{2}+\sigma^{2}}{L}\sum_{t=1}^{T}\frac{1}{t}\nonumber \\
 & \overset{(\ref{eq:cvx-exp-ineq-3})}{\leq}\frac{2LD_{\psi}(x,x^{1})}{T}+\frac{(M^{2}+\sigma^{2})(1+\log T)}{L}\nonumber \\
 & \overset{(b)}{\leq}\frac{2LD_{\psi}(x,x^{1})}{T}+\frac{2\eta(M^{2}+\sigma^{2})(1+\log T)}{\sqrt{T}},\label{eq:cvx-exp-4}
\end{align}
where $(b)$ is by $\frac{1}{L}\leq\frac{2\eta}{\sqrt{T}}$.
\item $\eta\in[\frac{1}{2L},\frac{\sqrt{T}}{2L})$: In this case, we define
$\tau=\lfloor4\eta^{2}L^{2}\rfloor$ where $\lfloor\cdot\rfloor$
is the floor function. Note that 
\[
4\eta^{2}L^{2}\in[1,T)\Rightarrow\tau=\lfloor4\eta^{2}L^{2}\rfloor\in\left[T-1\right].
\]
By observing $\frac{\eta}{\sqrt{t}}\geq\frac{1}{2L}\Leftrightarrow t\in\left[1,\tau\right]$,
we can calculate
\begin{align*}
\E\left[F(x^{T+1})-F(x)\right] & \leq\frac{D_{\psi}(x,x^{1})}{\sum_{t=1}^{T}\eta_{t}}+2(M^{2}+\sigma^{2})\sum_{t=1}^{T}\frac{\eta_{t}^{2}}{\sum_{s=t}^{T}\eta_{s}}\\
 & \overset{(c)}{\leq}\frac{D_{\psi}(x,x^{1})}{T^{2}}\underbrace{\sum_{t=1}^{T}\frac{1}{\eta_{t}}}_{\text{I}}+2(M^{2}+\sigma^{2})\left(\underbrace{\sum_{t=1}^{\tau}\frac{\eta_{t}^{2}}{\sum_{s=t}^{T}\eta_{s}}}_{\text{II}}+\underbrace{\sum_{t=\tau+1}^{T}\frac{\eta_{t}^{2}}{\sum_{s=t}^{T}\eta_{s}}}_{\text{III}}\right),
\end{align*}
where $(c)$ is by $T^{2}\leq\left(\sum_{t=1}^{T}\eta_{t}\right)\left(\sum_{t=1}^{T}\frac{1}{\eta_{t}}\right)$.
Now we bound terms I, II and III as follows
\[
\text{I}=\sum_{t=1}^{T}2L\lor\frac{\sqrt{t}}{\eta}\leq\sum_{t=1}^{T}2L+\frac{\sqrt{t}}{\eta}\leq2LT+\frac{\sqrt{T}+\int_{1}^{T}\sqrt{t}\mathrm{d}t}{\eta}=2LT+\frac{\sqrt{T}+\frac{2}{3}(T^{\frac{3}{2}}-1)}{\eta}\leq2LT+\frac{5T^{\frac{3}{2}}}{3\eta}.
\]
\begin{align*}
\text{II} & =\sum_{t=1}^{\tau}\frac{\eta_{t}^{2}}{\sum_{s=t}^{\tau}\eta_{s}+\sum_{s=\tau+1}^{T}\eta_{s}}=\sum_{t=1}^{\tau}\frac{1/(4L^{2})}{(\tau-t+1)/2L+\sum_{s=\tau+1}^{T}\eta/\sqrt{s}}\\
 & =\frac{1}{2L}\sum_{t=1}^{\tau}\frac{1}{\tau-t+1+\sum_{s=\tau+1}^{T}2\eta L/\sqrt{s}}\overset{(\ref{eq:cvx-exp-ineq-2})}{\leq}\frac{1}{2L}\sum_{t=1}^{\tau}\frac{1}{\tau-t+1+4\eta L(\sqrt{T+1}-\sqrt{\tau+1})}\\
 & \leq\begin{cases}
\frac{1}{2L}\sum_{t=1}^{\tau}\frac{1}{\tau-t+1}\leq\frac{1}{2L}\left(1+\int_{1}^{\tau}\frac{1}{t}\mathrm{d}t\right)=\frac{1+\log\tau}{2L}\overset{(d)}{\leq}\frac{\eta(1+\log T)}{\sqrt{\tau}}\\
\frac{1}{2L}\sum_{t=1}^{\tau}\frac{1}{4\eta L(\sqrt{T+1}-\sqrt{\tau+1})}=\frac{\tau}{8\eta L^{2}(\sqrt{T+1}-\sqrt{\tau+1})}\overset{(e)}{\leq}\frac{\eta}{2(\sqrt{T+1}-\sqrt{\tau+1})}
\end{cases}\\
\Rightarrow\text{II} & \leq\eta(1+\log T)\left(\frac{1}{\sqrt{\tau}}\land\frac{1}{2(\sqrt{T+1}-\sqrt{\tau+1})}\right)\leq\frac{2\eta(1+\log T)}{\sqrt{\tau}+2(\sqrt{T+1}-\sqrt{\tau+1})}\overset{(f)}{\leq}\frac{2\eta(1+\log T)}{\sqrt{T}},
\end{align*}
where $(d)$ is due to $\tau\leq T$ and $\sqrt{\tau}\leq2\eta L$,
$(e)$ holds by $\tau\leq4\eta^{2}L^{2}$ and $(f)$ is by, for $\tau\in\left[T-1\right]$
and $T\geq2$,
\[
\sqrt{\tau}+2(\sqrt{T+1}-\sqrt{\tau+1})\geq\sqrt{T-1}+2\sqrt{T+1}-2\sqrt{T}\geq\sqrt{T}.
\]
\begin{align*}
\text{III} & =\eta\sum_{t=\tau+1}^{T}\frac{1}{t\sum_{s=t}^{T}1/\sqrt{s}}\overset{(\ref{eq:cvx-exp-ineq-2})}{\leq}\eta\sum_{t=\tau+1}^{T}\frac{1}{2t(\sqrt{T+1}-\sqrt{t})}=\eta\sum_{t=\tau+1}^{T}\frac{\sqrt{T+1}+\sqrt{t}}{2t(T+1-t)}\\
 & \leq\eta\sum_{t=\tau+1}^{T}\frac{\sqrt{T+1}}{t(T+1-t)}=\eta\sum_{t=\tau+1}^{T}\frac{1}{\sqrt{T+1}}\left(\frac{1}{t}+\frac{1}{T+1-t}\right)\leq\frac{2\eta}{\sqrt{T+1}}\sum_{t=1}^{T}\frac{1}{t}\overset{(\ref{eq:cvx-exp-ineq-3})}{\leq}\frac{2\eta(1+\log T)}{\sqrt{T}}.
\end{align*}
Thus, we have
\begin{align}
\E\left[F(x^{T+1})-F(x)\right] & \leq\frac{D_{\psi}(x,x^{1})}{T^{2}}\left(2LT+\frac{5T^{\frac{3}{2}}}{3\eta}\right)+2(M^{2}+\sigma^{2})\left[\frac{2\eta(1+\log T)}{\sqrt{T}}+\frac{2\eta(1+\log T)}{\sqrt{T}}\right]\nonumber \\
 & \leq\frac{2LD_{\psi}(x,x^{1})}{T}+\frac{1}{\sqrt{T}}\left(\frac{5D_{\psi}(x,x^{1})}{3\eta}+8\eta(M^{2}+\sigma^{2})(1+\log T)\right).\label{eq:cvx-exp-5}
\end{align}
\end{itemize}
Combining (\ref{eq:cvx-exp-3}), (\ref{eq:cvx-exp-4}) and (\ref{eq:cvx-exp-5}),
we know
\begin{align}
\E\left[F(x^{T+1})-F(x)\right]\leq & \frac{1}{\sqrt{T}}\left[\frac{D_{\psi}(x,x^{1})}{\eta}+4\eta(M^{2}+\sigma^{2})(1+\log T)\right]\lor\left[\frac{2LD_{\psi}(x,x^{1})}{T}+\frac{2\eta(M^{2}+\sigma^{2})(1+\log T)}{\sqrt{T}}\right]\nonumber \\
 & \lor\left[\frac{2LD_{\psi}(x,x^{1})}{T}+\frac{1}{\sqrt{T}}\left(\frac{5D_{\psi}(x,x^{1})}{3\eta}+8\eta(M^{2}+\sigma^{2})(1+\log T)\right)\right]\nonumber \\
\leq & \frac{2LD_{\psi}(x,x^{1})}{T}+\frac{1}{\sqrt{T}}\left(\frac{5D_{\psi}(x,x^{1})}{3\eta}+8\eta(M^{2}+\sigma^{2})(1+\log T)\right)\nonumber \\
= & O\left(\frac{LD_{\psi}(x,x^{1})}{T}+\frac{1}{\sqrt{T}}\left[\frac{D_{\psi}(x,x^{1})}{\eta}+\eta(M^{2}+\sigma^{2})\log T\right]\right).\label{eq:cvx-exp-6}
\end{align}
By plugging in $\eta=\Theta\left(\sqrt{\frac{D_{\psi}(x,x^{1})}{M^{2}+\sigma^{2}}}\right)$,
we get the desired bound.

If $\eta_{t\in\left[T\right]}=\frac{1}{2L}\land\frac{\eta}{\sqrt{T}}$,
we will obtain
\begin{align}
\E\left[F(x^{T+1})-F(x)\right] & \leq\frac{D_{\psi}(x,x^{1})}{T}\left(2L\lor\frac{\sqrt{T}}{\eta}\right)+2\left(\frac{1}{2L}\land\frac{\eta}{\sqrt{T}}\right)(M^{2}+\sigma^{2})\sum_{t=1}^{T}\frac{1}{T-t+1}\nonumber \\
 & =\frac{D_{\psi}(x,x^{1})}{T}\left(2L\lor\frac{\sqrt{T}}{\eta}\right)+2\left(\frac{1}{2L}\land\frac{\eta}{\sqrt{T}}\right)(M^{2}+\sigma^{2})\sum_{t=1}^{T}\frac{1}{t}\nonumber \\
 & \overset{(\ref{eq:cvx-exp-ineq-3})}{\leq}\frac{D_{\psi}(x,x^{1})}{T}\left(2L\lor\frac{\sqrt{T}}{\eta}\right)+2\left(\frac{1}{2L}\land\frac{\eta}{\sqrt{T}}\right)(M^{2}+\sigma^{2})(1+\log T)\nonumber \\
 & \leq\frac{2LD_{\psi}(x,x^{1})}{T}+\frac{1}{\sqrt{T}}\left[\frac{D_{\psi}(x,x^{1})}{\eta}+2\eta(M^{2}+\sigma^{2})(1+\log T)\right]\nonumber \\
 & =O\left(\frac{LD_{\psi}(x,x^{1})}{T}+\frac{1}{\sqrt{T}}\left[\frac{D_{\psi}(x,x^{1})}{\eta}+\eta(M^{2}+\sigma^{2})\log T\right]\right).\label{eq:cvx-exp-7}
\end{align}
By plugging in $\eta=\Theta\left(\sqrt{\frac{D_{\psi}(x,x^{1})}{(M^{2}+\sigma^{2})\log T}}\right)$,
we get the desired bound.
\end{proof}

\begin{thm}[Full version of Theorem \ref{thm:main-cvx-hp}]
\label{thm:cvx-hp}Under Assumptions \ref{enu:A2}-\ref{enu:A4}
and \ref{enu:A5B} with $\mu_{f}=\mu_{h}=0$ and let $\delta\in(0,1)$,
for any $x\in\dom$:

If $T$ is unknown, by taking $\eta_{t\in\left[T\right]}=\frac{1}{2L}\land\frac{\eta}{\sqrt{t}}$,
then with probability at least $1-\delta$, there is
\[
F(x^{T+1})-F(x)\leq O\left(\frac{LD_{\psi}(x,x^{1})}{T}+\frac{1}{\sqrt{T}}\left[\frac{D_{\psi}(x,x^{1})}{\eta}+\eta\left(M^{2}+\sigma^{2}\log\frac{1}{\delta}\right)\log T\right]\right).
\]
In particular, by choosing $\eta=\Theta\left(\sqrt{\frac{D_{\psi}(x,x^{1})}{M^{2}+\sigma^{2}\log\frac{1}{\delta}}}\right)$,
there is
\[
F(x^{T+1})-F(x)\leq O\left(\frac{LD_{\psi}(x,x^{1})}{T}+\frac{(M+\sigma\sqrt{\log\frac{1}{\delta}})\sqrt{D_{\psi}(x,x^{1})}\log T}{\sqrt{T}}\right).
\]

If $T$ is known, by taking $\eta_{t\in\left[T\right]}=\frac{1}{2L}\land\frac{\eta}{\sqrt{T}}$,
then with probability at least $1-\delta$, there is
\[
F(x^{T+1})-F(x)\leq O\left(\frac{LD_{\psi}(x,x^{1})}{T}+\frac{1}{\sqrt{T}}\left[\frac{D_{\psi}(x,x^{1})}{\eta}+\eta\left(M^{2}+\sigma^{2}\log\frac{1}{\delta}\right)\log T\right]\right).
\]
In particular, by choosing $\eta=\Theta\left(\sqrt{\frac{D_{\psi}(x,x^{1})}{(M^{2}+\sigma^{2}\log\frac{1}{\delta})\log T}}\right)$,
there is
\[
F(x^{T+1})-F(x)\leq O\left(\frac{LD_{\psi}(x,x^{1})}{T}+\frac{(M+\sigma\sqrt{\log\frac{1}{\delta}})\sqrt{D_{\psi}(x,x^{1})\log T}}{\sqrt{T}}\right).
\]
\end{thm}

\begin{proof}
From Lemma \ref{lem:core-hp}, if $\eta_{t\in\left[T\right]}\leq\frac{1}{2L\lor\mu_{f}}$,
with probability at least $1-\delta$, there is
\begin{equation}
F(x^{T+1})-F(x)\leq2\left(1+\max_{2\leq t\leq T}\frac{1}{1-\mu_{f}\eta_{t}}\right)\left[\frac{D_{\psi}(x,x^{1})}{\sum_{t=1}^{T}\gamma_{t}}+\left(M^{2}+\sigma^{2}\left(1+2\log\frac{2}{\delta}\right)\right)\sum_{t=1}^{T}\frac{\gamma_{t}\eta_{t}}{\sum_{s=t}^{T}\gamma_{s}}\right],\label{eq:cvx-hp-1}
\end{equation}
where $\gamma_{t\in\left[T\right]}=\eta_{t}\prod_{s=2}^{t}\frac{1+\mu_{h}\eta_{s-1}}{1-\mu_{f}\eta_{s}}$.
Note that $\mu_{f}=\mu_{h}=0$ now, so both $\eta_{t\in\left[T\right]}=\frac{1}{2L}\land\frac{\eta}{\sqrt{t}}$
and $\eta_{t\in\left[T\right]}=\frac{1}{2L}\land\frac{\eta}{\sqrt{T}}$
satisfy $\eta_{t\in\left[T\right]}\leq\frac{1}{2L\lor\mu_{f}}=\frac{1}{2L}$.
Besides, $\gamma_{t\in\left[T\right]}$ will degenerate to $\eta_{t\in\left[T\right]}$.
Then we can simplify (\ref{eq:cvx-hp-1}) into
\begin{equation}
F(x^{T+1})-F(x)\leq\frac{4D_{\psi}(x,x^{1})}{\sum_{t=1}^{T}\eta_{t}}+4\left(M^{2}+\sigma^{2}\left(1+2\log\frac{2}{\delta}\right)\right)\sum_{t=1}^{T}\frac{\eta_{t}^{2}}{\sum_{s=t}^{T}\eta_{s}}.\label{eq:cvx-hp-2}
\end{equation}

If $\eta_{t\in\left[T\right]}=\frac{1}{2L}\land\frac{\eta}{\sqrt{t}}$,
similar to (\ref{eq:cvx-exp-6}), we will have
\[
F(x^{T+1})-F(x)\leq O\left(\frac{LD_{\psi}(x,x^{1})}{T}+\frac{1}{\sqrt{T}}\left[\frac{D_{\psi}(x,x^{1})}{\eta}+\eta\left(M^{2}+\sigma^{2}\log\frac{1}{\delta}\right)\log T\right]\right).
\]
By plugging in $\eta=\Theta\left(\sqrt{\frac{D_{\psi}(x,x^{1})}{M^{2}+\sigma^{2}\log\frac{1}{\delta}}}\right)$,
we get the desired bound.

If $\eta_{t\in\left[T\right]}=\frac{1}{2L}\land\frac{\eta}{\sqrt{T}}$,
similar to (\ref{eq:cvx-exp-7}), we will get
\[
F(x^{T+1})-F(x)\leq O\left(\frac{LD_{\psi}(x,x^{1})}{T}+\frac{1}{\sqrt{T}}\left[\frac{D_{\psi}(x,x^{1})}{\eta}+\eta\left(M^{2}+\sigma^{2}\log\frac{1}{\delta}\right)\log T\right]\right).
\]
By plugging in $\eta=\Theta\left(\sqrt{\frac{D_{\psi}(x,x^{1})}{(M^{2}+\sigma^{2}\log\frac{1}{\delta})\log T}}\right)$,
we get the desired bound.
\end{proof}

\subsection{Optimal Rates via the Linearly Decaying Step Size\label{subsec:cvx-optimal}}

In this section, we present the full version of Theorems \ref{thm:main-cvx-exp-optimal}
and \ref{thm:main-cvx-hp-optimal}.
\begin{thm}[Full version of Theorem \ref{thm:main-cvx-exp-optimal}]
\label{thm:cvx-exp-optimal}Under Assumptions \ref{enu:A2}-\ref{enu:A4}
and \ref{enu:A5A} with $\mu_{f}=\mu_{h}=0$, for any $x\in\dom$,
if $T$ is known, by taking $\eta_{t\in\left[T\right]}=\frac{T-t+1}{2LT}\land\frac{\eta(T-t+1)}{T^{\frac{3}{2}}}$,
there is
\[
\E\left[F(x^{T+1})-F(x)\right]\leq O\left(\frac{LD_{\psi}(x,x^{1})}{T}+\frac{1}{\sqrt{T}}\left[\frac{D_{\psi}(x,x^{1})}{\eta}+\eta(M^{2}+\sigma^{2})\right]\right).
\]
In particular, by choosing $\eta=\Theta\left(\sqrt{\frac{D_{\psi}(x,x^{1})}{M^{2}+\sigma^{2}}}\right)$,
there is
\[
\E\left[F(x^{T+1})-F(x)\right]\leq O\left(\frac{LD_{\psi}(x,x^{1})}{T}+\frac{(M+\sigma)\sqrt{D_{\psi}(x,x^{1})}}{\sqrt{T}}\right).
\]
\end{thm}

\begin{proof}
From Lemma \ref{lem:core-exp}, if $\eta_{t\in\left[T\right]}\leq\frac{1}{2L\lor\mu_{f}}$,
there is
\begin{equation}
\E\left[F(x^{T+1})-F(x)\right]\leq\frac{(1-\mu_{f}\eta_{1})D_{\psi}(x,x^{1})}{\sum_{t=1}^{T}\gamma_{t}}+2(M^{2}+\sigma^{2})\sum_{t=1}^{T}\frac{\gamma_{t}\eta_{t}}{\sum_{s=t}^{T}\gamma_{s}},\label{eq:cvx-exp-optimal-1}
\end{equation}
where $\gamma_{t\in\left[T\right]}=\eta_{t}\prod_{s=2}^{t}\frac{1+\mu_{h}\eta_{s-1}}{1-\mu_{f}\eta_{s}}$.
Note that $\mu_{f}=\mu_{h}=0$ now, so $\eta_{t\in\left[T\right]}=\frac{T-t+1}{2LT}\land\frac{\eta(T-t+1)}{T^{\frac{3}{2}}}$
satisfies $\eta_{t\in\left[T\right]}\leq\frac{1}{2L\lor\mu_{f}}=\frac{1}{2L}$.
Besides, $\gamma_{t\in\left[T\right]}$ will degenerate to $\eta_{t\in\left[T\right]}$.
Therefore, (\ref{eq:cvx-exp-optimal-1}) can be simplified as follows
\begin{equation}
\E\left[F(x^{T+1})-F(x)\right]\leq\frac{D_{\psi}(x,x^{1})}{\sum_{t=1}^{T}\eta_{t}}+2(M^{2}+\sigma^{2})\sum_{t=1}^{T}\frac{\eta_{t}^{2}}{\sum_{s=t}^{T}\eta_{s}}.\label{eq:cvx-exp-optimal-2}
\end{equation}
Plugging $\eta_{t\in\left[T\right]}=\frac{T-t+1}{2LT}\land\frac{\eta(T-t+1)}{T^{\frac{3}{2}}}$
into (\ref{eq:cvx-exp-optimal-2}), we have
\begin{align}
\E\left[F(x^{T+1})-F(x)\right] & \leq\frac{D_{\psi}(x,x^{1})}{\sum_{t=1}^{T}T-t+1}\left(2LT\lor\frac{T^{\frac{3}{2}}}{\eta}\right)+\frac{2\eta(M^{2}+\sigma^{2})}{T^{\frac{3}{2}}}\sum_{t=1}^{T}\frac{(T-t+1)^{2}}{\sum_{s=t}^{T}T-s+1}\nonumber \\
 & \leq\frac{4LD_{\psi}(x,x^{1})}{T}+\frac{2D_{\psi}(x,x^{1})}{\eta\sqrt{T}}+\frac{4\eta(M^{2}+\sigma^{2})}{\sqrt{T}}\nonumber \\
 & =O\left(\frac{LD_{\psi}(x,x^{1})}{T}+\frac{1}{\sqrt{T}}\left[\frac{D_{\psi}(x,x^{1})}{\eta}+\eta(M^{2}+\sigma^{2})\right]\right).\label{eq:cvx-exp-optimal-3}
\end{align}
By plugging in $\eta=\Theta\left(\sqrt{\frac{D_{\psi}(x,x^{1})}{M^{2}+\sigma^{2}}}\right)$,
we get the desired bound.
\end{proof}

\begin{thm}[Full version of Theorem \ref{thm:main-cvx-hp-optimal}]
\label{thm:cvx-hp-optimal}Under Assumptions \ref{enu:A2}-\ref{enu:A4}
and \ref{enu:A5B} with $\mu_{f}=\mu_{h}=0$ and let $\delta\in(0,1)$,
for any $x\in\dom$, if $T$ is known, by taking $\eta_{t\in\left[T\right]}=\frac{T-t+1}{2LT}\land\frac{\eta(T-t+1)}{T^{\frac{3}{2}}}$,
then with probability at least $1-\delta$, there is
\[
F(x^{T+1})-F(x)\leq O\left(\frac{LD_{\psi}(x,x^{1})}{T}+\frac{1}{\sqrt{T}}\left[\frac{D_{\psi}(x,x^{1})}{\eta}+\eta(M^{2}+\sigma^{2}\log\frac{1}{\delta})\right]\right).
\]
In particular, by choosing $\eta=\Theta\left(\sqrt{\frac{D_{\psi}(x,x^{1})}{M^{2}+\sigma^{2}\log\frac{1}{\delta}}}\right)$,
there is
\[
F(x^{T+1})-F(x)\leq O\left(\frac{LD_{\psi}(x,x^{1})}{T}+\frac{(M+\sigma\sqrt{\log\frac{1}{\delta}})\sqrt{D_{\psi}(x,x^{1})}}{\sqrt{T}}\right).
\]
\end{thm}

\begin{proof}
From Lemma \ref{lem:core-hp}, if $\eta_{t\in\left[T\right]}\leq\frac{1}{2L\lor\mu_{f}}$,
with probability at least $1-\delta$, there is
\begin{equation}
F(x^{T+1})-F(x)\leq2\left(1+\max_{2\leq t\leq T}\frac{1}{1-\mu_{f}\eta_{t}}\right)\left[\frac{D_{\psi}(x,x^{1})}{\sum_{t=1}^{T}\gamma_{t}}+\left(M^{2}+\sigma^{2}\left(1+2\log\frac{2}{\delta}\right)\right)\sum_{t=1}^{T}\frac{\gamma_{t}\eta_{t}}{\sum_{s=t}^{T}\gamma_{s}}\right],\label{eq:cvx-hp-optimal-1}
\end{equation}
where $\gamma_{t\in\left[T\right]}=\eta_{t}\prod_{s=2}^{t}\frac{1+\mu_{h}\eta_{s-1}}{1-\mu_{f}\eta_{s}}$.
Note that $\mu_{f}=\mu_{h}=0$ now, hence, $\eta_{t\in\left[T\right]}=\frac{T-t+1}{2LT}\land\frac{\eta(T-t+1)}{T^{\frac{3}{2}}}$
satisfies $\eta_{t\in\left[T\right]}\leq\frac{1}{2L\lor\mu_{f}}=\frac{1}{2L}$.
Besides, $\gamma_{t\in\left[T\right]}$ will degenerate to $\eta_{t\in\left[T\right]}$.
Then we can simplify (\ref{eq:cvx-hp-optimal-1}) into
\[
F(x^{T+1})-F(x)\leq\frac{4D_{\psi}(x,x^{1})}{\sum_{t=1}^{T}\eta_{t}}+4\left(M^{2}+\sigma^{2}\left(1+2\log\frac{2}{\delta}\right)\right)\sum_{t=1}^{T}\frac{\eta_{t}^{2}}{\sum_{s=t}^{T}\eta_{s}}.
\]
Similar to (\ref{eq:cvx-exp-optimal-3}), we will have
\[
F(x^{T+1})-F(x)\leq O\left(\frac{LD_{\psi}(x,x^{1})}{T}+\frac{1}{\sqrt{T}}\left[\frac{D_{\psi}(x,x^{1})}{\eta}+\eta\left(M^{2}+\sigma^{2}\log\frac{1}{\delta}\right)\right]\right).
\]
By plugging in $\eta=\Theta\left(\sqrt{\frac{D_{\psi}(x,x^{1})}{M^{2}+\sigma^{2}\log\frac{1}{\delta}}}\right)$,
we get the desired bound.
\end{proof}

\section{Strongly Convex Functions\label{sec:str}}

In this section, we present the full version of the theorems for strongly
convex functions (i.e., $\mu_{f}+\mu_{h}>0$) with their proofs.

\subsection{The Case of $\mu_{f}>0$\label{subsec:str-f}}
\begin{thm}[Full version of Theorem \ref{thm:main-str-exp}]
\label{thm:str-f-exp}Under Assumptions \ref{enu:A2}-\ref{enu:A4}
and \ref{enu:A5A} with $\mu_{f}>0$ and $\mu_{h}=0$, let $\kappa_{f}\coloneqq\frac{L}{\mu_{f}}\geq0$,
for any $x\in\dom$:

If $T$ is unknown, by taking either $\eta_{t\in\left[T\right]}=\frac{1}{\mu_{f}(t+2\kappa_{f})}$
or $\eta_{t\in\left[T\right]}=\frac{2}{\mu_{f}(t+1+4\kappa_{f})}$,
there is
\[
\E\left[F(x^{T+1})-F(x)\right]\leq\begin{cases}
O\left(\frac{LD_{\psi}(x,x^{1})}{T}+\frac{(M^{2}+\sigma^{2})\log T}{\mu_{f}(T+\kappa_{f})}\right) & \eta_{t\in\left[T\right]}=\frac{1}{\mu_{f}(t+2\kappa_{f})}\\
O\left(\frac{L(1+\kappa_{f})D_{\psi}(x,x^{1})}{T(T+\kappa_{f})}+\frac{(M^{2}+\sigma^{2})\log T}{\mu_{f}(T+\kappa_{f})}\right) & \eta_{t\in\left[T\right]}=\frac{2}{\mu_{f}(t+1+4\kappa_{f})}
\end{cases}.
\]

If $T$ is known, by taking $\eta_{t\in\left[T\right]}=\begin{cases}
\frac{1}{\mu_{f}(1+2\kappa_{f})} & t\leq\tau\\
\frac{2}{\mu_{f}(t-\tau+2+4\kappa_{f})} & t\geq\tau+1
\end{cases}$ with $\tau\coloneqq\left\lceil \frac{T}{2}\right\rceil $, there
is
\[
\E\left[F(x^{T+1})-F(x)\right]\leq O\left(\frac{LD_{\psi}(x,x^{1})}{\exp\left(\frac{T}{2+4\kappa_{f}}\right)}+\frac{(M^{2}+\sigma^{2})\log T}{\mu_{f}(T+\kappa_{f})}\right).
\]
\end{thm}

\begin{proof}
When $\mu_{f}>0$ and $\mu_{h}=0$, suppose the condition of $\eta_{t\in\left[T\right]}\leq\frac{1}{2L\lor\mu_{f}}$
in Lemma \ref{lem:core-exp} holds, we have
\begin{equation}
\E\left[F(x^{T+1})-F(x)\right]\leq\frac{(1-\mu_{f}\eta_{1})D_{\psi}(x,x^{1})}{\sum_{t=1}^{T}\gamma_{t}}+2(M^{2}+\sigma^{2})\sum_{t=1}^{T}\frac{\gamma_{t}\eta_{t}}{\sum_{s=t}^{T}\gamma_{s}},\label{eq:str-f-exp-general}
\end{equation}
where $\gamma_{t\in\left[T\right]}=\eta_{t}\prod_{s=2}^{t}\frac{1+\mu_{h}\eta_{s-1}}{1-\mu_{f}\eta_{s}}=\frac{\eta_{t}}{\prod_{s=2}^{t}(1-\mu_{f}\eta_{s})}$.
Now, let us check the condition of $\eta_{t\in\left[T\right]}\leq\frac{1}{2L\lor\mu_{f}}$
for the three choices respectively:
\begin{align*}
\eta_{t\in\left[T\right]} & =\frac{1}{\mu_{f}(t+2\kappa_{f})}\leq\frac{1}{\mu_{f}+2L}\leq\frac{1}{2L\lor\mu_{f}};\\
\eta_{t\in\left[T\right]} & =\frac{2}{\mu_{f}(t+1+4\kappa_{f})}\leq\frac{1}{\mu_{f}+2L}\leq\frac{1}{2L\lor\mu_{f}};\\
\eta_{t\in\left[T\right]} & =\begin{cases}
\frac{1}{\mu_{f}(1+2\kappa_{f})}=\frac{1}{\mu_{f}+2L}\leq\frac{1}{2L\lor\mu_{f}} & t\leq\tau\\
\frac{2}{\mu_{f}(t-\tau+2+4\kappa_{f})}\leq\frac{1}{\mu_{f}+2L}\leq\frac{1}{2L\lor\mu_{f}} & t\geq\tau+1
\end{cases}.
\end{align*}
Therefore, (\ref{eq:str-f-exp-general}) holds for all cases.

First, we consider $\eta_{t\in\left[T\right]}=\frac{1}{\mu_{f}(t+2\kappa_{f})}$.
We can find $\eta_{1}=\frac{1}{\mu_{f}(1+2\kappa_{f})}$ and
\begin{equation}
\gamma_{t\in\left[T\right]}=\frac{\eta_{t}}{\prod_{s=2}^{t}(1-\mu_{f}\eta_{s})}=\frac{1}{\mu_{f}(t+2\kappa_{f})\prod_{s=2}^{t}(1-\frac{1}{s+2\kappa_{f}})}=\frac{1}{\mu_{f}(1+2\kappa_{f})}=\eta_{1}.\label{eq:str-f-exp-gamma-1}
\end{equation}
Hence, using (\ref{eq:str-f-exp-general}) and (\ref{eq:str-f-exp-gamma-1}),
we have
\begin{align}
\E\left[F(x^{T+1})-F(x)\right] & \leq\frac{(\eta_{1}^{-1}-\mu_{f})D_{\psi}(x,x^{1})}{T}+2(M^{2}+\sigma^{2})\sum_{t=1}^{T}\frac{\eta_{t}}{T-t+1}\nonumber \\
 & =\frac{2LD_{\psi}(x,x^{1})}{T}+\frac{2(M^{2}+\sigma^{2})}{\mu_{f}}\sum_{t=1}^{T}\frac{1}{(T-t+1)(t+2\kappa_{f})}\nonumber \\
 & =\frac{2LD_{\psi}(x,x^{1})}{T}+\frac{2(M^{2}+\sigma^{2})}{\mu_{f}(T+1+2\kappa_{f})}\sum_{t=1}^{T}\frac{1}{T-t+1}+\frac{1}{t+2\kappa_{f}}\nonumber \\
 & \leq\frac{2LD_{\psi}(x,x^{1})}{T}+\frac{2(M^{2}+\sigma^{2})}{\mu_{f}(T+1+2\kappa_{f})}\left(1+\log T+\frac{1}{1+2\kappa_{f}}+\log\frac{T+2\kappa_{f}}{1+2\kappa_{f}}\right)\nonumber \\
 & \leq\frac{2LD_{\psi}(x,x^{1})}{T}+\frac{4(M^{2}+\sigma^{2})(1+\log T)}{\mu_{f}(T+1+2\kappa_{f})}\nonumber \\
 & =O\left(\frac{LD_{\psi}(x,x^{1})}{T}+\frac{(M^{2}+\sigma^{2})\log T}{\mu_{f}(T+\kappa_{f})}\right).\label{eq:str-f-exp-1}
\end{align}

Next, for the case of $\eta_{t\in\left[T\right]}=\frac{2}{\mu_{f}(t+1+4\kappa_{f})}$,
there are $\eta_{1}=\frac{1}{\mu_{f}(1+2\kappa_{f})}$ and
\begin{align}
\gamma_{t\in\left[T\right]} & =\frac{\eta_{t}}{\prod_{s=2}^{t}(1-\mu_{f}\eta_{s})}=\frac{2}{\mu_{f}(t+1+4\kappa_{f})\prod_{s=2}^{t}(1-\frac{2}{s+1+4\kappa_{f}})}\nonumber \\
 & =\frac{t+4\kappa_{f}}{\mu_{f}(1+2\kappa_{f})(1+4\kappa_{f})}=\frac{t+4\kappa_{f}}{1+4\kappa_{f}}\eta_{1}.\label{eq:str-f-exp-gamma-2}
\end{align}
Hence, using (\ref{eq:str-f-exp-general}) and (\ref{eq:str-f-exp-gamma-2}),
we have
\begin{align}
\E\left[F(x^{T+1})-F(x)\right] & \leq\frac{(1+4\kappa_{f})(\eta_{1}^{-1}-\mu_{f})D_{\psi}(x,x^{1})}{\sum_{t=1}^{T}t+4\kappa_{f}}+2(M^{2}+\sigma^{2})\sum_{t=1}^{T}\frac{(t+4\kappa_{f})\eta_{t}}{\sum_{s=t}^{T}s+4\kappa_{f}}\nonumber \\
 & =\frac{4(1+4\kappa_{f})LD_{\psi}(x,x^{1})}{T(T+1+8\kappa_{f})}+\frac{8(M^{2}+\sigma^{2})}{\mu_{f}}\sum_{t=1}^{T}\frac{t+4\kappa_{f}}{(t+1+4\kappa_{f})(T-t+1)(T+t+8\kappa_{f})}\nonumber \\
 & \leq\frac{4(1+4\kappa_{f})LD_{\psi}(x,x^{1})}{T(T+1+8\kappa_{f})}+\frac{8(M^{2}+\sigma^{2})}{\mu_{f}}\sum_{t=1}^{T}\frac{1}{(T-t+1)(T+t+8\kappa_{f})}\nonumber \\
 & =\frac{4(1+4\kappa_{f})LD_{\psi}(x,x^{1})}{T(T+1+8\kappa_{f})}+\frac{8(M^{2}+\sigma^{2})}{\mu_{f}(2T+1+8\kappa_{f})}\sum_{t=1}^{T}\frac{1}{T-t+1}+\frac{1}{T+t+8\kappa_{f}}\nonumber \\
 & \leq\frac{4(1+4\kappa_{f})LD_{\psi}(x,x^{1})}{T(T+1+8\kappa_{f})}+\frac{8(M^{2}+\sigma^{2})}{\mu_{f}(2T+1+8\kappa_{f})}\left(1+\log T+\log\frac{2T+8\kappa_{f}}{T+8\kappa_{f}}\right)\nonumber \\
 & \leq\frac{4(1+4\kappa_{f})LD_{\psi}(x,x^{1})}{T(T+1+8\kappa_{f})}+\frac{8(M^{2}+\sigma^{2})(1+\log2T)}{\mu_{f}(2T+1+8\kappa_{f})}\nonumber \\
 & =O\left(\frac{L(1+\kappa_{f})D_{\psi}(x,x^{1})}{T(T+\kappa_{f})}+\frac{(M^{2}+\sigma^{2})\log T}{\mu_{f}(T+\kappa_{f})}\right).\label{eq:str-f-exp-2}
\end{align}

Finally, if $T$ is known, recall that we choose
\[
\eta_{t\in\left[T\right]}=\begin{cases}
\frac{1}{\mu_{f}(1+2\kappa_{f})} & t\leq\tau\\
\frac{2}{\mu_{f}(t-\tau+2+4\kappa_{f})} & t\geq\tau+1
\end{cases},
\]
where $\tau=\left\lceil \frac{T}{2}\right\rceil $. Note that we have
$\eta_{1}=\frac{1}{\mu_{f}(1+2\kappa_{f})}$ and
\begin{equation}
\gamma_{t\in\left[T\right]}=\frac{\eta_{t}}{\prod_{s=2}^{t}(1-\mu_{f}\eta_{s})}=\begin{cases}
\frac{\eta_{1}}{(1-\mu_{f}\eta_{1})^{t-1}} & t\leq\tau\\
\frac{\eta_{1}(t-\tau+1+4\kappa_{f})}{(1-\mu_{f}\eta_{1})^{\tau-1}(1+4\kappa_{f})} & t\geq\tau+1
\end{cases}.\label{eq:str-f-exp-gamma-3}
\end{equation}
So, there is
\begin{align}
\frac{1-\mu_{f}\eta_{1}}{\sum_{t=1}^{T}\gamma_{t}} & \leq\frac{1-\mu_{f}\eta_{1}}{\sum_{t=1}^{\tau}\gamma_{t}}\overset{(\ref{eq:str-f-exp-gamma-3})}{=}\frac{1-\mu_{f}\eta_{1}}{\sum_{t=1}^{\tau}\frac{\eta_{1}}{(1-\mu_{f}\eta_{1})^{t-1}}}=\frac{\mu_{f}(1-\mu_{f}\eta_{1})^{\tau}}{1-(1-\mu_{f}\eta_{1})^{\tau}}\nonumber \\
 & \leq\frac{(1-\mu_{f}\eta_{1})^{\tau}}{\eta_{1}}\leq(\eta_{1}^{-1}-\mu_{f})\exp\left(-\mu_{f}\eta_{1}(\tau-1)\right)\nonumber \\
 & =2L\exp\left(-\frac{\tau-1}{1+2\kappa_{f}}\right)\leq2eL\exp\left(-\frac{T}{2+4\kappa_{f}}\right).\label{eq:str-f-exp-3}
\end{align}
Now, we observe that
\[
\sum_{t=1}^{T}\frac{\gamma_{t}\eta_{t}}{\sum_{s=t}^{T}\gamma_{s}}=\sum_{t=1}^{\tau}\frac{\gamma_{t}\eta_{t}}{\sum_{s=t}^{T}\gamma_{s}}+\sum_{t=\tau+1}^{T}\frac{\gamma_{t}\eta_{t}}{\sum_{s=t}^{T}\gamma_{s}}=\underbrace{\sum_{t=1}^{\tau}\frac{\gamma_{t}\eta_{1}}{\sum_{s=t}^{T}\gamma_{s}}}_{\mathrm{I}}+\underbrace{\sum_{t=\tau+1}^{T}\frac{\gamma_{t}\eta_{t}}{\sum_{s=t}^{T}\gamma_{s}}}_{\mathrm{II}}.
\]
We bound terms $\mathrm{I}$ and $\mathrm{II}$ separately in the
following.
\begin{itemize}
\item Term $\mathrm{I}$: When $t\leq\tau$, we have
\[
\frac{\gamma_{t}}{\sum_{s=t}^{T}\gamma_{s}}\leq\frac{\gamma_{t}}{\sum_{s=\tau}^{T}\gamma_{s}}\overset{(\ref{eq:str-f-exp-gamma-3})}{=}\frac{(1+4\kappa_{f})(1-\mu_{f}\eta_{1})^{\tau-t}}{\sum_{s=\tau}^{T}s-\tau+1+4\kappa_{f}}\leq\frac{2(1+4\kappa_{f})}{(T-\tau+1)(T-\tau+2+8\kappa_{f})},
\]
implying that
\begin{align*}
\mathrm{I} & \leq\frac{2\tau\eta_{1}(1+4\kappa_{f})}{(T-\tau+1)(T-\tau+2+8\kappa_{f})}\overset{\tau\leq\frac{T+1}{2}}{\leq}\frac{4\eta_{1}(1+4\kappa_{f})}{T+3+16\kappa_{f}}\\
 & =\frac{4(1+4\kappa_{f})}{\mu_{f}(1+2\kappa_{f})(T+3+16\kappa_{f})}\leq\frac{8}{\mu_{f}(T+3+16\kappa_{f})}.
\end{align*}
\item Term $\mathrm{II}$: When $t\geq\tau+1$, we have
\[
\frac{\gamma_{t}}{\sum_{s=t}^{T}\gamma_{s}}\overset{(\ref{eq:str-f-exp-gamma-3})}{=}\frac{t-\tau+1+4\kappa_{f}}{\sum_{s=t}^{T}s-\tau+1+4\kappa_{f}}=\frac{2(t-\tau+1+4\kappa_{f})}{(T-t+1)(T+t-2\tau+2+8\kappa_{f})},
\]
implying that
\begin{align*}
\mathrm{II} & \leq\sum_{t=\tau+1}^{T}\frac{4}{\mu_{f}(T-t+1)(T+t-2\tau+2+8\kappa_{f})}\\
 & =\frac{4}{\mu_{f}(2T-2\tau+3+8\kappa_{f})}\sum_{t=\tau+1}^{T}\frac{1}{T-t+1}+\frac{1}{T+t-2\tau+2+8\kappa_{f}}\\
 & \leq\frac{4}{\mu_{f}(2T-2\tau+3+8\kappa_{f})}\left(1+\log(T-\tau)+\log\frac{2T-2\tau+2+8\kappa_{f}}{T-\tau+2+8\kappa_{f}}\right)\\
 & \overset{\frac{T}{2}\leq\tau\leq\frac{T+1}{2}}{\leq}\frac{4(1+\log T)}{\mu_{f}(T+2+8\kappa_{f})}.
\end{align*}
\end{itemize}
Therefore, we obtain
\begin{equation}
\sum_{t=1}^{T}\frac{\gamma_{t}\eta_{t}}{\sum_{s=t}^{T}\gamma_{s}}\leq\frac{8}{\mu_{f}(T+3+16\kappa_{f})}+\frac{4(1+\log T)}{\mu_{f}(T+2+8\kappa_{f})}\leq\frac{12(1+\log T)}{\mu_{f}(T+2+8\kappa_{f})}.\label{eq:str-f-exp-4}
\end{equation}
We combine (\ref{eq:str-f-exp-general}), (\ref{eq:str-f-exp-3}),
and (\ref{eq:str-f-exp-4}) to conclude
\begin{equation}
\E\left[F(x^{T+1})-F(x)\right]\leq\frac{2eLD_{\psi}(x,x^{1})}{\exp\left(\frac{T}{2+4\kappa_{f}}\right)}+\frac{24(M^{2}+\sigma^{2})(1+\log T)}{\mu_{f}(T+2+8\kappa_{f})}=O\left(\frac{LD_{\psi}(x,x^{1})}{\exp\left(\frac{T}{2+4\kappa_{f}}\right)}+\frac{(M^{2}+\sigma^{2})\log T}{\mu_{f}(T+\kappa_{f})}\right).\label{eq:str-f-exp-5}
\end{equation}
\end{proof}

\begin{thm}[Full version of Theorem \ref{thm:main-str-hp}]
\label{thm:str-f-hp}Under Assumptions \ref{enu:A2}-\ref{enu:A4}
and \ref{enu:A5B} with $\mu_{f}>0$ and $\mu_{h}=0$, let $\kappa_{f}\coloneqq\frac{L}{\mu_{f}}\geq0$
and $\delta\in(0,1)$, for any $x\in\dom$:

If $T$ is unknown, by taking either $\eta_{t\in\left[T\right]}=\frac{1}{\mu_{f}(t+2\kappa_{f})}$
or $\eta_{t\in\left[T\right]}=\frac{2}{\mu_{f}(t+1+4\kappa_{f})}$,
then with probability at least $1-\delta$, there is
\[
F(x^{T+1})-F(x)\leq\begin{cases}
O\left(\frac{\mu_{f}(1+\kappa_{f})D_{\psi}(x,x^{1})}{T}+\frac{(M^{2}+\sigma^{2}\log\frac{1}{\delta})\log T}{\mu_{f}(T+\kappa_{f})}\right) & \eta_{t\in\left[T\right]}=\frac{1}{\mu_{f}(t+2\kappa_{f})}\\
O\left(\frac{\mu_{f}(1+\kappa_{f})^{2}D_{\psi}(x,x^{1})}{T(T+\kappa_{f})}+\frac{(M^{2}+\sigma^{2}\log\frac{1}{\delta})\log T}{\mu_{f}(T+\kappa_{f})}\right) & \eta_{t\in\left[T\right]}=\frac{2}{\mu_{f}(t+1+4\kappa_{f})}
\end{cases}.
\]

If $T$ is known, by taking $\eta_{t\in\left[T\right]}=\begin{cases}
\frac{1}{\mu_{f}(1+2\kappa_{f})} & t=1\\
\frac{1}{\mu_{f}(\eta+2\kappa_{f})} & 2\leq t\leq\tau\\
\frac{2}{\mu_{f}(t-\tau+2+4\kappa_{f})} & t\geq\tau+1
\end{cases}$ where $\eta\geq0$ can be any number satisfying $\eta+\kappa_{f}>1$
and $\tau\coloneqq\left\lceil \frac{T}{2}\right\rceil $, then with
probability at least $1-\delta$, there is
\[
F(x^{T+1})-F(x)\leq O\left(\left(1\lor\frac{1}{\eta+2\kappa_{f}-1}\right)\left[\frac{\mu_{f}(1+\kappa_{f})D_{\psi}(x,x^{1})}{\exp\left(\frac{T}{2\eta+4\kappa_{f}}\right)}+\frac{(M^{2}+\sigma^{2}\log\frac{1}{\delta})\log T}{\mu_{f}(T+\kappa_{f})}\right]\right).
\]
\end{thm}

\begin{proof}
When $\mu_{f}>0$ and $\mu_{h}=0$, suppose the condition of $\eta_{t\in\left[T\right]}\leq\frac{1}{2L\lor\mu_{f}}$
in Lemma \ref{lem:core-hp} holds, we have, with probability at least
$1-\delta$,
\begin{equation}
F(x^{T+1})-F(x)\leq2\left(1+\max_{2\leq t\leq T}\frac{1}{1-\mu_{f}\eta_{t}}\right)\left[\frac{D_{\psi}(x,x^{1})}{\sum_{t=1}^{T}\gamma_{t}}+\left(M^{2}+\sigma^{2}\left(1+2\log\frac{2}{\delta}\right)\right)\sum_{t=1}^{T}\frac{\gamma_{t}\eta_{t}}{\sum_{s=t}^{T}\gamma_{s}}\right],\label{eq:str-f-hp-general}
\end{equation}
where $\gamma_{t\in\left[T\right]}=\eta_{t}\prod_{s=2}^{t}\frac{1+\mu_{h}\eta_{s-1}}{1-\mu_{f}\eta_{s}}=\frac{\eta_{t}}{\prod_{s=2}^{t}(1-\mu_{f}\eta_{s})}$.
Now, let us check the condition of $\eta_{t\in\left[T\right]}\leq\frac{1}{2L\lor\mu_{f}}$
for the three choices respectively:
\begin{align*}
\eta_{t\in\left[T\right]} & =\frac{1}{\mu_{f}(t+2\kappa_{f})}\leq\frac{1}{\mu_{f}+2L}\leq\frac{1}{2L\lor\mu_{f}};\\
\eta_{t\in\left[T\right]} & =\frac{2}{\mu_{f}(t+1+4\kappa_{f})}\leq\frac{1}{\mu_{f}+2L}\leq\frac{1}{2L\lor\mu_{f}};\\
\eta_{t\in\left[T\right]} & =\begin{cases}
\frac{1}{\mu_{f}(1+2\kappa_{f})}=\frac{1}{\mu_{f}+2L}\leq\frac{1}{2L\lor\mu_{f}} & t=1\\
\frac{1}{\mu_{f}(\eta+2\kappa_{f})}\leq\frac{1}{\mu_{f}(2\kappa_{f}\lor(\eta+\kappa_{f}))}\leq\frac{1}{2L\lor\mu_{f}} & 2\leq t\leq\tau\\
\frac{2}{\mu_{f}(t-\tau+2+4\kappa_{f})}\leq\frac{1}{\mu_{f}+2L}\leq\frac{1}{2L\lor\mu_{f}} & t\geq\tau+1
\end{cases}.
\end{align*}
Therefore, (\ref{eq:str-f-hp-general}) holds for all cases.

First, we consider $\eta_{t\in\left[T\right]}=\frac{1}{\mu_{f}(t+2\kappa_{f})}$.
We can find $1+\max_{2\leq t\leq T}\frac{1}{1-\mu_{f}\eta_{t}}=1+\frac{1}{1-\mu_{f}\eta_{2}}\leq3$.
Hence, using (\ref{eq:str-f-hp-general}), we have
\[
F(x^{T+1})-F(x)\leq6\left(\frac{D_{\psi}(x,x^{1})}{\sum_{t=1}^{T}\gamma_{t}}+\left(M^{2}+\sigma^{2}\left(1+2\log\frac{2}{\delta}\right)\right)\sum_{t=1}^{T}\frac{\gamma_{t}\eta_{t}}{\sum_{s=t}^{T}\gamma_{s}}\right).
\]
Following similar steps in the proof of (\ref{eq:str-f-exp-1}), we
can get
\[
F(x^{T+1})-F(x)\leq O\left(\frac{\mu_{f}(1+\kappa_{f})D_{\psi}(x,x^{1})}{T}+\frac{(M^{2}+\sigma^{2}\log\frac{1}{\delta})\log T}{\mu_{f}(T+\kappa_{f})}\right).
\]

Next, for the case of $\eta_{t\in\left[T\right]}=\frac{2}{\mu_{f}(t+1+4\kappa_{f})}$,
there is $1+\max_{2\leq t\leq T}\frac{1}{1-\mu_{f}\eta_{t}}=1+\frac{1}{1-\mu_{f}\eta_{2}}\leq4$.
Thus, using (\ref{eq:str-f-hp-general}), we have
\[
F(x^{T+1})-F(x)\leq8\left(\frac{D_{\psi}(x,x^{1})}{\sum_{t=1}^{T}\gamma_{t}}+\left(M^{2}+\sigma^{2}\left(1+2\log\frac{2}{\delta}\right)\right)\sum_{t=1}^{T}\frac{\gamma_{t}\eta_{t}}{\sum_{s=t}^{T}\gamma_{s}}\right).
\]
Following similar steps in the proof of (\ref{eq:str-f-exp-2}), we
can get
\[
F(x^{T+1})-F(x)\leq O\left(\frac{\mu_{f}(1+\kappa_{f})^{2}D_{\psi}(x,x^{1})}{T(T+\kappa_{f})}+\frac{(M^{2}+\sigma^{2}\log\frac{1}{\delta})\log T}{\mu_{f}(T+\kappa_{f})}\right).
\]

Finally, if $T$ is known, we recall the current choice is
\[
\eta_{t\in\left[T\right]}=\begin{cases}
\frac{1}{\mu_{f}(1+2\kappa_{f})} & t=1\\
\frac{1}{\mu_{f}(\eta+2\kappa_{f})} & 2\leq t\leq\tau\\
\frac{2}{\mu_{f}(t-\tau+2+4\kappa_{f})} & t\geq\tau+1
\end{cases},
\]
where $\eta>1$ and $\tau=\left\lceil \frac{T}{2}\right\rceil $.
Note that we have
\begin{align*}
1+\max_{2\leq t\leq T}\frac{1}{1-\mu_{f}\eta_{t}} & =1+\frac{1}{1-\mu_{f}(\eta_{2}\lor\eta_{\tau+1})}=2+\frac{1}{\eta\land1.5+2\kappa_{f}-1}\\
 & =2+\frac{1}{\eta+2\kappa_{f}-1}\lor\frac{1}{0.5+2\kappa_{f}}\leq4+\frac{1}{\eta+2\kappa_{f}-1}.
\end{align*}
Thus, using (\ref{eq:str-f-hp-general}), we obtain
\[
F(x^{T+1})-F(x)\leq\left(8+\frac{2}{\eta+2\kappa_{f}-1}\right)\left(\frac{D_{\psi}(x,x^{1})}{\sum_{t=1}^{T}\gamma_{t}}+\left(M^{2}+\sigma^{2}\left(1+2\log\frac{2}{\delta}\right)\right)\sum_{t=1}^{T}\frac{\gamma_{t}\eta_{t}}{\sum_{s=t}^{T}\gamma_{s}}\right).
\]
Following similar steps in the proof of (\ref{eq:str-f-exp-5}), we
can conclude
\[
F(x^{T+1})-F(x)\leq O\left(\left(1\lor\frac{1}{\eta+2\kappa_{f}-1}\right)\left[\frac{\mu_{f}(1+\kappa_{f})D_{\psi}(x,x^{1})}{\exp\left(\frac{T}{2\eta+4\kappa_{f}}\right)}+\frac{(M^{2}+\sigma^{2}\log\frac{1}{\delta})\log T}{\mu_{f}(T+\kappa_{f})}\right]\right).
\]
\end{proof}

\subsubsection{Optimal Rates via a New Step Size\label{subsec:str-f-optimal}}

Now, we present the full version of Theorems \ref{thm:main-str-exp-optimal}
and \ref{thm:main-str-hp-optimal}.
\begin{thm}[Full version of Theorem \ref{thm:main-str-exp-optimal}]
\label{thm:str-f-exp-optimal}Under Assumptions \ref{enu:A2}-\ref{enu:A4}
and \ref{enu:A5A} with $\mu_{f}>0$ and $\mu_{h}=0$, let $\kappa_{f}\coloneqq\frac{L}{\mu_{f}}\geq0$,
for any $x\in\dom$, if $T$ is known, by taking $\eta_{t\in\left[T\right]}=\begin{cases}
\frac{1}{\mu_{f}(1+2\kappa_{f})} & t\leq\tau_{1}\\
\frac{2}{\mu_{f}(t-\tau_{1}+2+4\kappa_{f})} & \tau_{1}+1\leq t\leq\tau_{2}\\
\frac{T-t+1}{2L(T-\tau_{2})}\land\frac{T-t+1}{\mu_{f}(T-\tau_{2})(T+\kappa_{f})} & t\geq\tau_{2}+1
\end{cases}$ with $\tau_{1}\coloneqq\left\lceil \frac{T}{4}\right\rceil $ and
$\tau_{2}\coloneqq\left\lceil \frac{T}{2}\right\rceil $, there is
\[
\E\left[F(x^{T+1})-F(x)\right]\leq O\left(\frac{LD_{\psi}(x,x^{1})}{\exp\left(\frac{T}{4+8\kappa_{f}}\right)}+\frac{M^{2}+\sigma^{2}}{\mu_{f}(T+\kappa_{f})}\right).
\]
\end{thm}

\begin{proof}
When $\mu_{f}>0$ and $\mu_{h}=0$, suppose the condition of $\eta_{t\in\left[T\right]}\leq\frac{1}{2L\lor\mu_{f}}$
in Lemma \ref{lem:core-exp} holds, we have
\begin{equation}
\E\left[F(x^{T+1})-F(x)\right]\leq\frac{(1-\mu_{f}\eta_{1})D_{\psi}(x,x^{1})}{\sum_{t=1}^{T}\gamma_{t}}+2(M^{2}+\sigma^{2})\sum_{t=1}^{T}\frac{\gamma_{t}\eta_{t}}{\sum_{s=t}^{T}\gamma_{s}},\label{eq:str-f-exp-optimal-general}
\end{equation}
where $\gamma_{t\in\left[T\right]}=\eta_{t}\prod_{s=2}^{t}\frac{1+\mu_{h}\eta_{s-1}}{1-\mu_{f}\eta_{s}}=\frac{\eta_{t}}{\prod_{s=2}^{t}(1-\mu_{f}\eta_{s})}$.
Now, let us check the condition of $\eta_{t\in\left[T\right]}\leq\frac{1}{2L\lor\mu_{f}}$:
\[
\eta_{t\in\left[T\right]}=\begin{cases}
\frac{1}{\mu_{f}(1+2\kappa_{f})}=\frac{1}{\mu_{f}+2L}\leq\frac{1}{2L\lor\mu_{f}} & t\leq\tau_{1}\\
\frac{2}{\mu_{f}(t-\tau_{1}+2+4\kappa_{f})}\leq\frac{1}{\mu_{f}+2L}\leq\frac{1}{2L\lor\mu_{f}} & \tau_{1}+1\leq t\leq\tau_{2}\\
\frac{T-t+1}{2L(T-\tau_{2})}\land\frac{T-t+1}{\mu_{f}(T-\tau_{2})(T+\kappa_{f})}\leq\frac{1}{2L}\land\frac{1}{\mu(T+\kappa_{f})}\leq\frac{1}{2L\lor\mu_{f}} & t\geq\tau_{2}+1
\end{cases}.
\]
Therefore, (\ref{eq:str-f-exp-optimal-general}) holds.

In the following proof, we write
\begin{eqnarray}
\eta=1 & \text{and} & \eta_{*}=\frac{1}{2L(T-\tau_{2})}\land\frac{1}{\mu_{f}(T-\tau_{2})(T+\kappa_{f})}.\label{eq:str-f-exp-optimal-eta-star}
\end{eqnarray}
Note that we have $\eta_{1}=\frac{1}{\mu_{f}(\eta+2\kappa_{f})}$
and
\begin{equation}
\gamma_{t\in\left[T\right]}=\frac{\eta_{t}}{\prod_{s=2}^{t}(1-\mu_{f}\eta_{s})}=\begin{cases}
\frac{\eta_{1}}{(1-\mu_{f}\eta_{1})^{t-1}} & t\leq\tau_{1}\\
\frac{(t-\tau_{1}+1+4\kappa_{f})}{(1-\mu_{f}\eta_{1})^{\tau_{1}-1}\mu_{f}(1+2\kappa_{f})(1+4\kappa_{f})} & \tau_{1}+1\leq t\leq\tau_{2}\\
\frac{(\tau_{2}-\tau_{1}+1+4\kappa_{f})(\tau_{2}-\tau_{1}+2+4\kappa_{f})\eta_{*}(T-t+1)}{(1-\mu_{f}\eta_{1})^{\tau_{1}-1}(1+4\kappa_{f})(2+4\kappa_{f})\prod_{s=\tau_{2}+1}^{t}(1-\mu_{f}\eta_{s})} & t\geq\tau_{2}+1
\end{cases}.\label{eq:str-f-exp-optimal-gamma}
\end{equation}
So, there is
\begin{align}
\frac{1-\mu_{f}\eta_{1}}{\sum_{t=1}^{T}\gamma_{t}} & \leq\frac{1-\mu_{f}\eta_{1}}{\sum_{t=1}^{\tau_{1}}\gamma_{t}}\overset{(\ref{eq:str-f-exp-optimal-gamma})}{=}\frac{1-\mu_{f}\eta_{1}}{\sum_{t=1}^{\tau_{1}}\frac{\eta_{1}}{(1-\mu_{f}\eta_{1})^{t-1}}}=\frac{\mu_{f}(1-\mu_{f}\eta_{1})^{\tau_{1}}}{1-(1-\mu_{f}\eta_{1})^{\tau_{1}}}\nonumber \\
 & \leq\frac{(1-\mu_{f}\eta_{1})^{\tau_{1}}}{\eta_{1}}\leq(\eta_{1}^{-1}-\mu_{f})\exp\left(-\mu_{f}\eta_{1}(\tau_{1}-1)\right)\nonumber \\
 & \overset{\eta_{1}\mu_{f}\leq1,\tau_{1}\geq\frac{T}{4}}{\leq}e(\eta_{1}^{-1}-\mu_{f})\exp\left(-\frac{\mu_{f}\eta_{1}T}{4}\right)=2eL\exp\left(-\frac{T}{4+8\kappa_{f}}\right).\label{eq:str-f-exp-optimal-1}
\end{align}
Now, we observe that
\begin{align*}
\sum_{t=1}^{T}\frac{\gamma_{t}\eta_{t}}{\sum_{s=t}^{T}\gamma_{s}} & =\sum_{t=1}^{\tau_{1}}\frac{\gamma_{t}\eta_{t}}{\sum_{s=t}^{T}\gamma_{s}}+\sum_{t=\tau_{1}+1}^{\tau_{2}}\frac{\gamma_{t}\eta_{t}}{\sum_{s=t}^{T}\gamma_{s}}+\sum_{t=\tau_{2}+1}^{T}\frac{\gamma_{t}\eta_{t}}{\sum_{s=t}^{T}\gamma_{s}}\\
 & =\underbrace{\sum_{t=1}^{\tau_{1}}\frac{\gamma_{t}\eta_{1}}{\sum_{s=t}^{T}\gamma_{s}}}_{\mathrm{I}}+\underbrace{\sum_{t=\tau_{1}+1}^{\tau_{2}}\frac{\gamma_{t}\eta_{t}}{\sum_{s=t}^{T}\gamma_{s}}}_{\mathrm{II}}+\underbrace{\sum_{t=\tau_{2}+1}^{T}\frac{\gamma_{t}\eta_{t}}{\sum_{s=t}^{T}\gamma_{s}}}_{\mathrm{III}}.
\end{align*}
We bound terms $\mathrm{I}$, $\mathrm{II}$, and $\mathrm{III}$
separately in the following.
\begin{itemize}
\item Term $\mathrm{I}$: When $t\leq\tau_{1}$, we note that
\begin{align*}
\frac{\gamma_{t}}{\sum_{s=t}^{T}\gamma_{s}} & \leq\frac{\gamma_{t}}{\sum_{s=\tau_{1}+1}^{\tau_{2}}\gamma_{s}}\overset{(\ref{eq:str-f-exp-optimal-gamma})}{=}\frac{\mu_{f}(1+2\kappa_{f})(1+4\kappa_{f})\eta_{1}(1-\mu_{f}\eta_{1})^{\tau_{1}-t}}{\sum_{s=\tau_{1}+1}^{\tau_{2}}s-\tau_{1}+1+4\kappa_{f}}\\
 & =\frac{2\mu_{f}(1+2\kappa_{f})(1+4\kappa_{f})\eta_{1}(1-\mu_{f}\eta_{1})^{\tau_{1}-t}}{(\tau_{2}-\tau_{1})(\tau_{2}-\tau_{1}+3+8\kappa_{f})},
\end{align*}
implying that
\begin{align*}
\mathrm{I} & \leq\frac{2\mu_{f}(1+2\kappa_{f})(1+4\kappa_{f})\eta_{1}^{2}\sum_{t=1}^{\tau_{1}}(1-\mu_{f}\eta_{1})^{\tau_{1}-t}}{(\tau_{2}-\tau_{1})(\tau_{2}-\tau_{1}+3+8\kappa_{f})}\leq\frac{2\mu_{f}(1+2\kappa_{f})(1+4\kappa_{f})\eta_{1}^{2}(\tau_{1}\land\frac{1}{\mu_{f}\eta_{1}})}{(\tau_{2}-\tau_{1})(\tau_{2}-\tau_{1}+3+8\kappa_{f})}\\
 & =\frac{2(1+2\kappa_{f})(1+4\kappa_{f})(\tau_{1}\land(\eta+2\kappa_{f}))}{\mu_{f}(\eta+2\kappa_{f})^{2}(\tau_{2}-\tau_{1})(\tau_{2}-\tau_{1}+3+8\kappa_{f})}\overset{(\ref{eq:str-f-exp-optimal-eta-star})}{\leq}\frac{4\tau_{1}}{\mu_{f}(\tau_{2}-\tau_{1})(\tau_{2}-\tau_{1}+3+8\kappa_{f})}\\
 & \overset{\tau_{1}\leq\frac{T+3}{4},\tau_{2}\geq\frac{T}{2},T\geq4}{\leq}\frac{112}{\mu_{f}(T+9+32\kappa_{f})}\leq\frac{112}{\mu_{f}(T+\kappa_{f})}.
\end{align*}
\item Term $\mathrm{II}$: When $\tau_{1}+1\leq t\leq\tau_{2}$, we note
that
\begin{align*}
\frac{\gamma_{t}}{\sum_{s=t}^{T}\gamma_{s}} & \leq\frac{\gamma_{t}}{\sum_{s=\tau_{2}+1}^{T}\gamma_{s}}\overset{(\ref{eq:str-f-exp-optimal-gamma})}{=}\frac{2(t-\tau_{1}+1+4\kappa_{f})}{\mu_{f}\sum_{s=\tau_{2}+1}^{T}\frac{(\tau_{2}-\tau_{1}+1+4\kappa_{f})(\tau_{2}-\tau_{1}+2+4\kappa_{f})\eta_{*}(T-s+1)}{\prod_{\ell=\tau_{2}+1}^{s}(1-\mu_{f}\eta_{\ell})}}\\
 & \leq\frac{2(t-\tau_{1}+1+4\kappa_{f})}{\mu_{f}(\tau_{2}-\tau_{1}+1+4\kappa_{f})(\tau_{2}-\tau_{1}+2+4\kappa_{f})\eta_{*}\sum_{s=\tau_{2}+1}^{T}T-s+1}\\
 & =\frac{2(t-\tau_{1}+1+4\kappa_{f})}{\mu_{f}(\tau_{2}-\tau_{1}+1+4\kappa_{f})(\tau_{2}-\tau_{1}+2+4\kappa_{f})\eta_{*}(T-\tau_{2})(T-\tau_{2}+1)},
\end{align*}
implying that
\begin{align*}
\mathrm{II} & \leq\frac{4(\tau_{2}-\tau_{1})}{\mu_{f}^{2}(\tau_{2}-\tau_{1}+1+4\kappa_{f})(\tau_{2}-\tau_{1}+2+4\kappa_{f})\eta_{*}(T-\tau_{2})(T-\tau_{2}+1)}\\
 & \overset{(\ref{eq:str-f-exp-optimal-eta-star})}{=}\frac{4(\tau_{2}-\tau_{1})(2\kappa_{f}\lor(T+\kappa_{f}))}{\mu_{f}(\tau_{2}-\tau_{1}+1+4\kappa_{f})(\tau_{2}-\tau_{1}+2+4\kappa_{f})(T-\tau_{2}+1)}\\
 & \overset{(a)}{\leq}\frac{3(2\kappa_{f}\lor(T+\kappa_{f}))}{\mu_{f}(\tau_{2}-\tau_{1}+1+4\kappa_{f})(\tau_{2}-\tau_{1}+2+4\kappa_{f})}\overset{(b)}{\leq}\frac{48(2\kappa_{f}\lor(T+\kappa_{f}))}{\mu_{f}(T+1+16\kappa_{f})(T+5+16\kappa_{f})}\\
 & \leq\frac{54}{\mu_{f}(T+\kappa_{f})},
\end{align*}
where $(a)$ is due to $\frac{\tau_{2}-\tau_{1}}{T-\tau_{2}+1}\leq\frac{\frac{T+1}{2}-\frac{T}{4}}{\frac{T+1}{2}}=\frac{T+2}{2(T+1)}\leq\frac{3}{4}$,
and $(b)$ is by $\tau_{2}-\tau_{1}\geq\frac{T}{2}-\frac{T+3}{4}=\frac{T-3}{4}$.
\item Term $\mathrm{III}$: When $t\geq\tau_{2}+1$, we note that $\frac{\gamma_{t}}{\eta_{t}}=\frac{1}{\prod_{s=2}^{t}(1-\mu_{f}\eta_{s})}$
is always non-decreasing. Hence, we have
\[
\frac{\gamma_{t}}{\sum_{s=t}^{T}\gamma_{s}}=\frac{\frac{\gamma_{t}}{\eta_{t}}\eta_{t}}{\sum_{s=t}^{T}\frac{\gamma_{s}}{\eta_{s}}\eta_{s}}\leq\frac{\eta_{t}}{\sum_{s=t}^{T}\eta_{s}}=\frac{T-t+1}{\sum_{s=t}^{T}T-s+1}=\frac{2}{T-t+2},
\]
implying that
\[
\mathrm{III}\leq2\eta_{*}\sum_{t=\tau_{2}+1}^{T}\frac{T-t+1}{T-t+2}\leq2\eta_{*}(T-\tau_{2})\overset{(\ref{eq:str-f-exp-optimal-eta-star})}{\leq}\frac{2}{\mu_{f}(T+\kappa_{f})}.
\]
\end{itemize}
Therefore, we obtain
\begin{equation}
\sum_{t=1}^{T}\frac{\gamma_{t}\eta_{t}}{\sum_{s=t}^{T}\gamma_{s}}\leq\frac{112}{\mu_{f}(T+\kappa_{f})}+\frac{54}{\mu_{f}(T+\kappa_{f})}+\frac{2}{\mu_{f}(T+\kappa_{f})}=\frac{168}{\mu_{f}(T+\kappa_{f})}.\label{eq:str-f-exp-optimal-2}
\end{equation}
We combine (\ref{eq:str-f-exp-optimal-general}), (\ref{eq:str-f-exp-optimal-1}),
and (\ref{eq:str-f-exp-optimal-2}) to conclude
\begin{equation}
\E\left[F(x^{T+1})-F(x)\right]\leq\frac{2eLD_{\psi}(x,x^{1})}{\exp\left(\frac{T}{4+8\kappa_{f}}\right)}+\frac{336(M^{2}+\sigma^{2})}{\mu_{f}(T+\kappa_{f})}=O\left(\frac{LD_{\psi}(x,x^{1})}{\exp\left(\frac{T}{4+8\kappa_{f}}\right)}+\frac{M^{2}+\sigma^{2}}{\mu_{f}(T+\kappa_{f})}\right).\label{eq:str-f-exp-optimal-3}
\end{equation}
\end{proof}

\begin{thm}[Full version of Theorem \ref{thm:main-str-hp-optimal}]
\label{thm:str-f-hp-optimal}Under Assumptions \ref{enu:A2}-\ref{enu:A4}
and \ref{enu:A5B} with $\mu_{f}>0$ and $\mu_{h}=0$, let $\kappa_{f}\coloneqq\frac{L}{\mu_{f}}\geq0$
and $\delta\in(0,1)$, if $T$ is known, by taking $\eta_{t\in\left[T\right]}=\begin{cases}
\frac{1}{\mu_{f}(\eta+2\kappa_{f})} & t\leq\tau_{1}\\
\frac{2}{\mu_{f}(t-\tau_{1}+2+4\kappa_{f})} & \tau_{1}+1\leq t\leq\tau_{2}\\
\frac{T-t+1}{2L(T-\tau_{2})}\land\frac{T-t+1}{\mu_{f}(T-\tau_{2})(T+2\kappa_{f})} & t\geq\tau_{2}+1
\end{cases}$ where $\eta\geq0$ can be any number satisfying $\eta+\kappa_{f}>1$,
$\tau_{1}\coloneqq\left\lceil \frac{T}{4}\right\rceil $ and $\tau_{2}\coloneqq\left\lceil \frac{T}{2}\right\rceil $,
then with probability at least $1-\delta$, there is
\[
F(x^{T+1})-F(x)\leq O\left(\left(1\lor\frac{1}{\eta+2\kappa_{f}-1}\right)\left[\frac{\mu_{f}(\eta+\kappa_{f})D_{\psi}(x,x^{1})}{\exp\left(\frac{T}{4\eta+8\kappa_{f}}\right)}+\frac{M^{2}+\sigma^{2}\log\frac{1}{\delta}}{\mu_{f}(T+\kappa_{f})}\right]\right).
\]
\end{thm}

\begin{proof}
When $\mu_{f}>0$ and $\mu_{h}=0$, suppose the condition of $\eta_{t\in\left[T\right]}\leq\frac{1}{2L\lor\mu_{f}}$
in Lemma \ref{lem:core-hp} holds, we have, with probability at least
$1-\delta$,
\begin{equation}
F(x^{T+1})-F(x)\leq2\left(1+\max_{2\leq t\leq T}\frac{1}{1-\mu_{f}\eta_{t}}\right)\left[\frac{D_{\psi}(x,x^{1})}{\sum_{t=1}^{T}\gamma_{t}}+\left(M^{2}+\sigma^{2}\left(1+2\log\frac{2}{\delta}\right)\right)\sum_{t=1}^{T}\frac{\gamma_{t}\eta_{t}}{\sum_{s=t}^{T}\gamma_{s}}\right],\label{eq:str-f-hp-optimal-general}
\end{equation}
where $\gamma_{t\in\left[T\right]}=\eta_{t}\prod_{s=2}^{t}\frac{1+\mu_{h}\eta_{s-1}}{1-\mu_{f}\eta_{s}}=\frac{\eta_{t}}{\prod_{s=2}^{t}(1-\mu_{f}\eta_{s})}$.
Now, let us check the condition of $\eta_{t\in\left[T\right]}\leq\frac{1}{2L\lor\mu_{f}}$:
\[
\eta_{t\in\left[T\right]}=\begin{cases}
\frac{1}{\mu_{f}(\eta+2\kappa_{f})}\leq\frac{1}{\mu_{f}(2\kappa_{f}\lor(\eta+\kappa_{f}))}\leq\frac{1}{2L\lor\mu_{f}} & t\leq\tau_{1}\\
\frac{2}{\mu_{f}(t-\tau_{1}+2+4\kappa_{f})}\leq\frac{1}{\mu_{f}+2L}\leq\frac{1}{2L\lor\mu_{f}} & \tau_{1}+1\leq t\leq\tau_{2}\\
\frac{T-t+1}{2L(T-\tau_{2})}\land\frac{T-t+1}{\mu_{f}(T-\tau_{2})(T+2\kappa_{f})}\leq\frac{1}{2L}\land\frac{1}{\mu(T+2\kappa_{f})}\leq\frac{1}{2L\lor\mu_{f}} & t\geq\tau_{2}+1
\end{cases}.
\]
Therefore, (\ref{eq:str-f-hp-optimal-general}) holds.

Note that we have
\[
\eta_{\tau_{1}+1}=\frac{1}{\mu_{f}(1.5+2\kappa_{f})}\geq\frac{1}{\mu_{f}(T+2\kappa_{f})}\geq\eta_{\tau_{2}+1},
\]
which implies that
\begin{align*}
1+\max_{2\leq t\leq T}\frac{1}{1-\mu_{f}\eta_{t}} & =1+\frac{1}{1-\mu_{f}(\eta_{2}\lor\eta_{\tau_{1}+1}\lor\eta_{\tau_{2}+1})}=1+\frac{1}{1-\mu_{f}(\eta_{2}\lor\eta_{\tau_{1}+1})}\\
 & =2+\frac{1}{\eta\land1.5+2\kappa_{f}-1}=2+\frac{1}{\eta+2\kappa_{f}-1}\lor\frac{1}{0.5+2\kappa_{f}}\\
 & \leq4+\frac{1}{\eta+2\kappa_{f}-1}.
\end{align*}
Thus, using (\ref{eq:str-f-hp-optimal-general}), we obtain
\[
F(x^{T+1})-F(x)\leq\left(8+\frac{2}{\eta+2\kappa_{f}-1}\right)\left(\frac{D_{\psi}(x,x^{1})}{\sum_{t=1}^{T}\gamma_{t}}+\left(M^{2}+\sigma^{2}\left(1+2\log\frac{2}{\delta}\right)\right)\sum_{t=1}^{T}\frac{\gamma_{t}\eta_{t}}{\sum_{s=t}^{T}\gamma_{s}}\right).
\]
Following similar steps in the proof of (\ref{eq:str-f-exp-optimal-3}),
we can conclude
\[
F(x^{T+1})-F(x)\leq O\left(\left(1\lor\frac{1}{\eta+2\kappa_{f}-1}\right)\left[\frac{\mu_{f}(\eta+\kappa_{f})D_{\psi}(x,x^{1})}{\exp\left(\frac{T}{4\eta+8\kappa_{f}}\right)}+\frac{M^{2}+\sigma^{2}\log\frac{1}{\delta}}{\mu_{f}(T+\kappa_{f})}\right]\right).
\]
\end{proof}

\subsection{The Case of $\mu_{h}>0$\label{subsec:str-h}}
\begin{thm}
\label{thm:str-h-exp}Under Assumptions \ref{enu:A2}-\ref{enu:A4}
and \ref{enu:A5A} with $\mu_{f}=0$ and $\mu_{h}>0$, let $\kappa_{h}\coloneqq\frac{L}{\mu_{h}}\geq0$,
for any $x\in\dom$:

If $T$ is unknown, by taking $\eta_{t\in\left[T\right]}=\frac{2}{\mu_{h}(t+4\kappa_{h})}$,
there is
\[
\E\left[F(x^{T+1})-F(x)\right]\leq O\left(\frac{\mu_{h}(1+\kappa_{h})^{2}D_{\psi}(x,x^{1})}{T(T+\kappa_{h})}+\frac{(M^{2}+\sigma^{2})\log T}{\mu_{h}(T+\kappa_{h})}\right).
\]

If $T$ is known, by taking $\eta_{t\in\left[T\right]}=\begin{cases}
\frac{1}{\mu_{h}(\eta+2\kappa_{h})} & t\leq\tau\\
\frac{2}{\mu_{h}(t-\tau+4\kappa_{h})} & t\geq\tau+1
\end{cases}$ where $\eta\geq0$ can be any number satisfying $\eta+\kappa_{h}>0$
and $\tau\coloneqq\left\lceil \frac{T}{2}\right\rceil $, there is
\[
\E\left[F(x^{T+1})-F(x)\right]\leq O\left(\frac{\mu_{h}D_{\psi}(x,x^{1})}{\exp\left(\frac{T}{2(1+\eta+2\kappa_{h})}\right)-1}+\left(\frac{1}{\eta+2\kappa_{h}}\lor\log T\right)\frac{M^{2}+\sigma^{2}}{\mu_{h}(T+\kappa_{h})}\right).
\]
\end{thm}

\begin{proof}
When $\mu_{f}=0$ and $\mu_{h}>0$, suppose the condition of $\eta_{t\in\left[T\right]}\leq\frac{1}{2L\lor\mu_{f}}=\frac{1}{2L}$
in Lemma \ref{lem:core-exp} holds, we have
\begin{align}
\E\left[F(x^{T+1})-F(x)\right] & \leq\frac{(1-\mu_{f}\eta_{1})D_{\psi}(x,x^{1})}{\sum_{t=1}^{T}\gamma_{t}}+2(M^{2}+\sigma^{2})\sum_{t=1}^{T}\frac{\gamma_{t}\eta_{t}}{\sum_{s=t}^{T}\gamma_{s}}\nonumber \\
 & =\frac{D_{\psi}(x,x^{1})}{\sum_{t=1}^{T}\gamma_{t}}+2(M^{2}+\sigma^{2})\sum_{t=1}^{T}\frac{\gamma_{t}\eta_{t}}{\sum_{s=t}^{T}\gamma_{s}},\label{eq:str-h-exp-general}
\end{align}
where $\gamma_{t\in\left[T\right]}=\eta_{t}\prod_{s=2}^{t}\frac{1+\mu_{h}\eta_{s-1}}{1-\mu_{f}\eta_{s}}=\eta_{t}\prod_{s=2}^{t}(1+\mu_{h}\eta_{s-1})$.
Now, let us check the condition of $\eta_{t\in\left[T\right]}\leq\frac{1}{2L}$
for the two choices respectively:
\begin{align*}
\eta_{t\in\left[T\right]} & =\frac{2}{\mu_{h}(t+4\kappa_{h})}\leq\frac{1}{2\kappa_{h}\mu_{h}}=\frac{1}{2L};\\
\eta_{t\in\left[T\right]} & =\begin{cases}
\frac{1}{\mu_{h}(\eta+2\kappa_{h})}\leq\frac{1}{2\kappa_{h}\mu_{h}}=\frac{1}{2L} & t\leq\tau\\
\frac{2}{\mu_{h}(t-\tau+4\kappa_{h})}\leq\frac{1}{2\kappa_{h}\mu_{h}}=\frac{1}{2L} & t\geq\tau+1
\end{cases}.
\end{align*}
Therefore, (\ref{eq:str-h-exp-general}) holds for both cases.

If $\eta_{t\in\left[T\right]}=\frac{2}{\mu_{h}(t+4\kappa_{h})}$,
we have $\eta_{1}=\frac{2}{\mu_{h}(1+4\kappa_{h})}$ and
\begin{align}
\gamma_{t\in\left[T\right]} & =\eta_{t}\prod_{s=2}^{t}(1+\mu_{h}\eta_{s-1})=\frac{2}{\mu_{h}(t+4\kappa_{h})}\prod_{s=2}^{t}\left(1+\frac{2}{s-1+4\kappa_{h}}\right)\nonumber \\
 & =\frac{2(t+1+4\kappa_{h})}{\mu_{h}(1+4\kappa_{h})(2+4\kappa_{h})}=\frac{t+1+4\kappa_{h}}{2+4\kappa_{h}}\eta_{1}.\label{eq:str-h-exp-gamma-1}
\end{align}
Hence, by (\ref{eq:str-h-exp-general}) and (\ref{eq:str-h-exp-gamma-1}),
\begin{align}
\E\left[F(x^{T+1})-F(x)\right] & \leq\frac{(2+4\kappa_{h})\eta_{1}^{-1}D_{\psi}(x,x^{1})}{\sum_{t=1}^{T}t+1+4\kappa_{h}}+2(M^{2}+\sigma^{2})\sum_{t=1}^{T}\frac{(t+1+4\kappa_{h})\eta_{t}}{\sum_{s=t}^{T}s+1+4\kappa_{h}}\nonumber \\
 & =\frac{\mu_{h}(2+4\kappa_{h})(1+4\kappa_{h})D_{\psi}(x,x^{1})}{T(T+3+8\kappa_{h})}+\frac{8(M^{2}+\sigma^{2})}{\mu_{h}}\sum_{t=1}^{T}\frac{t+1+4\kappa_{h}}{(t+4\kappa_{h})(T-t+1)(T+t+2+8\kappa_{h})}\nonumber \\
 & \leq\frac{\mu_{h}(2+4\kappa_{h})(1+4\kappa_{h})D_{\psi}(x,x^{1})}{T(T+3+8\kappa_{h})}+\frac{16(M^{2}+\sigma^{2})}{\mu_{h}}\sum_{t=1}^{T}\frac{1}{(T-t+1)(T+t+2+8\kappa_{h})}\nonumber \\
 & =\frac{\mu_{h}(2+4\kappa_{h})(1+4\kappa_{h})D_{\psi}(x,x^{1})}{T(T+3+8\kappa_{h})}+\frac{16(M^{2}+\sigma^{2})}{\mu_{h}(2T+3+8\kappa_{h})}\sum_{t=1}^{T}\frac{1}{T-t+1}+\frac{1}{T+t+2+8\kappa_{h}}\nonumber \\
 & \leq\frac{\mu_{h}(2+4\kappa_{h})(1+4\kappa_{h})D_{\psi}(x,x^{1})}{T(T+3+8\kappa_{h})}+\frac{16(M^{2}+\sigma^{2})}{\mu_{h}(2T+3+8\kappa_{h})}\left(1+\log T+\log\frac{2T+2+8\kappa_{h}}{T+2+8\kappa_{h}}\right)\nonumber \\
 & \leq\frac{\mu_{h}(2+4\kappa_{h})(1+4\kappa_{h})D_{\psi}(x,x^{1})}{T(T+3+8\kappa_{h})}+\frac{16(M^{2}+\sigma^{2})(1+\log2T)}{\mu_{h}(2T+3+8\kappa_{h})}\nonumber \\
 & =O\left(\frac{\mu_{h}(1+\kappa_{h})^{2}D_{\psi}(x,x^{1})}{T(T+\kappa_{h})}+\frac{(M^{2}+\sigma^{2})\log T}{\mu_{h}(T+\kappa_{h})}\right).\label{eq:str-h-exp-1}
\end{align}

If $\eta_{t\in\left[T\right]}=\begin{cases}
\frac{1}{\mu_{h}(\eta+2\kappa_{h})} & t\leq\tau\\
\frac{2}{\mu_{h}(t-\tau+4\kappa_{h})} & t\geq\tau+1
\end{cases}$, where $\eta\geq0$ satisfies $\eta+\kappa_{h}>0$ and $\tau=\left\lceil \frac{T}{2}\right\rceil $,
we know $\eta_{1}=\frac{1}{\mu_{h}(\eta+2\kappa_{h})}$ and
\begin{equation}
\gamma_{t\in\left[T\right]}=\eta_{t}\prod_{s=2}^{t}(1+\mu_{h}\eta_{s-1})=\begin{cases}
\eta_{1}(1+\mu_{h}\eta_{1})^{t-1} & t\leq\tau\\
\frac{2(t-\tau+1+4\kappa_{h})}{\mu_{h}(1+4\kappa_{h})(2+4\kappa_{h})}(1+\mu_{h}\eta_{1})^{\tau} & t\geq\tau+1
\end{cases}.\label{eq:str-h-exp-gamma-2}
\end{equation}
So, there is
\begin{align}
\frac{1}{\sum_{t=1}^{T}\gamma_{t}} & \leq\frac{1}{\sum_{t=1}^{\tau}\gamma_{t}}\overset{(\ref{eq:str-h-exp-gamma-2})}{=}\frac{1}{\sum_{t=1}^{\tau}\eta_{1}(1+\mu_{h}\eta_{1})^{t-1}}=\frac{\mu_{h}}{(1+\mu_{h}\eta_{1})^{\tau}-1}\nonumber \\
 & \overset{(a)}{\leq}\frac{\mu_{h}}{\exp\left(\frac{\mu_{h}\eta_{1}\tau}{1+\mu_{h}\eta_{1}}\right)-1}=\frac{\mu_{h}}{\exp\left(\frac{\tau}{1+\eta+2\kappa_{h}}\right)-1}\overset{\tau\geq\frac{T}{2}}{\leq}\frac{\mu_{h}}{\exp\left(\frac{T}{2(1+\eta+2\kappa_{h})}\right)-1},\label{eq:str-h-exp-2}
\end{align}
where $(a)$ is by
\[
(1+\mu_{h}\eta_{1})^{\tau}=\exp\left(\tau\log\left(1+\mu_{h}\eta_{1}\right)\right)\geq\exp\left(\frac{\mu_{h}\eta_{1}\tau}{1+\mu_{h}\eta_{1}}\right).
\]
Now, we observe that
\[
\sum_{t=1}^{T}\frac{\gamma_{t}\eta_{t}}{\sum_{s=t}^{T}\gamma_{s}}=\sum_{t=1}^{\tau}\frac{\gamma_{t}\eta_{t}}{\sum_{s=t}^{T}\gamma_{s}}+\sum_{t=\tau+1}^{T}\frac{\gamma_{t}\eta_{t}}{\sum_{s=t}^{T}\gamma_{s}}=\underbrace{\sum_{t=1}^{\tau}\frac{\gamma_{t}\eta_{1}}{\sum_{s=t}^{T}\gamma_{s}}}_{\mathrm{I}}+\underbrace{\sum_{t=\tau+1}^{T}\frac{\gamma_{t}\eta_{t}}{\sum_{s=t}^{T}\gamma_{s}}}_{\mathrm{II}}.
\]
We bound terms $\mathrm{I}$ and $\mathrm{II}$ separately in the
following.
\begin{itemize}
\item Term $\mathrm{I}$: When $t\leq\tau$, we have
\begin{align*}
\frac{\gamma_{t}}{\sum_{s=t}^{T}\gamma_{s}} & \leq\frac{\gamma_{t}}{\sum_{s=\tau+1}^{T}\gamma_{s}}\overset{(\ref{eq:str-h-exp-gamma-2})}{=}\frac{\eta_{1}(1+\mu_{h}\eta_{1})^{t-1}}{\sum_{s=\tau+1}^{T}\frac{2(s-\tau+1+4\kappa_{h})}{\mu_{h}(1+4\kappa_{h})(2+4\kappa_{h})}(1+\mu_{h}\eta_{1})^{\tau}}\\
 & \leq\frac{\eta_{1}}{(1+\mu_{h}\eta_{1})\sum_{s=\tau+1}^{T}\frac{2(s-\tau+1+4\kappa_{h})}{\mu_{h}(1+4\kappa_{h})(2+4\kappa_{h})}}=\frac{\mu_{h}(1+4\kappa_{h})(2+4\kappa_{h})\eta_{1}}{(1+\mu_{h}\eta_{1})(T-\tau)(T-\tau+3+8\kappa_{h})},
\end{align*}
implying that
\begin{align*}
\mathrm{I} & \leq\frac{\mu_{h}(1+4\kappa_{h})(2+4\kappa_{h})\eta_{1}^{2}\tau}{(1+\mu_{h}\eta_{1})(T-\tau)(T-\tau+3+8\kappa_{h})}\overset{\tau\leq\frac{T+1}{2},T\geq2}{\leq}\frac{6\mu_{h}(1+4\kappa_{h})(2+4\kappa_{h})\eta_{1}^{2}}{(1+\mu_{h}\eta_{1})(T+5+16\kappa_{h})}\\
 & =\frac{6(1+4\kappa_{h})(2+4\kappa_{h})}{\mu_{h}(\eta+2\kappa_{h})(1+\eta+2\kappa_{h})(T+5+16\kappa_{h})}\leq\frac{12\left(2+\frac{1}{\eta+2\kappa_{h}}\right)}{\mu_{h}(T+5+16\kappa_{h})}.
\end{align*}
\item Term $\mathrm{II}$: When $t\geq\tau+1$, we have
\[
\frac{\gamma_{t}}{\sum_{s=t}^{T}\gamma_{s}}\overset{(\ref{eq:str-h-exp-gamma-2})}{=}\frac{t-\tau+1+4\kappa_{h}}{\sum_{s=t}^{T}s-\tau+1+4\kappa_{h}}=\frac{2(t-\tau+1+4\kappa_{h})}{(T-t+1)(T+t-2\tau+2+8\kappa_{h})},
\]
implying that
\begin{align*}
\mathrm{II} & \leq\sum_{t=\tau+1}^{T}\frac{8}{\mu_{h}(T-t+1)(T+t-2\tau+2+8\kappa_{h})}\\
 & =\frac{8}{\mu_{h}(2T-2\tau+3+8\kappa_{h})}\sum_{t=\tau+1}^{T}\frac{1}{T-t+1}+\frac{1}{T+t-2\tau+2+8\kappa_{h}}\\
 & \leq\frac{8}{\mu_{h}(2T-2\tau+3+8\kappa_{h})}\left(1+\log(T-\tau)+\log\frac{2T-2\tau+2+8\kappa_{h}}{T-\tau+2+8\kappa_{h}}\right)\\
 & \overset{\frac{T}{2}\leq\tau\leq\frac{T+1}{2}}{\leq}\frac{8(1+\log T)}{\mu_{h}(T+2+8\kappa_{h})}.
\end{align*}
\end{itemize}
Therefore, we obtain
\begin{equation}
\sum_{t=1}^{T}\frac{\gamma_{t}\eta_{t}}{\sum_{s=t}^{T}\gamma_{s}}\leq\frac{12\left(2+\frac{1}{\eta+2\kappa_{h}}\right)}{\mu_{h}(T+5+16\kappa_{h})}+\frac{8(1+\log T)}{\mu_{h}(T+2+8\kappa_{h})}\leq\frac{12\left(2+\frac{1}{\eta+2\kappa_{h}}\right)+8(1+\log T)}{\mu_{h}(T+2+8\kappa_{h})}.\label{eq:str-h-exp-3}
\end{equation}
We combine (\ref{eq:str-h-exp-general}), (\ref{eq:str-h-exp-2}),
and (\ref{eq:str-h-exp-3}) to conclude
\begin{align}
\E\left[F(x^{T+1})-F(x)\right] & \leq\frac{\mu_{h}D_{\psi}(x,x^{1})}{\exp\left(\frac{T}{2(1+\eta+2\kappa_{h})}\right)-1}+\frac{8(M^{2}+\sigma^{2})\left[3\left(2+\frac{1}{\eta+2\kappa_{h}}\right)+2(1+\log T)\right]}{\mu_{h}(T+2+8\kappa_{h})}\nonumber \\
 & =O\left(\frac{\mu_{h}D_{\psi}(x,x^{1})}{\exp\left(\frac{T}{2(1+\eta+2\kappa_{h})}\right)-1}+\left(\frac{1}{\eta+2\kappa_{h}}\lor\log T\right)\frac{(M^{2}+\sigma^{2})}{\mu_{h}(T+\kappa_{h})}\right).\label{eq:str-h-exp-4}
\end{align}
\end{proof}

\begin{thm}
\label{thm:str-h-hp}Under Assumptions \ref{enu:A2}-\ref{enu:A4}
and \ref{enu:A5B} with $\mu_{f}=0$ and $\mu_{h}>0$, let $\kappa_{h}\coloneqq\frac{L}{\mu_{h}}\geq0$
and $\delta\in(0,1)$, for any $x\in\dom$:

If $T$ is unknown, by taking $\eta_{t\in\left[T\right]}=\frac{2}{\mu_{h}(t+4\kappa_{h})}$,
then with probability at least $1-\delta$, there is
\[
F(x^{T+1})-F(x)\leq O\left(\frac{\mu_{h}(1+\kappa_{h})^{2}D_{\psi}(x,x^{1})}{T(T+\kappa_{h})}+\frac{(M^{2}+\sigma^{2}\log\frac{1}{\delta})\log T}{\mu_{h}(T+\kappa_{h})}\right).
\]

If $T$ is known, by taking $\eta_{t\in\left[T\right]}=\begin{cases}
\frac{1}{\mu_{h}(\eta+2\kappa_{h})} & t\leq\tau\\
\frac{2}{\mu_{h}(t-\tau+4\kappa_{h})} & t\geq\tau+1
\end{cases}$ where $\eta\geq0$ can be any number satisfying $\eta+\kappa_{h}>0$
and $\tau\coloneqq\left\lceil \frac{T}{2}\right\rceil $, then with
probability at least $1-\delta$, there is
\[
F(x^{T+1})-F(x)\leq O\left(\frac{\mu_{h}D_{\psi}(x,x^{1})}{\exp\left(\frac{T}{2(1+\eta+2\kappa_{h})}\right)-1}+\left(\frac{1}{\eta+2\kappa_{h}}\lor\log T\right)\frac{M^{2}+\sigma^{2}\log\frac{1}{\delta}}{\mu_{h}(T+\kappa_{h})}\right).
\]
\end{thm}

\begin{proof}
When $\mu_{f}=0$ and $\mu_{h}>0$, suppose the condition of $\eta_{t\in\left[T\right]}\leq\frac{1}{2L\lor\mu_{f}}=\frac{1}{2L}$
in Lemma \ref{lem:core-hp} holds, we have, with probability at least
$1-\delta$,
\begin{align}
F(x^{T+1})-F(x) & \leq2\left(1+\max_{2\leq t\leq T}\frac{1}{1-\mu_{f}\eta_{t}}\right)\left[\frac{D_{\psi}(x,x^{1})}{\sum_{t=1}^{T}\gamma_{t}}+\left(M^{2}+\sigma^{2}\left(1+2\log\frac{2}{\delta}\right)\right)\sum_{t=1}^{T}\frac{\gamma_{t}\eta_{t}}{\sum_{s=t}^{T}\gamma_{s}}\right]\nonumber \\
 & =\frac{4D_{\psi}(x,x^{1})}{\sum_{t=1}^{T}\gamma_{t}}+4\left(M^{2}+\sigma^{2}\left(1+2\log\frac{2}{\delta}\right)\right)\sum_{t=1}^{T}\frac{\gamma_{t}\eta_{t}}{\sum_{s=t}^{T}\gamma_{s}}.\label{eq:str-h-hp-general}
\end{align}
where $\gamma_{t\in\left[T\right]}=\eta_{t}\prod_{s=2}^{t}\frac{1+\mu_{h}\eta_{s-1}}{1-\mu_{f}\eta_{s}}=\eta_{t}\prod_{s=2}^{t}(1+\mu_{h}\eta_{s-1})$.
Now let us check the condition of $\eta_{t\in\left[T\right]}\leq\frac{1}{2L}$
for the two choices respectively:
\begin{align*}
\eta_{t\in\left[T\right]} & =\frac{2}{\mu_{h}(t+4\kappa_{h})}\leq\frac{1}{2\kappa_{h}\mu_{h}}=\frac{1}{2L};\\
\eta_{t\in\left[T\right]} & =\begin{cases}
\frac{1}{\mu_{h}(\eta+2\kappa_{h})}\leq\frac{1}{2\kappa_{h}\mu_{h}}=\frac{1}{2L} & t\leq\tau\\
\frac{2}{\mu_{h}(t-\tau+4\kappa_{h})}\leq\frac{1}{2\kappa_{h}\mu_{h}}=\frac{1}{2L} & t\geq\tau+1
\end{cases}.
\end{align*}
Therefore, (\ref{eq:str-h-hp-general}) holds for both cases.

If $\eta_{t\in\left[T\right]}=\frac{2}{\mu_{h}(t+4\kappa_{h})}$,
following the similar steps in the proof of (\ref{eq:str-h-exp-1}),
we can finally get
\[
F(x^{T+1})-F(x)\leq O\left(\frac{\mu_{h}(1+\kappa_{h})^{2}D_{\psi}(x,x^{1})}{T(T+\kappa_{h})}+\frac{(M^{2}+\sigma^{2}\log\frac{1}{\delta})\log T}{\mu_{h}(T+\kappa_{h})}\right).
\]

If $\eta_{t\in\left[T\right]}=\begin{cases}
\frac{1}{\mu_{h}(\eta+2\kappa_{h})} & t\leq\tau\\
\frac{2}{\mu_{h}(t-\tau+4\kappa_{h})} & t\geq\tau+1
\end{cases}$, following the similar steps in the proof of (\ref{eq:str-h-exp-4}),
we can finally get
\[
F(x^{T+1})-F(x)\leq O\left(\frac{\mu_{h}D_{\psi}(x,x^{1})}{\exp\left(\frac{T}{2(1+\eta+2\kappa_{h})}\right)-1}+\left(\frac{1}{\eta+2\kappa_{h}}\lor\log T\right)\frac{M^{2}+\sigma^{2}\log\frac{1}{\delta}}{\mu_{h}(T+\kappa_{h})}\right).
\]
\end{proof}

\begin{rem}
The $O(\log T)$ factor in Theorems \ref{thm:str-h-exp} and \ref{thm:str-h-hp}
could be removed with more sophisticated step sizes, similar to those
in Theorems \ref{thm:str-f-exp-optimal} and \ref{thm:str-f-hp-optimal}.
However, we choose not to include them to keep the paper concise.
\end{rem}

\section{Unified Theoretical Analysis under Heavy-Tailed Noise\label{sec:heavy-analysis}}

In this section, we present the analysis under heavy-tailed noise.
We recall that the mirror map $\psi$ is assumed to be $(1,\frac{p}{p-1})$
uniformly convex now, i.e.,
\[
D_{\psi}(x,y)=\psi(x)-\psi(y)-\left\langle \na\psi(y),x-y\right\rangle \geq\frac{p-1}{p}\left\Vert x-y\right\Vert ^{\frac{p}{p-1}},\forall x,y\in\dom.
\]

We first prove the following general lemma. Unlike Lemma \ref{lem:core-general},
the requirement of $\eta_{t\in\left[T\right]}\leq\frac{1}{2L}$ is
removed.
\begin{lem}
\label{lem:heavy-core-general}Under Assumptions \ref{enu:A2} and
\ref{enu:A3} with $\mu_{f}=\mu_{h}=0$, let $v_{t\in\left\{ 0\right\} \cup\left[T\right]}>0$
be defined as $v_{t}\coloneqq\frac{\eta_{T}}{\sum_{s=t\lor1}^{T}\eta_{s}}$,
then for any $x\in\dom$, we have
\begin{align*}
\eta_{T}v_{T}\left(F(x^{T+1})-F(x)\right)\leq & v_{0}D_{\psi}(x,x^{1})+\sum_{t=1}^{T}\eta_{t}v_{t-1}\left\langle \xi^{t},z^{t-1}-x^{t}\right\rangle \\
 & +\sum_{t=1}^{T}\frac{\eta_{t}^{\frac{p}{2-p}}v_{t}(2-p)2^{\frac{3p-4}{2-p}}L^{\frac{p}{2-p}}+\eta_{t}^{p}v_{t}4^{p-1}(M^{p}+\left\Vert \xi^{t}\right\Vert _{*}^{p})}{p},
\end{align*}
where $\xi^{t}\coloneqq\hg^{t}-\E\left[\hg^{t}\mid\F^{t-1}\right],\forall t\in\left[T\right]$
and $z^{t}\coloneqq\frac{v_{0}}{v_{t}}x+\sum_{s=1}^{t}\frac{v_{s}-v_{s-1}}{v_{t}}x^{s},\forall t\in\left\{ 0\right\} \cup\left[T\right]$.
\end{lem}

\begin{proof}
The proof is similar to the proof of Lemma \ref{lem:core-general}.
Again, we use the sequence
\[
z^{t}=\begin{cases}
\left(1-\frac{v_{t-1}}{v_{t}}\right)x^{t}+\frac{v_{t-1}}{v_{t}}z^{t-1} & t\in\left[T\right]\\
x & t=0
\end{cases}\Leftrightarrow z^{t}=\frac{v_{0}}{v_{t}}x+\sum_{s=1}^{t}\frac{v_{s}-v_{s-1}}{v_{t}}x^{s},\forall t\in\left\{ 0\right\} \cup\left[T\right].
\]

As before, we start the proof from the $(L,M)$-smoothness of $f$:
\begin{align}
f(x^{t+1})-f(x^{t})\leq & \left\langle g^{t},x^{t+1}-x^{t}\right\rangle +\frac{L}{2}\left\Vert x^{t+1}-x^{t}\right\Vert ^{2}+M\left\Vert x^{t+1}-x^{t}\right\Vert \nonumber \\
= & \left\langle \xi^{t},z^{t}-x^{t}\right\rangle +\underbrace{\left\langle \xi^{t},x^{t}-x^{t+1}\right\rangle }_{\text{I}}+\underbrace{\left\langle \hg^{t},x^{t+1}-z^{t}\right\rangle }_{\text{II}}\nonumber \\
 & +\underbrace{\left\langle g^{t},z^{t}-x^{t}\right\rangle }_{\text{III}}+\underbrace{\frac{L}{2}\left\Vert x^{t+1}-x^{t}\right\Vert ^{2}+M\left\Vert x^{t+1}-x^{t}\right\Vert }_{\text{IV}},\label{eq:heavy-core-general-1}
\end{align}
where $g^{t}\coloneqq\E\left[\hg^{t}\vert\F^{t-1}\right]\in\partial f(x^{t})$
and $\xi^{t}=\hg^{t}-g^{t}$. Next we bound these four terms respectively.
\begin{itemize}
\item For term I, by applying Cauchy-Schwarz inequality, Young's inequality
and the $(1,\frac{p}{p-1})$ uniform convexity of $\psi$ (see (\ref{eq:uniformly-cvx})),
we can get the following upper bound
\begin{align}
\text{I} & \leq\left\Vert \xi^{t}\right\Vert _{*}\left\Vert x^{t}-x^{t+1}\right\Vert =(4\eta_{t})^{\frac{p-1}{p}}\left\Vert \xi^{t}\right\Vert _{*}\cdot\frac{\left\Vert x^{t}-x^{t+1}\right\Vert }{(4\eta_{t})^{\frac{p-1}{p}}}\nonumber \\
 & \leq\frac{(4\eta_{t})^{p-1}\left\Vert \xi^{t}\right\Vert _{*}^{p}}{p}+\frac{\left\Vert x^{t}-x^{t+1}\right\Vert ^{\frac{p}{p-1}}}{\frac{p}{p-1}\cdot4\eta_{t}}\leq\frac{\eta_{t}^{p-1}4^{p-1}\left\Vert \xi^{t}\right\Vert _{*}^{p}}{p}+\frac{D_{\psi}(x^{t+1},x^{t})}{4\eta_{t}}.\label{eq:heavy-core-general-I}
\end{align}
\item For term II, the same as (\ref{eq:core-general-II}) with $\mu_{h}=0$,
we can obtain
\begin{equation}
\text{II}\leq\frac{D_{\psi}(z^{t},x^{t})-D_{\psi}(z^{t},x^{t+1})-D_{\psi}(x^{t+1},x^{t})}{\eta_{t}}+h(z_{t})-h(x_{t+1}).\label{eq:heavy-core-general-II}
\end{equation}
\item For term III, we simply use the convexity of $f$ to get
\begin{equation}
\text{III}\leq f(z^{t})-f(x^{t}).\label{eq:heavy-core-general-III}
\end{equation}
\item For term IV, we have
\begin{align}
\text{IV} & =\frac{L}{2}(4\eta_{t})^{\frac{2(p-1)}{p}}\cdot\frac{\left\Vert x^{t+1}-x^{t}\right\Vert ^{2}}{(4\eta_{t})^{\frac{2(p-1)}{p}}}+(4\eta_{t})^{\frac{p-1}{p}}M\cdot\frac{\left\Vert x^{t+1}-x^{t}\right\Vert }{(4\eta_{t})^{\frac{p-1}{p}}}\nonumber \\
 & \leq\frac{\left[\frac{L}{2}(4\eta_{t})^{\frac{2(p-1)}{p}}\right]^{\frac{p}{2-p}}}{\frac{p}{2-p}}+\frac{\left\Vert x^{t+1}-x^{t}\right\Vert ^{\frac{p}{p-1}}}{\frac{p}{2(p-1)}\cdot4\eta_{t}}+\frac{(4\eta_{t})^{p-1}M^{p}}{p}+\frac{\left\Vert x^{t+1}-x^{t}\right\Vert ^{\frac{p}{p-1}}}{\frac{p}{p-1}\cdot4\eta_{t}}\nonumber \\
 & \leq\frac{\eta_{t}^{\frac{2(p-1)}{2-p}}(2-p)2^{\frac{3p-4}{2-p}}L^{\frac{p}{2-p}}}{p}+\frac{\eta_{t}^{p-1}4^{p-1}M^{p}}{p}+\frac{3D_{\psi}(x^{t+1},x^{t})}{4\eta_{t}},\label{eq:heavy-core-general-IV}
\end{align}
where the second line is due to Young's inequality and the last inequality
holds by the $(1,\frac{p}{p-1})$ uniform convexity of $\psi$.
\end{itemize}
By plugging the bounds (\ref{eq:heavy-core-general-I}), (\ref{eq:heavy-core-general-II}),
(\ref{eq:heavy-core-general-III}) and (\ref{eq:heavy-core-general-IV})
into (\ref{eq:heavy-core-general-1}), we obtain
\begin{align*}
f(x^{t+1})-f(x^{t})\leq & \left\langle \xi^{t},z^{t}-x^{t}\right\rangle +\frac{\eta_{t}^{p-1}4^{p-1}\left\Vert \xi^{t}\right\Vert _{*}^{p}}{p}+\frac{D_{\psi}(x^{t+1},x^{t})}{4\eta_{t}}\\
 & +\frac{D_{\psi}(z^{t},x^{t})-D_{\psi}(z^{t},x^{t+1})-D_{\psi}(x^{t+1},x^{t})}{\eta_{t}}+h(z_{t})-h(x_{t+1})\\
 & +f(z^{t})-f(x^{t})+\frac{\eta_{t}^{\frac{2(p-1)}{2-p}}(2-p)2^{\frac{3p-4}{2-p}}L^{\frac{p}{2-p}}}{p}+\frac{\eta_{t}^{p-1}4^{p-1}M^{p}}{p}+\frac{3D_{\psi}(x^{t+1},x^{t})}{4\eta_{t}}.
\end{align*}
Rearranging the terms to get
\begin{align}
F(x^{t+1})-F(z^{t})\leq & \left\langle \xi^{t},z^{t}-x^{t}\right\rangle +\frac{D_{\psi}(z^{t},x^{t})-D_{\psi}(z^{t},x^{t+1})}{\eta_{t}}\nonumber \\
 & +\frac{\eta_{t}^{\frac{2(p-1)}{2-p}}(2-p)L^{\frac{p}{2-p}}2^{\frac{3p-4}{2-p}}+\eta_{t}^{p-1}4^{p-1}(M^{p}+\left\Vert \xi^{t}\right\Vert _{*}^{p})}{p}\nonumber \\
\leq & \frac{v_{t-1}}{v_{t}}\left\langle \xi^{t},z^{t-1}-x^{t}\right\rangle +\frac{\frac{v_{t-1}}{v_{t}}D_{\psi}(z^{t-1},x^{t})-D_{\psi}(z^{t},x^{t+1})}{\eta_{t}}\nonumber \\
 & +\frac{\eta_{t}^{\frac{2(p-1)}{2-p}}(2-p)2^{\frac{3p-4}{2-p}}L^{\frac{p}{2-p}}+\eta_{t}^{p-1}4^{p-1}(M^{p}+\left\Vert \xi^{t}\right\Vert _{*}^{p})}{p},\label{eq:heavy-core-general-2}
\end{align}
where the last line is again by $z^{t}-x^{t}=\frac{v_{t-1}}{v_{t}}\left(z^{t-1}-x^{t}\right)$
and $D_{\psi}(z^{t},x^{t})\leq\frac{v_{t-1}}{v_{t}}D_{\psi}(z^{t-1},x^{t})$.

Multiplying both sides of (\ref{eq:heavy-core-general-2}) by $\eta_{t}v_{t}$
(these two terms are non-negative) and summing up from $t=1$ to $T$,
we obtain
\begin{align}
\sum_{t=1}^{T}\eta_{t}v_{t}\left(F(x^{t+1})-F(z^{t})\right)\leq & \sum_{t=1}^{T}\eta_{t}v_{t-1}\left\langle \xi^{t},z^{t-1}-x^{t}\right\rangle +\sum_{t=1}^{T}v_{t-1}D_{\psi}(z^{t-1},x^{t})-v_{t}D_{\psi}(z^{t},x^{t+1})\nonumber \\
 & +\sum_{t=1}^{T}\frac{\eta_{t}^{\frac{p}{2-p}}v_{t}(2-p)2^{\frac{3p-4}{2-p}}L^{\frac{p}{2-p}}+\eta_{t}^{p}v_{t}4^{p-1}(M^{p}+\left\Vert \xi^{t}\right\Vert _{*}^{p})}{p}\nonumber \\
= & v_{0}D_{\psi}(x,x^{1})-v_{T}D_{\psi}(z^{T},x^{T+1})+\sum_{t=1}^{T}\eta_{t}v_{t-1}\left\langle \xi^{t},z^{t-1}-x^{t}\right\rangle \nonumber \\
 & +\sum_{t=1}^{T}\frac{\eta_{t}^{\frac{p}{2-p}}v_{t}(2-p)2^{\frac{3p-4}{2-p}}L^{\frac{p}{2-p}}+\eta_{t}^{p}v_{t}4^{p-1}(M^{p}+\left\Vert \xi^{t}\right\Vert _{*}^{p})}{p}.\label{eq:heavy-core-general-3}
\end{align}
Similar to the proof of Lemma \ref{lem:core-general}, we can prove
\begin{equation}
\sum_{t=1}^{T}\eta_{t}v_{t}\left(F(x^{t+1})-F(z^{t})\right)\geq\eta_{T}v_{T}\left(F(x^{T+1})-F(x)\right).\label{eq:heavy-core-general-4}
\end{equation}
Combining (\ref{eq:heavy-core-general-3}) and (\ref{eq:heavy-core-general-4}),
there is
\begin{align*}
\eta_{T}v_{T}\left(F(x^{T+1})-F(x)\right)\leq & v_{0}D_{\psi}(x,x^{1})+\sum_{t=1}^{T}\eta_{t}v_{t-1}\left\langle \xi^{t},z^{t-1}-x^{t}\right\rangle \\
 & +\sum_{t=1}^{T}\frac{\eta_{t}^{\frac{p}{2-p}}v_{t}(2-p)2^{\frac{3p-4}{2-p}}L^{\frac{p}{2-p}}+\eta_{t}^{p}v_{t}4^{p-1}(M^{p}+\left\Vert \xi^{t}\right\Vert _{*}^{p})}{p}.
\end{align*}
\end{proof}

Now we are ready to state the general in-expectation convergence result
in Lemma \ref{lem:heavy-core-exp}.
\begin{lem}
\label{lem:heavy-core-exp}Under Assumptions \ref{enu:A2}-\ref{enu:A4}
and \ref{enu:A5C} with $\mu_{f}=\mu_{h}=0$, then for any $x\in\dom$,
we have
\[
\E\left[F(x^{T+1})-F(x)\right]\leq\frac{D_{\psi}(x,x^{1})}{\sum_{t=1}^{T}\eta_{t}}+\frac{(2-p)2^{\frac{3p-4}{2-p}}L^{\frac{p}{2-p}}}{p}\sum_{t=1}^{T}\frac{\eta_{t}^{\frac{p}{2-p}}}{\sum_{s=t}^{T}\eta_{s}}+\frac{4^{p-1}(M^{p}+\sigma^{p})}{p}\sum_{t=1}^{T}\frac{\eta_{t}^{p}}{\sum_{s=t}^{T}\eta_{s}}.
\]
\end{lem}

\begin{proof}
We invoke Lemma \ref{lem:heavy-core-general} to get
\begin{align*}
\eta_{T}v_{T}\left(F(x^{T+1})-F(x)\right)\leq & v_{0}D_{\psi}(x,x^{1})+\sum_{t=1}^{T}\eta_{t}v_{t-1}\left\langle \xi^{t},z^{t-1}-x^{t}\right\rangle \\
 & +\sum_{t=1}^{T}\frac{\eta_{t}^{\frac{p}{2-p}}v_{t}(2-p)2^{\frac{3p-4}{2-p}}L^{\frac{p}{2-p}}+\eta_{t}^{p}v_{t}4^{p-1}(M^{p}+\left\Vert \xi^{t}\right\Vert _{*}^{p})}{p}.
\end{align*}
Take expectations on both sides to obtain
\begin{align*}
\eta_{T}v_{T}\E\left[F(x^{T+1})-F(x)\right]\leq & v_{0}D_{\psi}(x,x^{1})+\sum_{t=1}^{T}\eta_{t}v_{t-1}\E\left[\left\langle \xi^{t},z^{t-1}-x^{t}\right\rangle \right]\\
 & +\sum_{t=1}^{T}\frac{\eta_{t}^{\frac{p}{2-p}}v_{t}(2-p)2^{\frac{3p-4}{2-p}}L^{\frac{p}{2-p}}+\eta_{t}^{p}v_{t}4^{p-1}(M^{p}+\E\left[\left\Vert \xi^{t}\right\Vert _{*}^{p}\right])}{p}\\
\leq & v_{0}D_{\psi}(x,x^{1})+\sum_{t=1}^{T}\frac{\eta_{t}^{\frac{p}{2-p}}v_{t}(2-p)2^{\frac{3p-4}{2-p}}L^{\frac{p}{2-p}}+\eta_{t}^{p}v_{t}4^{p-1}(M^{p}+\sigma^{p})}{p},
\end{align*}
where the last line is due to $\E\left[\left\Vert \xi^{t}\right\Vert _{*}^{p}\right]\leq\sigma^{p}$
(Assumption \ref{enu:A5C}) and $\E\left[\left\langle \xi^{t},z^{t-1}-x^{t}\right\rangle \right]=\E\left[\left\langle \E\left[\xi^{t}\vert\F^{t-1}\right],z^{t-1}-x^{t}\right\rangle \right]=0$
($z^{t-1}-x^{t}\in\F^{t-1}$ and Assumption \ref{enu:A4}). Finally,
we divide both sides by $\eta_{T}v_{T}$ and plug in $v_{t\in\left\{ 0\right\} \cup\left[T\right]}=\frac{\eta_{T}}{\sum_{s=t\lor1}^{T}\eta_{s}}$
to finish the proof.
\end{proof}

\section{General Convex Functions under Heavy-Tailed Noise\label{sec:heavy-cvx}}

In this section, we prove the full version of Theorem \ref{thm:main-heavy-cvx-exp}
by using Lemma \ref{lem:heavy-core-exp},
\begin{thm}[Full version of Theorem \ref{thm:main-heavy-cvx-exp}]
\label{thm:heavy-cvx-exp}Under Assumptions \ref{enu:A2}-\ref{enu:A4}
and \ref{enu:A5C} with $\mu_{f}=\mu_{h}=0$, for any $x\in\dom$:

If $T$ is unknown, by taking $\eta_{t\in\left[T\right]}=\frac{\eta_{*}}{t^{\frac{2-p}{p}}}\land\frac{\eta}{t^{\frac{1}{p}}}$,
there is
\begin{align*}
\E\left[F(x^{T+1})-F(x)\right]\leq & O\left(\frac{1}{T^{2-\frac{2}{p}}}\left(\frac{D_{\psi}(x,x^{1})}{\eta_{*}}+\eta_{*}^{\frac{2p-2}{2-p}}L^{\frac{p}{2-p}}\log T+\frac{\eta^{p}(M^{p}+\sigma^{p})}{\eta_{*}}\log T\right)\right.\\
 & +\left.\frac{1}{T^{1-\frac{1}{p}}}\left(\frac{D_{\psi}(x,x^{1})}{\eta}+\eta^{p-1}(M^{p}+\sigma^{p})\log T+\frac{\eta_{*}^{\frac{p}{2-p}}L^{\frac{p}{2-p}}}{\eta}\log T\right)\right).
\end{align*}
In particular, by choosing $\eta_{*}=\Theta\left(\frac{\left[D_{\psi}(x,x^{1})\right]^{\frac{2-p}{p}}}{L}\right)$
and $\eta=\Theta\left(\left[\frac{D_{\psi}(x,x^{1})}{M^{p}+\sigma^{p}}\right]^{\frac{1}{p}}\right)$,
there is
\[
\E\left[F(x^{T+1})-F(x)\right]\leq O\left(\frac{L\left[D_{\psi}(x,x^{1})\right]^{2-\frac{2}{p}}\log T}{T^{2-\frac{2}{p}}}+\frac{(M+\sigma)\left[D_{\psi}(x,x^{1})\right]^{1-\frac{1}{p}}\log T}{T^{1-\frac{1}{p}}}\right).
\]

If $T$ is known, by taking $\eta_{t\in\left[T\right]}=\frac{\eta_{*}}{T^{\frac{2-p}{p}}}\land\frac{\eta}{T^{\frac{1}{p}}}$,
there is
\begin{align*}
\E\left[F(x^{T+1})-F(x)\right]\leq & O\left(\frac{1}{T^{2-\frac{2}{p}}}\left(\frac{D_{\psi}(x,x^{1})}{\eta_{*}}+\eta_{*}^{\frac{2p-2}{2-p}}L^{\frac{p}{2-p}}\log T\right)\right.\\
 & +\left.\frac{1}{T^{1-\frac{1}{p}}}\left(\frac{D_{\psi}(x,x^{1})}{\eta}+\eta^{p-1}(M^{p}+\sigma^{p})\log T\right)\right).
\end{align*}
In particular, by choosing $\eta_{*}=\Theta\left(\frac{1}{L}\left[\frac{D_{\psi}(x,x^{1})}{\log T}\right]^{\frac{2-p}{p}}\right)$
and $\eta=\Theta\left(\left[\frac{D_{\psi}(x,x^{1})}{(M^{p}+\sigma^{p})\log T}\right]^{\frac{1}{p}}\right)$,
there is
\[
\E\left[F(x^{T+1})-F(x)\right]\leq O\left(\frac{L\left[D_{\psi}(x,x^{1})\right]^{2-\frac{2}{p}}(\log T)^{\frac{2}{p}-1}}{T^{2-\frac{2}{p}}}+\frac{(M+\sigma)\left[D_{\psi}(x,x^{1})\right]^{1-\frac{1}{p}}(\log T)^{\frac{1}{p}}}{T^{1-\frac{1}{p}}}\right).
\]
\end{thm}

\begin{proof}
By Lemma \ref{lem:heavy-core-exp}, there is
\[
\E\left[F(x^{T+1})-F(x)\right]\leq\frac{D_{\psi}(x,x^{1})}{\sum_{t=1}^{T}\eta_{t}}+\frac{(2-p)2^{\frac{3p-4}{2-p}}L^{\frac{p}{2-p}}}{p}\sum_{t=1}^{T}\frac{\eta_{t}^{\frac{p}{2-p}}}{\sum_{s=t}^{T}\eta_{s}}+\frac{4^{p-1}(M^{p}+\sigma^{p})}{p}\sum_{t=1}^{T}\frac{\eta_{t}^{p}}{\sum_{s=t}^{T}\eta_{s}}.
\]
Hence, we only need to plug in different step sizes. Before doing
so, we first prove the following simple fact for any $a\in(0,1)$,
\begin{equation}
\sum_{t=1}^{T}\frac{\sum_{s=t}^{T}s^{a}}{t(T-t+1)^{2}}\leq\sum_{t=1}^{T}\frac{T^{a}}{t(T-t+1)}=\frac{1}{T^{1-a}}\sum_{t=1}^{T}\frac{1}{t}+\frac{1}{T-t+1}\leq\frac{2(1+\log T)}{T^{1-a}}.\label{eq:heavy-cvx-exp-ineq}
\end{equation}

If $\eta_{t\in\left[T\right]}=\frac{\eta_{*}}{t^{\frac{2-p}{p}}}\land\frac{\eta}{t^{\frac{1}{p}}}$,
we have
\begin{align*}
\E\left[F(x^{T+1})-F(x)\right]\leq & \frac{D_{\psi}(x,x^{1})}{T^{2}}\left(\sum_{t=1}^{T}\frac{1}{\eta_{t}}\right)+\frac{(2-p)2^{\frac{3p-4}{2-p}}L^{\frac{p}{2-p}}}{p}\sum_{t=1}^{T}\frac{\eta_{t}^{\frac{p}{2-p}}}{(T-t+1)^{2}}\left(\sum_{s=t}^{T}\frac{1}{\eta_{s}}\right)\\
 & +\frac{4^{p-1}(M^{p}+\sigma^{p})}{p}\sum_{t=1}^{T}\frac{\eta_{t}^{p}}{(T-t+1)^{2}}\left(\sum_{s=t}^{T}\frac{1}{\eta_{s}}\right).
\end{align*}
We can bound
\begin{align*}
\sum_{t=1}^{T}\frac{1}{\eta_{t}} & \leq\sum_{t=1}^{T}\frac{t^{\frac{2-p}{p}}}{\eta_{*}}+\frac{t^{\frac{1}{p}}}{\eta}\leq\frac{T^{\frac{2-p}{p}}+\int_{1}^{T}t^{\frac{2-p}{p}}\mathrm{d}t}{\eta_{*}}+\frac{T^{\frac{1}{p}}+\int_{1}^{T}t^{\frac{1}{p}}\mathrm{d}t}{\eta}\\
 & \leq\frac{T^{\frac{2-p}{p}}+\frac{p}{2}T^{\frac{2}{p}}}{\eta_{*}}+\frac{T^{\frac{1}{p}}+\frac{p}{p+1}T^{\frac{1}{p}+1}}{\eta}\overset{(a)}{\leq}\frac{T^{\frac{2}{p}}}{\eta_{*}}+\frac{5T^{\frac{1}{p}+1}}{3\eta},
\end{align*}
where $(a)$ is by $p\leq2$ and $T\geq1$. For the other two terms,
we have
\begin{align*}
\sum_{t=1}^{T}\frac{\eta_{t}^{\frac{p}{2-p}}}{(T-t+1)^{2}}\left(\sum_{s=t}^{T}\frac{1}{\eta_{s}}\right) & \leq\sum_{t=1}^{T}\frac{\eta_{t}^{\frac{p}{2-p}}}{(T-t+1)^{2}}\left(\sum_{s=t}^{T}\frac{s^{\frac{2-p}{p}}}{\eta_{*}}+\frac{s^{\frac{1}{p}}}{\eta}\right)\\
 & \leq\eta_{*}^{\frac{2p-2}{2-p}}\sum_{t=1}^{T}\frac{\sum_{s=t}^{T}s^{\frac{2-p}{p}}}{t(T-t+1)^{2}}+\frac{\eta_{*}^{\frac{p}{2-p}}}{\eta}\sum_{t=1}^{T}\frac{\sum_{s=t}^{T}s^{\frac{1}{p}}}{t(T-t+1)^{2}}\\
 & \overset{(\ref{eq:heavy-cvx-exp-ineq})}{\leq}\eta_{*}^{\frac{2p-2}{2-p}}\frac{2(1+\log T)}{T^{2-\frac{2}{p}}}+\frac{\eta_{*}^{\frac{p}{2-p}}}{\eta}\cdot\frac{2(1+\log T)}{T^{1-\frac{1}{p}}},
\end{align*}
\begin{align*}
\sum_{t=1}^{T}\frac{\eta_{t}^{p}}{(T-t+1)^{2}}\left(\sum_{s=t}^{T}\frac{1}{\eta_{s}}\right) & \leq\sum_{t=1}^{T}\frac{\eta_{t}^{p}}{(T-t+1)^{2}}\left(\sum_{s=t}^{T}\frac{s^{\frac{2-p}{p}}}{\eta_{*}}+\frac{s^{\frac{1}{p}}}{\eta}\right)\\
 & \leq\frac{\eta^{p}}{\eta_{*}}\sum_{t=1}^{T}\frac{\sum_{s=t}^{T}s^{\frac{2-p}{p}}}{t(T-t+1)^{2}}+\eta^{p-1}\sum_{t=1}^{T}\frac{\sum_{s=t}^{T}s^{\frac{1}{p}}}{t(T-t+1)^{2}}\\
 & \overset{(\ref{eq:heavy-cvx-exp-ineq})}{\leq}\frac{\eta^{p}}{\eta_{*}}\cdot\frac{2(1+\log T)}{T^{2-\frac{2}{p}}}+\eta^{p-1}\frac{2(1+\log T)}{T^{1-\frac{1}{p}}}.
\end{align*}
Hence, we have
\begin{align*}
\E\left[F(x^{T+1})-F(x)\right]\leq & \frac{D_{\psi}(x,x^{1})}{\eta_{*}T^{2-\frac{2}{p}}}+\frac{5D_{\psi}(x,x^{1})}{3\eta T^{1-\frac{1}{p}}}\\
 & +\frac{(2-p)2^{\frac{3p-4}{2-p}}L^{\frac{p}{2-p}}}{p}\left[\eta_{*}^{\frac{2p-2}{2-p}}\frac{2(1+\log T)}{T^{2-\frac{2}{p}}}+\frac{\eta_{*}^{\frac{p}{2-p}}}{\eta}\cdot\frac{2(1+\log T)}{T^{1-\frac{1}{p}}}\right]\\
 & +\frac{4^{p-1}(M^{p}+\sigma^{p})}{p}\left[\frac{\eta^{p}}{\eta_{*}}\cdot\frac{2(1+\log T)}{T^{2-\frac{2}{p}}}+\eta^{p-1}\frac{2(1+\log T)}{T^{1-\frac{1}{p}}}\right]\\
= & O\left(\frac{1}{T^{2-\frac{2}{p}}}\left(\frac{D_{\psi}(x,x^{1})}{\eta_{*}}+\eta_{*}^{\frac{2p-2}{2-p}}L^{\frac{p}{2-p}}\log T+\frac{\eta^{p}(M^{p}+\sigma^{p})}{\eta_{*}}\log T\right)\right.\\
 & +\left.\frac{1}{T^{1-\frac{1}{p}}}\left(\frac{D_{\psi}(x,x^{1})}{\eta}+\eta^{p-1}(M^{p}+\sigma^{p})\log T+\frac{\eta_{*}^{\frac{p}{2-p}}L^{\frac{p}{2-p}}}{\eta}\log T\right)\right).
\end{align*}
By plugging in $\eta_{*}=\Theta\left(\frac{\left[D_{\psi}(x,x^{1})\right]^{\frac{2-p}{p}}}{L}\right)$
and $\eta=\Theta\left(\left[\frac{D_{\psi}(x,x^{1})}{M^{p}+\sigma^{p}}\right]^{\frac{1}{p}}\right)$,
we get the desired bound.

If $\eta_{t\in\left[T\right]}=\frac{\eta_{*}}{T^{\frac{2-p}{p}}}\land\frac{\eta}{T^{\frac{1}{p}}}$,
we will obtain 
\begin{align*}
\E\left[F(x^{T+1})-F(x)\right]\leq & \frac{D_{\psi}(x,x^{1})}{T}\left(\frac{T^{\frac{2-p}{p}}}{\eta_{*}}\lor\frac{T^{\frac{1}{p}}}{\eta}\right)+\frac{(2-p)2^{\frac{3p-4}{2-p}}L^{\frac{p}{2-p}}}{p}\left(\frac{\eta_{*}}{T^{\frac{2-p}{p}}}\land\frac{\eta}{T^{\frac{1}{p}}}\right)^{\frac{2p-2}{2-p}}\sum_{t=1}^{T}\frac{1}{T-t+1}\\
 & +\frac{4^{p-1}(M^{p}+\sigma^{p})}{p}\left(\frac{\eta_{*}}{T^{\frac{2-p}{p}}}\land\frac{\eta}{T^{\frac{1}{p}}}\right)^{p-1}\sum_{t=1}^{T}\frac{1}{T-t+1}\\
\leq & \frac{1}{T^{\frac{2p-2}{p}}}\left[\frac{D_{\psi}(x,x^{1})}{\eta_{*}}+\frac{\eta_{*}^{\frac{2p-2}{2-p}}(2-p)2^{\frac{3p-4}{2-p}}L^{\frac{p}{2-p}}\log T}{p}\right]\\
 & +\frac{1}{T^{2-\frac{2}{p}}}\left[\frac{D_{\psi}(x,x^{1})}{\eta}+\frac{\eta^{p-1}4^{p-1}(M^{p}+\sigma^{p})\log T}{p}\right]\\
= & O\left(\frac{1}{T^{2-\frac{2}{p}}}\left(\frac{D_{\psi}(x,x^{1})}{\eta_{*}}+\eta_{*}^{\frac{2p-2}{2-p}}L^{\frac{p}{2-p}}\log T\right)\right.\\
 & +\left.\frac{1}{T^{1-\frac{1}{p}}}\left(\frac{D_{\psi}(x,x^{1})}{\eta}+\eta^{p-1}(M^{p}+\sigma^{p})\log T\right)\right).
\end{align*}
By plugging in $\eta_{*}=\Theta\left(\frac{1}{L}\left[\frac{D_{\psi}(x,x^{1})}{\log T}\right]^{\frac{2-p}{p}}\right)$
and $\eta=\Theta\left(\left[\frac{D_{\psi}(x,x^{1})}{(M^{p}+\sigma^{p})\log T}\right]^{\frac{1}{p}}\right)$,
we get the desired bound.
\end{proof}

\subsection{Optimal Rate under Heavy-Tailed Noise\label{subsec:heavy-optimal}}

The full version of Theorem \ref{thm:main-heavy-cvx-exp-optimal}
is stated below.
\begin{thm}[Full version of Theorem \ref{thm:main-heavy-cvx-exp-optimal}]
\label{thm:heavy-cvx-exp-optimal}Under Assumptions \ref{enu:A2}-\ref{enu:A4}
and \ref{enu:A5C} with $\mu_{f}=\mu_{h}=0$, for any $x\in\dom$,
if $T$ is known, by taking $\eta_{t\in\left[T\right]}=\frac{\eta_{*}(T-t+1)}{T^{\frac{2}{p}}}\land\frac{\eta(T-t+1)}{T^{\frac{1}{p}+1}}$,
there is
\[
\E\left[F(x^{T+1})-F(x)\right]\leq O\left(\frac{1}{T^{2-\frac{2}{p}}}\left(\frac{D_{\psi}(x,x^{1})}{\eta_{*}}+\eta_{*}^{\frac{2p-2}{2-p}}L^{\frac{p}{2-p}}\right)+\frac{1}{T^{1-\frac{1}{p}}}\left(\frac{D_{\psi}(x,x^{1})}{\eta}+\eta^{p-1}(M^{p}+\sigma^{p})\right)\right).
\]
In particular, by choosing $\eta_{*}=\Theta\left(\frac{\left[D_{\psi}(x,x^{1})\right]^{\frac{2-p}{p}}}{L}\right)$
and $\eta=\Theta\left(\left[\frac{D_{\psi}(x,x^{1})}{M^{p}+\sigma^{p}}\right]^{\frac{1}{p}}\right)$,
there is
\[
\E\left[F(x^{T+1})-F(x)\right]\leq O\left(\frac{L\left[D_{\psi}(x,x^{1})\right]^{2-\frac{2}{p}}}{T^{2-\frac{2}{p}}}+\frac{(M+\sigma)\left[D_{\psi}(x,x^{1})\right]^{1-\frac{1}{p}}}{T^{1-\frac{1}{p}}}\right).
\]
\end{thm}

\begin{proof}
By Lemma \ref{lem:heavy-core-exp}, there is
\[
\E\left[F(x^{T+1})-F(x)\right]\leq\frac{D_{\psi}(x,x^{1})}{\sum_{t=1}^{T}\eta_{t}}+\frac{(2-p)2^{\frac{3p-4}{2-p}}L^{\frac{p}{2-p}}}{p}\sum_{t=1}^{T}\frac{\eta_{t}^{\frac{p}{2-p}}}{\sum_{s=t}^{T}\eta_{s}}+\frac{4^{p-1}(M^{p}+\sigma^{p})}{p}\sum_{t=1}^{T}\frac{\eta_{t}^{p}}{\sum_{s=t}^{T}\eta_{s}}.
\]
For $\eta_{t\in\left[T\right]}=\frac{\eta_{*}(T-t+1)}{T^{\frac{2}{p}}}\land\frac{\eta(T-t+1)}{T^{\frac{1}{p}+1}}$,
we can compute
\[
\sum_{t=1}^{T}\eta_{t}=\left(\frac{\eta_{*}}{T^{\frac{2}{p}}}\land\frac{\eta}{T^{\frac{1}{p}+1}}\right)\sum_{t=1}^{T}T-t+1\geq\frac{\eta_{*}}{2}T^{2-\frac{2}{p}}\land\frac{\eta}{2}T^{1-\frac{1}{p}};
\]
\begin{align*}
\sum_{t=1}^{T}\frac{\eta_{t}^{\frac{p}{2-p}}}{\sum_{s=t}^{T}\eta_{s}} & =\left(\frac{\eta_{*}}{T^{\frac{2}{p}}}\land\frac{\eta}{T^{\frac{1}{p}+1}}\right)^{\frac{2p-2}{2-p}}\sum_{t=1}^{T}\frac{(T-t+1)^{\frac{p}{2-p}}}{\sum_{s=t}^{T}T-s+1}\leq\frac{2\eta_{*}^{\frac{2p-2}{2-p}}}{T^{\frac{4p-4}{p(2-p)}}}\sum_{t=1}^{T}(T-t+1)^{\frac{3p-4}{2-p}}\\
 & \leq\frac{2\eta_{*}^{\frac{2p-2}{2-p}}}{T^{\frac{4p-4}{p(2-p)}}}\begin{cases}
\int_{1}^{T+1}t^{\frac{3p-4}{2-p}}\mathrm{d}t & p\in\left[\frac{4}{3},2\right)\\
\int_{0}^{T}t^{\frac{3p-4}{2-p}}\mathrm{d}t & p\in\left(1,\frac{4}{3}\right)
\end{cases}\leq\frac{2^{\frac{2p-2}{2-p}}(2-p)\eta_{*}^{\frac{2p-2}{2-p}}}{(p-1)T^{2-\frac{2}{p}}};
\end{align*}
\begin{align*}
\sum_{t=1}^{T}\frac{\eta_{t}^{p}}{\sum_{s=t}^{T}\eta_{s}} & =\left(\frac{\eta_{*}}{T^{\frac{2}{p}}}\land\frac{\eta}{T^{\frac{1}{p}+1}}\right)^{p-1}\sum_{t=1}^{T}\frac{(T-t+1)^{p}}{\sum_{s=t}^{T}T-s+1}\leq\frac{2\eta^{p-1}}{T^{\frac{p^{2}-1}{p}}}\sum_{t=1}^{T}(T-t+1)^{p-2}\\
 & \leq\frac{2\eta^{p-1}}{T^{\frac{p^{2}-1}{p}}}\int_{0}^{T}t^{p-2}\mathrm{d}t=\frac{2\eta^{p-1}}{(p-1)T^{1-\frac{1}{p}}}.
\end{align*}
Hence, there is
\begin{align*}
\E\left[F(x^{T+1})-F(x)\right] & \leq\frac{2D_{\psi}(x,x^{1})}{\eta_{*}T^{2-\frac{2}{p}}}+\frac{2D_{\psi}(x,x^{1})}{\eta T^{1-\frac{1}{p}}}+\frac{(2-p)^{2}2^{\frac{5p-6}{2-p}}L^{\frac{p}{2-p}}\eta_{*}^{\frac{2p-2}{2-p}}}{p(p-1)T^{2-\frac{2}{p}}}+\frac{4^{p-\frac{1}{2}}(M^{p}+\sigma^{p})\eta^{p-1}}{p(p-1)T^{1-\frac{1}{p}}}\\
 & =O\left(\frac{1}{T^{2-\frac{2}{p}}}\left(\frac{D_{\psi}(x,x^{1})}{\eta_{*}}+\eta_{*}^{\frac{2p-2}{2-p}}L^{\frac{p}{2-p}}\right)+\frac{1}{T^{1-\frac{1}{p}}}\left(\frac{D_{\psi}(x,x^{1})}{\eta}+\eta^{p-1}(M^{p}+\sigma^{p})\right)\right).
\end{align*}
By plugging in $\eta_{*}=\Theta\left(\frac{\left[D_{\psi}(x,x^{1})\right]^{\frac{2-p}{p}}}{L}\right)$
and $\eta=\Theta\left(\left[\frac{D_{\psi}(x,x^{1})}{M^{p}+\sigma^{p}}\right]^{\frac{1}{p}}\right)$,
we get the desired bound.
\end{proof}

\section{Unified Theoretical Analysis under Sub-Weibull Noise\label{sec:weibull-analysis}}

Lemma \ref{lem:core-general-anytime} can be viewed as an any-time
version of Lemma \ref{lem:core-general}. However, this intriguing
any-time property only holds for the minimizer $x^{*}$ unlike Lemma
\ref{lem:core-general} being true for any $x\in\dom$. Besides, the
weight sequence $w_{t\in\left[T\right]}$ used in Lemma \ref{lem:core-general}
is dropped. The reason is that we need an almost surely bounded random
vector to use Lemma \ref{lem:weibull} (for the case $p=1$) to bound
$\gamma_{t}v_{t-1}\left\langle \xi^{t},z^{t-1}-x^{t}\right\rangle $.
However, it is unclear whether $\left\Vert \gamma_{t}v_{t-1}(z^{t-1}-x^{t})\right\Vert $
is bounded. Hence, the weight sequence technique of \citet{pmlr-v202-liu23aa}
may fail.
\begin{lem}
\label{lem:core-general-anytime}Under Assumptions \ref{enu:A1}-\ref{enu:A3},
suppose $\eta_{t\in\left[T\right]}\leq\frac{1}{2L\lor\mu_{f}}$ and
let $\gamma_{t\in\left[T\right]}\coloneqq\eta_{t}\prod_{s=2}^{t}\frac{1+\mu_{h}\eta_{s-1}}{1-\mu_{f}\eta_{s}}$
and $v_{t\in\left\{ 0\right\} \cup\left[T\right]}>0$ be defined as
$v_{t}\coloneqq\frac{\gamma_{T}}{\sum_{s=t\lor1}^{T}\gamma_{s}}$,
then for any $s\in\left[T\right]$, we have
\begin{align*}
 & \gamma_{s}v_{s}\left(F(x^{s+1})-F(x^{*})\right)+\gamma_{s+1}\eta_{s+1}^{-1}(1-\mu_{f}\eta_{s+1})v_{s}D_{\psi}(z^{s},x^{s+1})\\
\leq & (1-\mu_{f}\eta_{1})v_{0}D_{\psi}(x^{*},x^{1})+\sum_{t=1}^{s}2\gamma_{t}\eta_{t}v_{t}(M^{2}+\left\Vert \xi^{t}\right\Vert _{*}^{2})+\sum_{t=1}^{s}\gamma_{t}v_{t-1}\left\langle \xi^{t},z^{t-1}-x^{t}\right\rangle ,
\end{align*}
where $\xi^{t}\coloneqq\hg^{t}-\E\left[\hg^{t}\mid\F^{t-1}\right],\forall t\in\left[T\right]$
and $z^{t}\coloneqq\frac{v_{0}}{v_{t}}x^{*}+\sum_{s=1}^{t}\frac{v_{s}-v_{s-1}}{v_{t}}x^{s},\forall t\in\left\{ 0\right\} \cup\left[T\right]$.
\end{lem}

\begin{proof}
The proof is similar to the proof of Lemma \ref{lem:core-general}.
Again, we use the sequence
\[
z^{t}=\begin{cases}
\left(1-\frac{v_{t-1}}{v_{t}}\right)x^{t}+\frac{v_{t-1}}{v_{t}}z^{t-1} & t\in\left[T\right]\\
x^{*} & t=0
\end{cases}\Leftrightarrow z^{t}=\frac{v_{0}}{v_{t}}x^{*}+\sum_{s=1}^{t}\frac{v_{s}-v_{s-1}}{v_{t}}x^{s},\forall t\in\left\{ 0\right\} \cup\left[T\right].
\]

Following the same step when proving Lemma \ref{lem:core-general},
we have for any $t\geq1$
\begin{align}
F(x^{t+1})-F(z^{t})\leq & (\eta_{t}^{-1}-\mu_{f})\frac{v_{t-1}}{v_{t}}D_{\psi}(z^{t-1},x^{t})-(\eta_{t}^{-1}+\mu_{h})D_{\psi}(z^{t},x^{t+1})\nonumber \\
 & +2\eta_{t}(M^{2}+\left\Vert \xi^{t}\right\Vert _{*}^{2})+\frac{v_{t-1}}{v_{t}}\left\langle \xi^{t},z^{t-1}-x^{t}\right\rangle .\label{eq:core-general-anytime-1}
\end{align}
Multiplying both sides by $\gamma_{t}v_{t}$ (these two terms are
non-negative) and summing up from $t=1$ to $s$ where $s\in\left[T\right]$,
we obtain
\begin{align}
\sum_{t=1}^{s}\gamma_{t}v_{t}\left(F(x^{t+1})-F(z^{t})\right)\leq & (1-\mu_{f}\eta_{1})v_{0}D_{\psi}(x^{*},x^{1})-\gamma_{s}(\eta_{s}^{-1}+\mu_{h})v_{s}D_{\psi}(z^{s},x^{s+1})\nonumber \\
 & +\sum_{t=1}^{s}2\gamma_{t}\eta_{t}v_{t}(M^{2}+\left\Vert \xi^{t}\right\Vert _{*}^{2})+\sum_{t=1}^{s}\gamma_{t}v_{t-1}\left\langle \xi^{t},z^{t-1}-x^{t}\right\rangle .\label{eq:core-general-anytime-2}
\end{align}

Next, by the convexity of $F$ and the definition of $z^{t}$, we
can find
\begin{align*}
\sum_{t=1}^{s}\gamma_{t}v_{t}\left(F(x^{t+1})-F(z^{t})\right)\geq & \gamma_{s}v_{s}\left(F(x^{s+1})-F(x^{*})\right)-\left(\sum_{\ell=1}^{s}\gamma_{\ell}\right)\left(v_{1}-v_{0}\right)\left(F(x^{1})-F(x^{*})\right)\\
 & +\sum_{t=2}^{s}\left[\gamma_{t-1}v_{t-1}-\left(\sum_{\ell=t}^{s}\gamma_{\ell}\right)\left(v_{t}-v_{t-1}\right)\right]\left(F(x^{t})-F(x^{*})\right).
\end{align*}
Note that $v_{1}=v_{0}$ and for $2\leq t\leq s$ there is
\begin{align*}
\gamma_{t-1}v_{t-1}-\left(\sum_{\ell=t}^{s}\gamma_{\ell}\right)\left(v_{t}-v_{t-1}\right) & =\left(\sum_{\ell=t-1}^{s}\gamma_{\ell}\right)v_{t-1}-\left(\sum_{\ell=t}^{s}\gamma_{\ell}\right)v_{t}=\gamma_{T}\left(\frac{\sum_{\ell=t-1}^{s}\gamma_{\ell}}{\sum_{\ell=t-1}^{T}\gamma_{\ell}}-\frac{\sum_{\ell=t}^{s}\gamma_{\ell}}{\sum_{\ell=t}^{T}\gamma_{\ell}}\right)\\
 & =\frac{\gamma_{T}\gamma_{t-1}\left(\sum_{\ell=s+1}^{T}\gamma_{\ell}\right)}{\left(\sum_{\ell=t-1}^{T}\gamma_{\ell}\right)\left(\sum_{\ell=t}^{T}\gamma_{\ell}\right)}\geq0.
\end{align*}
Hence, by using $F(x^{t})-F(x^{*})\geq0$, we still have
\begin{equation}
\sum_{t=1}^{s}\gamma_{t}v_{t}\left(F(x^{t+1})-F(z^{t})\right)\geq\gamma_{s}v_{s}\left(F(x^{s+1})-F(x^{*})\right).\label{eq:core-general-anytime-3}
\end{equation}
Combining (\ref{eq:core-general-anytime-2}) and (\ref{eq:core-general-anytime-3}),
we conclude for any $s\in\left[T\right]$
\begin{align*}
 & \gamma_{s}v_{s}\left(F(x^{s+1})-F(x^{*})\right)+\gamma_{s}(\eta_{s}^{-1}+\mu_{h})v_{s}D_{\psi}(z^{s},x^{s+1})\\
\leq & (1-\mu_{f}\eta_{1})v_{0}D_{\psi}(x^{*},x^{1})+\sum_{t=1}^{s}2\gamma_{t}\eta_{t}v_{t}(M^{2}+\left\Vert \xi^{t}\right\Vert _{*}^{2})+\sum_{t=1}^{s}\gamma_{t}v_{t-1}\left\langle \xi^{t},z^{t-1}-x^{t}\right\rangle .
\end{align*}
The proof is finished by noticing $\gamma_{s}(\eta_{s}^{-1}+\mu_{h})=\gamma_{s+1}\eta_{s+1}^{-1}(1-\mu_{f}\eta_{s+1}),\forall s\in\left[T\right]$.
\end{proof}

As described above, our key problem now is to deal with the term $\sum_{t=1}^{s}\gamma_{t}v_{t-1}\left\langle \xi^{t},z^{t-1}-x^{t}\right\rangle $
(the term $\sum_{t=1}^{s}\gamma_{t}\eta_{t}v_{t}\left\Vert \xi^{t}\right\Vert _{*}^{2}$
can be bounded by Lemma \ref{lem:weibull-hp-sigma}). Inspired by
\citet{pmlr-v202-ivgi23a} and \citet{liu2023stochastic}, we will
introduce another sequence $y_{t\in\left[T\right]}$ (see (\ref{eq:weibull-core-hp-y})
for its definition). By a transformation, we only need to bound $\sqrt{y_{t}}\max_{\ell\in\left[s\right]}\left|\sum_{t=1}^{\ell}\gamma_{t}v_{t-1}\left\langle \xi^{t},\frac{z^{t-1}-x^{t}}{\sqrt{y_{t}}}\right\rangle \right|$
(this is done by Lemma C.2 in \citet{pmlr-v202-ivgi23a}). With the
carefully designed $y_{t}$, we will make sure that $\gamma_{t}v_{t-1}\frac{z^{t-1}-x^{t}}{\sqrt{y_{t}}}$
is bounded almost surely (the choice of $y_{t}$ is inspired by \citet{liu2023stochastic},
especially from their Theorem 6). Hence, the term $\left|\sum_{t=1}^{\ell}\gamma_{t}v_{t-1}\left\langle \xi^{t},\frac{z^{t-1}-x^{t}}{\sqrt{y_{t}}}\right\rangle \right|$
can be controlled in a high-probability way (see Lemma \ref{lem:weibull-hp-inner}).
Moreover, we use an induction-based proof like \citet{liu2023stochastic}
to obtain the following general lemma. We refer the interested reader
to our proof for details.

Note that Lemma \ref{lem:weibull-core-hp} can also be used to get
the high-probability convergence results for strongly convex objectives.
We omit that case for simplicity.
\begin{lem}
\label{lem:weibull-core-hp}Under Assumptions \ref{enu:A1}-\ref{enu:A4}
and \ref{enu:A5D}, suppose $\eta_{t\in\left[T\right]}\leq\frac{1}{2L\lor\mu_{f}}$
and let $\gamma_{t\in\left[T\right]}\coloneqq\eta_{t}\prod_{s=2}^{t}\frac{1+\mu_{h}\eta_{s-1}}{1-\mu_{f}\eta_{s}}$,
then for any $\delta\in(0,1)$, with probability at least $1-\delta$,
we have
\[
F(x^{T+1})-F(x^{*})\leq\max_{2\leq t\leq T}\frac{2}{1-\mu_{f}\eta_{t}}\left[\frac{D_{\psi}(x^{*},x^{1})}{\sum_{t=1}^{T}\gamma_{t}}+2\left(M^{2}+\sigma^{2}C(\delta,p)\right)\sum_{t=1}^{T}\frac{\gamma_{t}\eta_{t}}{\sum_{s=t}^{T}\gamma_{s}}\right],
\]
where 
\begin{equation}
C(\delta,p)\coloneqq\begin{cases}
\left(\frac{2e}{p}\lor e\log\frac{2e}{\delta}\right)^{\frac{2}{p}}+\left(4\sqrt{3}+3\sqrt{2}\right)^{2}\log^{\frac{2}{p}}\frac{4}{\delta}=O\left(1+\log^{\frac{2}{p}}\frac{1}{\delta}\right) & p\in\left[1,2\right)\\
\left(\frac{2e}{p}\lor e\log\frac{2e}{\delta}\right)^{\frac{2}{p}}+\frac{64\left[1\lor\log^{\frac{p+2}{p}}\left(4\left[3+\left(\frac{3}{p}\right)^{\frac{2}{p}}2\right]/\delta\right)\right]}{\log^{\frac{2}{p}}2}=O\left(1+\log^{1+\frac{2}{p}}\frac{1}{\delta}\right) & p\in\left(0,1\right)
\end{cases}.\label{eq:weibull-core-hp-C}
\end{equation}
\end{lem}

\begin{proof}
We first introduce the following sequence
\begin{equation}
y_{t\in\left[T\right]}\coloneqq\max_{\ell\in\left[t\right]}\gamma_{\ell}\eta_{\ell}^{-1}\left(1-\mu_{f}\eta_{\ell}\mathds{1}\left[\ell\geq2\right]\right)v_{\ell-1}D_{\psi}(z^{\ell-1},x^{\ell}).\label{eq:weibull-core-hp-y}
\end{equation}
Now, fix a time $s\in\left[T\right]$, we invoke Lemma \ref{lem:core-general-anytime}
to get
\begin{align*}
 & \gamma_{s}v_{s}\left(F(x^{s+1})-F(x^{*})\right)+\gamma_{s+1}\eta_{s+1}^{-1}(1-\mu_{f}\eta_{s+1})v_{s}D_{\psi}(z^{s},x^{s+1})\\
\leq & (1-\mu_{f}\eta_{1})v_{0}D_{\psi}(x^{*},x^{1})+\sum_{t=1}^{s}2\gamma_{t}\eta_{t}v_{t}(M^{2}+\left\Vert \xi^{t}\right\Vert _{*}^{2})+\sum_{t=1}^{s}\gamma_{t}v_{t-1}\left\langle \xi^{t},z^{t-1}-x^{t}\right\rangle .
\end{align*}
We can bound
\begin{align*}
 & \sum_{t=1}^{s}\gamma_{t}v_{t-1}\left\langle \xi^{t},z^{t-1}-x^{t}\right\rangle =\sum_{t=1}^{s}\gamma_{t}v_{t-1}\left\langle \xi^{t},\frac{z^{t-1}-x^{t}}{\sqrt{y_{t}}}\right\rangle \sqrt{y_{t}}\\
\overset{(a)}{\leq} & 2\sqrt{y_{s}}\max_{\ell\in\left[s\right]}\left|\sum_{t=1}^{\ell}\gamma_{t}v_{t-1}\left\langle \xi^{t},\frac{z^{t-1}-x^{t}}{\sqrt{y_{t}}}\right\rangle \right|\overset{(b)}{\leq}\frac{y_{s}}{2}+2\left(\max_{\ell\in\left[s\right]}\left|\sum_{t=1}^{\ell}\gamma_{t}v_{t-1}\left\langle \xi^{t},\frac{z^{t-1}-x^{t}}{\sqrt{y_{t}}}\right\rangle \right|\right)^{2},
\end{align*}
where $(a)$ is by Lemma C.2 of \citet{pmlr-v202-ivgi23a}, $(b)$
is due to AM-GM inequality. Hence, for any $s\in\left[T\right]$,
the following inequality holds
\begin{align}
 & \gamma_{s}v_{s}\left(F(x^{s+1})-F(x^{*})\right)+\gamma_{s+1}\eta_{s+1}^{-1}(1-\mu_{f}\eta_{s+1})v_{s}D_{\psi}(z^{s},x^{s+1})\nonumber \\
\leq & (1-\mu_{f}\eta_{1})v_{0}D_{\psi}(x^{*},x^{1})+\sum_{t=1}^{s}2\gamma_{t}\eta_{t}v_{t}(M^{2}+\left\Vert \xi^{t}\right\Vert _{*}^{2})+\frac{y_{s}}{2}+2\left(\max_{\ell\in\left[s\right]}\left|\sum_{t=1}^{\ell}\gamma_{t}v_{t-1}\left\langle \xi^{t},\frac{z^{t-1}-x^{t}}{\sqrt{y_{t}}}\right\rangle \right|\right)^{2}\nonumber \\
\leq & (1-\mu_{f}\eta_{1})v_{0}D_{\psi}(x^{*},x^{1})+\sum_{t=1}^{T}2\gamma_{t}\eta_{t}v_{t}(M^{2}+\left\Vert \xi^{t}\right\Vert _{*}^{2})+\frac{y_{s}}{2}+2\left(\max_{\ell\in\left[T\right]}\left|\sum_{t=1}^{\ell}\gamma_{t}v_{t-1}\left\langle \xi^{t},\frac{z^{t-1}-x^{t}}{\sqrt{y_{t}}}\right\rangle \right|\right)^{2}.\label{eq:weibull-core-hp-1}
\end{align}

Next, we use induction to prove for any $s\in\left[T\right]$
\begin{equation}
y_{s}\leq2v_{0}D_{\psi}(x^{*},x^{1})+4\sum_{t=1}^{T}\gamma_{t}\eta_{t}v_{t}(M^{2}+\left\Vert \xi^{t}\right\Vert _{*}^{2})+4\left(\max_{\ell\in\left[T\right]}\left|\sum_{t=1}^{\ell}\gamma_{t}v_{t-1}\left\langle \xi^{t},\frac{z^{t-1}-x^{t}}{\sqrt{y_{t}}}\right\rangle \right|\right)^{2}.\label{eq:weibull-core-hp-hypothesis}
\end{equation}
First for $s=1$, $y_{1}\overset{(\ref{eq:weibull-core-hp-y})}{=}v_{0}D_{\psi}(x^{*},x^{1})$
is smaller than R.H.S. of (\ref{eq:weibull-core-hp-hypothesis}).
Suppose (\ref{eq:weibull-core-hp-hypothesis}) holds for time $s$
where $1\leq s\leq T-1$. Then, (\ref{eq:weibull-core-hp-1}) implies
\begin{align*}
 & \gamma_{s+1}\eta_{s+1}^{-1}(1-\mu_{f}\eta_{s+1})v_{s}D_{\psi}(z^{s},x^{s+1})\\
\leq & 2v_{0}D_{\psi}(x^{*},x^{1})+\sum_{t=1}^{T}4\gamma_{t}\eta_{t}v_{t}(M^{2}+\left\Vert \xi^{t}\right\Vert _{*}^{2})+4\left(\max_{\ell\in\left[T\right]}\left|\sum_{t=1}^{\ell}\gamma_{t}v_{t-1}\left\langle \xi^{t},\frac{z^{t-1}-x^{t}}{\sqrt{y_{t}}}\right\rangle \right|\right)^{2},
\end{align*}
which gives us
\begin{align*}
y_{s+1} & =y_{s}\lor\gamma_{s+1}\eta_{s+1}^{-1}(1-\mu_{f}\eta_{s+1})v_{s}D_{\psi}(z^{s},x^{s+1})\\
 & \leq2v_{0}D_{\psi}(x^{*},x^{1})+\sum_{t=1}^{T}4\gamma_{t}\eta_{t}v_{t}(M^{2}+\left\Vert \xi^{t}\right\Vert _{*}^{2})+4\left(\max_{\ell\in\left[T\right]}\left|\sum_{t=1}^{\ell}\gamma_{t}v_{t-1}\left\langle \xi^{t},\frac{z^{t-1}-x^{t}}{\sqrt{y_{t}}}\right\rangle \right|\right)^{2}.
\end{align*}
Hence, the induction is completed.

Applying $s=T$ to (\ref{eq:weibull-core-hp-1}) and (\ref{eq:weibull-core-hp-hypothesis}),
we have
\[
\gamma_{T}v_{T}\left(F(x^{T+1})-F(x^{*})\right)\leq(2-\mu_{f}\eta_{1})v_{0}D_{\psi}(x^{*},x^{1})+\sum_{t=1}^{T}4\gamma_{t}\eta_{t}v_{t}(M^{2}+\left\Vert \xi^{t}\right\Vert _{*}^{2})+4\left(\max_{s\in\left[T\right]}\left|\sum_{t=1}^{s}\gamma_{t}v_{t-1}\left\langle \xi^{t},\frac{z^{t-1}-x^{t}}{\sqrt{y_{t}}}\right\rangle \right|\right)^{2}.
\]
Using Lemmas \ref{lem:weibull-hp-sigma} and \ref{lem:weibull-hp-inner},
with probability at least $1-\delta$, there are
\[
\sum_{t=1}^{T}\gamma_{t}\eta_{t}v_{t}\left\Vert \xi^{t}\right\Vert _{*}^{2}\leq\sum_{t=1}^{T}\gamma_{t}\eta_{t}v_{t}\sigma^{2}\left(\frac{2e}{p}\lor e\log\frac{2e}{\delta}\right)^{\frac{2}{p}},
\]
and
\[
\left(\max_{s\in\left[T\right]}\left|\sum_{t=1}^{s}\gamma_{t}v_{t-1}\left\langle \xi^{t},\frac{z^{t-1}-x^{t}}{\sqrt{y_{t}}}\right\rangle \right|\right)^{2}\leq\max_{2\leq t\leq T}\frac{\widetilde{C}(\delta,p)}{1-\mu_{f}\eta_{t}}\sum_{t=1}^{T}\gamma_{t}\eta_{t}v_{t}\sigma^{2},
\]
where 
\[
\widetilde{C}(\delta,p)\coloneqq\begin{cases}
\left(4\sqrt{3}+3\sqrt{2}\right)^{2}\log^{\frac{2}{p}}\frac{4}{\delta} & p\in\left[1,2\right)\\
\frac{64\left[1\lor\log^{\frac{p+2}{p}}\left(4\left[3+\left(\frac{3}{p}\right)^{\frac{2}{p}}2\right]/\delta\right)\right]}{\log^{\frac{2}{p}}2} & p\in\left(0,1\right)
\end{cases}.
\]
Thus, we obtain
\begin{align*}
 & \gamma_{T}v_{T}\left(F(x^{T+1})-F(x^{*})\right)\\
\leq & (2-\mu_{f}\eta_{1})v_{0}D_{\psi}(x^{*},x^{1})+4\left[M^{2}+\sigma^{2}\left(\left(\frac{2e}{p}\lor e\log\frac{2e}{\delta}\right)^{\frac{2}{p}}+\max_{2\leq t\leq T}\frac{\widetilde{C}(\delta,p)}{1-\mu_{f}\eta_{t}}\right)\right]\sum_{t=1}^{T}\gamma_{t}\eta_{t}v_{t}\\
\leq & \max_{2\leq t\leq T}\frac{2}{1-\mu_{f}\eta_{t}}\left[v_{0}D_{\psi}(x^{*},x^{1})+2\left(M^{2}+\sigma^{2}C(\delta,p)\right)\sum_{t=1}^{T}\gamma_{t}\eta_{t}v_{t}\right].
\end{align*}
Dividing both sides by $\gamma_{T}v_{T}$ and plugging in $v_{t\in\left\{ 0\right\} \cup\left[T\right]}=\frac{\gamma_{T}}{\sum_{s=t\lor1}^{T}\gamma_{s}}$,
the proof is finished.
\end{proof}

The proof of the following lemma is inspired by \citet{vladimirova2020sub}.
\begin{lem}
\label{lem:weibull-hp-sigma}Under Assumption \ref{enu:A5D}, for
any $\delta\in(0,1)$, with probability at least $1-\delta/2$, we
have
\[
\sum_{t=1}^{T}\gamma_{t}\eta_{t}v_{t}\left\Vert \xi^{t}\right\Vert _{*}^{2}\leq\sum_{t=1}^{T}\gamma_{t}\eta_{t}v_{t}\sigma^{2}\left(\frac{2e}{p}\lor e\log\frac{2e}{\delta}\right)^{\frac{2}{p}},
\]
where $\gamma_{t}$, $\eta_{t}$, $v_{t}$ are the same as in Lemma
\ref{lem:core-general-anytime}.
\end{lem}

\begin{proof}
Let $\Xi_{t}\coloneqq\gamma_{t}\eta_{t}v_{t}\left\Vert \xi^{t}\right\Vert _{*}^{2}$
and $\Lambda_{t}\coloneqq\gamma_{t}\eta_{t}v_{t}\sigma^{2}$ for $t\in\left[T\right]$.
By Lemma \ref{lem:sub-weibull-moment}, for any $k\geq1$, there is
\[
\E\left[\left|\Xi_{t}\right|^{k}\mid\F^{t}\right]\leq e\Lambda_{t}^{k}\left(\frac{2k}{p}\right)^{\frac{2k}{p}}\Rightarrow\left\Vert \Xi_{t}\right\Vert _{k}\leq\Lambda_{t}e^{\frac{1}{k}}\left(\frac{2k}{p}\right)^{\frac{2}{p}},
\]
where $\left\Vert \cdot\right\Vert _{k}\coloneqq\left(\E\left[\left|\cdot\right|^{k}\right]\right)^{\frac{1}{k}}$.
Hence, for any $k\geq1$,
\[
\left\Vert \sum_{t=1}^{T}\Xi_{t}\right\Vert _{k}\leq\sum_{t=1}^{T}\left\Vert \Xi_{t}\right\Vert _{k}\leq\left(\sum_{t=1}^{T}\Lambda_{t}\right)e^{\frac{1}{k}}\left(\frac{2k}{p}\right)^{\frac{2}{p}}.
\]
Let $k\coloneqq1\lor\left(\frac{p}{2}\log\frac{2e}{\delta}\right)$
and $\lambda\coloneqq\left(\sum_{t=1}^{T}\Lambda_{t}\right)\left(\frac{2e}{\delta}\right)^{\frac{1}{k}}\left(\frac{2k}{p}\right)^{\frac{2}{p}}$,
then by Markov's inequality
\[
\Pr\left[\sum_{t=1}^{T}\gamma_{t}\eta_{t}v_{t}\left\Vert \xi^{t}\right\Vert _{*}^{2}>\lambda\right]=\Pr\left[\left(\sum_{t=1}^{T}\Xi_{t}\right)^{k}>\lambda^{k}\right]\leq\frac{\left\Vert \sum_{t=1}^{T}\Xi_{t}\right\Vert _{k}^{k}}{\lambda^{k}}\leq\frac{\delta}{2},
\]
which implies with probability at least $1-\delta/2$
\[
\sum_{t=1}^{T}\gamma_{t}\eta_{t}v_{t}\left\Vert \xi^{t}\right\Vert _{*}^{2}\leq\left(\sum_{t=1}^{T}\Lambda_{t}\right)\left(\frac{2e}{\delta}\right)^{\frac{1}{k}}\left(\frac{2k}{p}\right)^{\frac{2}{p}}\overset{(a)}{\leq}\left(\sum_{t=1}^{T}\Lambda_{t}\right)\left(\frac{2ek}{p}\right)^{\frac{2}{p}}=\sum_{t=1}^{T}\gamma_{t}\eta_{t}v_{t}\sigma^{2}\left(\frac{2e}{p}\lor e\log\frac{2e}{\delta}\right)^{\frac{2}{p}},
\]
where $(a)$ is due to $\frac{2e}{\delta}\leq\exp\left(\frac{2k}{p}\right)$.
\end{proof}

Lemma \ref{lem:weibull-hp-inner} is stated for two cases, $p\in\left[1,2\right)$
and $p\in\left(0,1\right)$. To prove the first case, we only need
to use Lemma \ref{lem:weibull}. The other case of $p\in\left(0,1\right)$
is tricky. To deal with it, we need Theorem 1 in \citet{li2018note}.
\begin{lem}
\label{lem:weibull-hp-inner}Under Assumptions \ref{enu:A4} and \ref{enu:A5D},
for any $\delta\in(0,1)$, with probability at least $1-\delta/2$,
for any $s\in\left[T\right]$, we have
\[
\left|\sum_{t=1}^{s}\gamma_{t}v_{t-1}\left\langle \xi^{t},\frac{z^{t-1}-x^{t}}{\sqrt{y_{t}}}\right\rangle \right|\leq\begin{cases}
4\sqrt{3\log\frac{4}{\delta}\sum_{t=1}^{T}\frac{\gamma_{t}\eta_{t}v_{t}\sigma^{2}}{1-\mu_{f}\eta_{t}\mathds{1}\left[t\geq2\right]}}+3\sqrt{2}\left(\sum_{t=1}^{T}\left(\frac{\gamma_{t}\eta_{t}v_{t}\sigma^{2}}{1-\mu_{f}\eta_{t}\mathds{1}\left[t\geq2\right]}\right)^{\frac{q}{2}}\right)^{\frac{1}{q}}\log^{\frac{1}{p}}\frac{4}{\delta} & p\in\left[1,2\right)\\
\frac{8}{\log^{\frac{1}{p}}2}\left[1\lor\log^{\frac{p+2}{2p}}\left(4\left[3+\left(\frac{3}{p}\right)^{\frac{2}{p}}2\right]/\delta\right)\right]\sqrt{\sum_{t=1}^{T}\frac{\gamma_{t}\eta_{t}v_{t}\sigma^{2}}{1-\mu_{f}\eta_{t}\mathds{1}\left[t\geq2\right]}} & p\in\left(0,1\right)
\end{cases}.
\]
where $\gamma_{t}$, $\eta_{t}$, $v_{t}$ are the same as in Lemma
\ref{lem:core-general-anytime}, $y_{t}\coloneqq\max_{\ell\in\left[t\right]}\gamma_{\ell}\eta_{\ell}^{-1}\left(1-\mu_{f}\eta_{\ell}\mathds{1}\left[\ell\geq2\right]\right)v_{\ell-1}D_{\psi}(z^{\ell-1},x^{\ell})$,
and $q\coloneqq\frac{p}{p-1}$ when $p\in\left[1,2\right)$.
\end{lem}

\begin{proof}
Let $Z^{t}\coloneqq\gamma_{t}v_{t-1}\frac{z^{t-1}-x^{t}}{\sqrt{y_{t}}},\forall t\in\left[T\right]$,
our goal is to bound $\left|\sum_{t=1}^{s}\left\langle \xi^{t},Z^{t}\right\rangle \right|$.
A useful observation is that
\begin{align}
\left\Vert Z^{t}\right\Vert  & =\gamma_{t}v_{t-1}\frac{\left\Vert z^{t-1}-x^{t}\right\Vert }{\sqrt{y_{t}}}\leq\gamma_{t}v_{t-1}\sqrt{\frac{2D_{\psi}(z^{t-1},x^{t})}{\gamma_{t}\eta_{t}^{-1}\left(1-\mu_{f}\eta_{t}\mathds{1}\left[t\geq2\right]\right)v_{t-1}D_{\psi}(z^{t-1},x^{t})}}\nonumber \\
 & =\sqrt{\frac{2\gamma_{t}\eta_{t}v_{t-1}}{1-\mu_{f}\eta_{t}\mathds{1}\left[t\geq2\right]}}\overset{v_{t-1}\leq v_{t}}{\leq}\sqrt{\frac{2\gamma_{t}\eta_{t}v_{t}}{1-\mu_{f}\eta_{t}\mathds{1}\left[t\geq2\right]}}\coloneqq m_{t}.\label{eq:weibull-hp-inner-m}
\end{align}
In the following, we split the proof into three cases, $p\in\left(1,2\right)$,
$p=1$, and $p\in\left(0,1\right)$.

First, we consider the case $p\in\left(1,2\right)$. Let
\begin{equation}
w\coloneqq\sqrt{\frac{\log\frac{4}{\delta}}{6\sigma^{2}\sum_{t=1}^{T}m_{t}^{2}}}\land\left(\frac{\log\frac{4}{\delta}}{2^{q-1}(q-1)\sigma^{q}\left(\sum_{t=1}^{T}m_{t}^{q}\right)}\right)^{\frac{1}{q}}.\label{eq:weibull-hp-inner-w-1}
\end{equation}
Now, we define the following non-negative sequence with $R_{0}\coloneqq1$
and
\[
R_{s}\coloneqq\exp\left(\sum_{t=1}^{s}w\left\langle \xi^{t},Z^{t}\right\rangle -6w^{2}\sigma^{2}\left\Vert Z^{t}\right\Vert ^{2}-2^{q-1}w^{q}\sigma^{q}\left\Vert Z^{t}\right\Vert ^{q}\right)\in\F^{s},\forall s\in\left[T\right].
\]
We prove that $R_{t}$ is a supermartingale by
\[
\E\left[R_{t}\mid\F^{t-1}\right]=R_{t-1}\E\left[\exp\left(w\left\langle \xi^{t},Z^{t}\right\rangle -6w^{2}\sigma^{2}\left\Vert Z^{t}\right\Vert ^{2}-2^{q-1}w^{q}\sigma^{q}\left\Vert Z^{t}\right\Vert ^{q}\right)\mid\F^{t-1}\right]\overset{(a)}{\leq}R_{t-1},
\]
where $(a)$ is by applying Lemma \ref{lem:weibull} to $\E\left[\exp\left(w\left\langle \xi^{t},Z^{t}\right\rangle \right)\mid\F^{t-1}\right]$.
Then, we consider a stopping time
\[
\tau\coloneqq\min\left\{ s\in\left[T\right]:R_{s}>\frac{4}{\delta}\right\} ,
\]
with the convention $\min\emptyset=\infty.$ Note that
\begin{align}
\Pr\left[\exists s\in\left[T\right],R_{s}>\frac{4}{\delta}\right] & =\Pr\left[\tau\leq T\right]=\E\left[\mathds{1}\left[\tau\leq T\right]\right]<\frac{\delta}{4}\E\left[R_{\tau}\mathds{1}\left[\tau\leq T\right]\right]\nonumber \\
 & =\frac{\delta}{4}\E\left[R_{\tau\land T}\mathds{1}\left[\tau\leq T\right]\right]\leq\frac{\delta}{4}\E\left[R_{\tau\land T}\right]\overset{(b)}{\leq}\frac{\delta R_{0}}{4}=\frac{\delta}{4},\label{eq:weibull-hp-inner-Ville}
\end{align}
where $(b)$ is due to the optional stopping theorem. Hence, with
probability at least $1-\delta/4$, for any $s\in\left[T\right]$
there is $R_{s}\leq\frac{4}{\delta}$, which implies
\begin{align*}
\sum_{t=1}^{s}\left\langle \xi^{t},Z^{t}\right\rangle  & \leq\sum_{t=1}^{T}\left(6w\sigma^{2}\left\Vert Z^{t}\right\Vert ^{2}+2^{q-1}w^{q-1}\sigma^{q}\left\Vert Z^{t}\right\Vert ^{q}\right)+\frac{1}{w}\log\frac{4}{\delta}\\
 & \overset{(\ref{eq:weibull-hp-inner-m})}{\leq}6w\sigma^{2}\left(\sum_{t=1}^{T}m_{t}^{2}\right)+2^{q-1}w^{q-1}\sigma^{q}\left(\sum_{t=1}^{T}m_{t}^{q}\right)+\frac{1}{w}\log\frac{4}{\delta}\\
 & \overset{(\ref{eq:weibull-hp-inner-w-1})}{\leq}2\sigma\sqrt{6\log\frac{4}{\delta}\sum_{t=1}^{T}m_{t}^{2}}+\sigma2^{\frac{1}{p}}p(q-1)^{\frac{1}{q}}\left(\sum_{t=1}^{T}m_{t}^{q}\right)^{\frac{1}{q}}\log^{\frac{1}{p}}\frac{4}{\delta}\\
 & \leq2\sigma\sqrt{6\log\frac{4}{\delta}\sum_{t=1}^{T}m_{t}^{2}}+3\sigma\left(\sum_{t=1}^{T}m_{t}^{q}\right)^{\frac{1}{q}}\log^{\frac{1}{p}}\frac{4}{\delta},
\end{align*}
where the last step is due to $\max_{p\in\left(1,2\right)}2^{\frac{1}{p}}p(q-1)^{\frac{1}{q}}=3$.
Similarly, we also have that, with probability at least $1-\delta/4$,
for any $s\in\left[T\right]$,
\[
-\sum_{t=1}^{s}\left\langle \xi^{t},Z^{t}\right\rangle \leq2\sigma\sqrt{6\log\frac{4}{\delta}\sum_{t=1}^{T}m_{t}^{2}}+3\sigma\left(\sum_{t=1}^{T}m_{t}^{q}\right)^{\frac{1}{q}}\log^{\frac{1}{p}}\frac{4}{\delta}.
\]
Hence, we conclude that
\begin{equation}
\Pr\left[\max_{s\in\left[T\right]}\left|\left\langle \xi^{t},Z^{t}\right\rangle \right|\leq2\sigma\sqrt{6\log\frac{4}{\delta}\sum_{t=1}^{T}m_{t}^{2}}+3\sigma\left(\sum_{t=1}^{T}m_{t}^{q}\right)^{\frac{1}{q}}\log^{\frac{1}{p}}\frac{4}{\delta}\right]\geq1-\frac{\delta}{2}.\label{eq:weibull-hp-inner-1}
\end{equation}

Next, we consider the case $p=1$. Let 
\begin{equation}
w\coloneqq\sqrt{\frac{\log\frac{4}{\delta}}{6\sigma^{2}\sum_{t=1}^{T}m_{t}^{2}}}\land\frac{1}{2\sigma\max_{t\in\left[T\right]}m_{t}}.\label{eq:weibull-hp-inner-w-2}
\end{equation}
We re-define the following non-negative sequence with $R_{0}\coloneqq1$
and
\[
R_{s}\coloneqq\exp\left(\sum_{t=1}^{s}w\left\langle \xi^{t},Z^{t}\right\rangle -6w^{2}\sigma^{2}\left\Vert Z^{t}\right\Vert ^{2}\right)\in\F^{s},\forall s\in\left[T\right].
\]
We prove that $R_{t}$ is a supermartingale by
\[
\E\left[R_{t}\mid\F^{t-1}\right]=R_{t-1}\E\left[\exp\left(w\left\langle \xi^{t},Z^{t}\right\rangle -6w^{2}\sigma^{2}\left\Vert Z^{t}\right\Vert ^{2}\right)\mid\F^{t-1}\right]\overset{(c)}{\leq}R_{t-1},
\]
where $(c)$ is by $w\left\Vert Z^{t}\right\Vert \overset{(\ref{eq:weibull-hp-inner-m})}{\leq}wm_{t}\overset{(\ref{eq:weibull-hp-inner-w-2})}{\leq}\frac{1}{2\sigma}$
and applying Lemma \ref{lem:weibull} to $\E\left[\exp\left(w\left\langle \xi^{t},Z^{t}\right\rangle \right)\mid\F^{t-1}\right]$.
Then similar to (\ref{eq:weibull-hp-inner-Ville}), we have $\Pr\left[\exists s\in\left[T\right],R_{s}>\frac{4}{\delta}\right]\leq\frac{\delta}{4}$,
implying that, with probability at least $1-\delta/4$, for any $s\in\left[T\right]$,
\[
\sum_{t=1}^{s}\left\langle \xi^{t},Z^{t}\right\rangle \leq\sum_{t=1}^{T}6w\sigma^{2}\left\Vert Z^{t}\right\Vert ^{2}+\frac{1}{w}\log\frac{4}{\delta}\overset{(\ref{eq:weibull-hp-inner-m})}{\leq}6w\sigma^{2}\left(\sum_{t=1}^{T}m_{t}^{2}\right)+\frac{1}{w}\log\frac{4}{\delta}\overset{(\ref{eq:weibull-hp-inner-w-2})}{\leq}2\sigma\sqrt{6\log\frac{4}{\delta}\sum_{t=1}^{T}m_{t}^{2}}+3\sigma\max_{t\in\left[T\right]}m_{t}\log\frac{4}{\delta}.
\]
Similarly, we can show the same bound holds with probability at least
$1-\delta/4$ for $-\sum_{t=1}^{s}\left\langle \xi^{t},Z^{t}\right\rangle ,\forall s\in\left[T\right]$.
Therefore, we conclude
\begin{equation}
\Pr\left[\max_{s\in\left[T\right]}\left|\sum_{t=1}^{s}\left\langle \xi^{t},Z^{t}\right\rangle \right|\leq2\sigma\sqrt{6\log\frac{4}{\delta}\sum_{t=1}^{T}m_{t}^{2}}+3\sigma\max_{t\in\left[T\right]}m_{t}\log\frac{4}{\delta}\right]\geq1-\frac{\delta}{2}.\label{eq:weibull-hp-inner-2}
\end{equation}

So, for $p\in\left[1,2\right)$, (\ref{eq:weibull-hp-inner-1}) and
(\ref{eq:weibull-hp-inner-2}) together imply that, with probability
at least $1-\delta/2$, for any $s\in\left[T\right]$ (where $\left(\sum_{t=1}^{T}m_{t}^{q}\right)^{\frac{1}{q}}=\max_{t\in\left[T\right]}m_{t}$
when $q=\infty\Leftrightarrow p=1$)
\[
\left|\sum_{t=1}^{s}\left\langle \xi^{t},Z^{t}\right\rangle \right|\leq2\sigma\sqrt{6\log\frac{4}{\delta}\sum_{t=1}^{T}m_{t}^{2}}+3\sigma\left(\sum_{t=1}^{T}m_{t}^{q}\right)^{\frac{1}{q}}\log^{\frac{1}{p}}\frac{4}{\delta},
\]
which gives us (by (\ref{eq:weibull-hp-inner-m}))
\[
\left|\sum_{t=1}^{s}\gamma_{t}v_{t-1}\left\langle \xi^{t},\frac{z^{t-1}-x^{t}}{\sqrt{y_{t}}}\right\rangle \right|\leq4\sqrt{3\log\frac{4}{\delta}\sum_{t=1}^{T}\frac{\gamma_{t}\eta_{t}v_{t}\sigma^{2}}{1-\mu_{f}\eta_{t}\mathds{1}\left[t\geq2\right]}}+3\sqrt{2}\left(\sum_{t=1}^{T}\left(\frac{\gamma_{t}\eta_{t}v_{t}\sigma^{2}}{1-\mu_{f}\eta_{t}\mathds{1}\left[t\geq2\right]}\right)^{\frac{q}{2}}\right)^{\frac{1}{q}}\log^{\frac{1}{p}}\frac{4}{\delta}.
\]

For $p\in\left(0,1\right)$, by Theorem 1 in \citet{li2018note},
for any $z>0$
\[
\Pr\left[\exists s\in\left[T\right],\left|\sum_{t=1}^{s}\gamma_{t}v_{t-1}\left\langle \xi^{t},\frac{z^{t-1}-x^{t}}{\sqrt{y_{t}}}\right\rangle \right|>z\right]\leq2\left[3+\left(\frac{3}{p}\right)^{\frac{2}{p}}\frac{64\sum_{t=1}^{T}\sigma_{t}^{2}}{z^{2}}\right]\exp\left(-\left(\frac{z^{2}}{32\sum_{t=1}^{T}\sigma_{t}^{2}}\right)^{\frac{p}{p+2}}\right),
\]
where $\sigma_{t\in\left[T\right]}\coloneqq\frac{m_{t}\sigma}{\log^{\frac{1}{p}}2}$.
Thus, we take $z=u^{\frac{p+2}{2p}}\sqrt{32\sum_{t=1}^{T}\sigma_{t}^{2}}$
and $u=1\lor\log\left(4\left[3+\left(\frac{3}{p}\right)^{\frac{2}{p}}2\right]/\delta\right)$
to get
\[
\Pr\left[\exists s\in\left[T\right],\left|\sum_{t=1}^{s}\gamma_{t}v_{t-1}\left\langle \xi^{t},\frac{z^{t-1}-x^{t}}{\sqrt{y_{t}}}\right\rangle \right|>u^{\frac{p+2}{2p}}\sqrt{32\sum_{t=1}^{T}\sigma_{t}^{2}}\right]\leq2\left[3+\left(\frac{3}{p}\right)^{\frac{2}{p}}\frac{2}{u^{\frac{p+2}{p}}}\right]\exp\left(-u\right)\leq\frac{\delta}{2},
\]
which gives us (by (\ref{eq:weibull-hp-inner-m}))
\[
\left|\sum_{t=1}^{s}\gamma_{t}v_{t-1}\left\langle \xi^{t},\frac{z^{t-1}-x^{t}}{\sqrt{y_{t}}}\right\rangle \right|\leq\frac{8}{\log^{\frac{1}{p}}2}\left[1\lor\log^{\frac{p+2}{2p}}\left(4\left[3+\left(\frac{3}{p}\right)^{\frac{2}{p}}2\right]/\delta\right)\right]\sqrt{\sum_{t=1}^{T}\frac{\gamma_{t}\eta_{t}v_{t}\sigma^{2}}{1-\mu_{f}\eta_{t}\mathds{1}\left[t\geq2\right]}}.
\]
\end{proof}

\section{General Convex Functions under Sub-Weibull Noise\label{sec:weibull-cvx}}

In this section, we provide the full version of Theorem \ref{thm:main-weibull-cvx-hp}.
\begin{thm}[Full version of Theorem \ref{thm:main-weibull-cvx-hp}]
\label{thm:weibull-cvx-hp}Under Assumptions \ref{enu:A1}-\ref{enu:A4}
and \ref{enu:A5D} with $\mu_{f}=\mu_{h}=0$ and let $\delta\in(0,1)$:

If $T$ is unknown, by taking $\eta_{t\in\left[T\right]}=\frac{1}{2L}\land\frac{\eta}{\sqrt{t}}$
with $\eta=\Theta\left(\sqrt{\frac{D_{\psi}(x^{*},x^{1})}{M^{2}+\sigma^{2}C(\delta,p)}}\right)$,
then with probability at least $1-\delta$, there is
\[
F(x^{T+1})-F(x^{*})\leq O\left(\frac{LD_{\psi}(x^{*},x^{1})}{T}+\frac{1}{\sqrt{T}}\left[\frac{D_{\psi}(x^{*},x^{1})}{\eta}+\eta\left(M^{2}+\sigma^{2}C(\delta,p)\right)\log T\right]\right).
\]
In particular, by choosing $\eta=\Theta\left(\sqrt{\frac{D_{\psi}(x^{*},x^{1})}{M^{2}+\sigma^{2}C(\delta,p)}}\right)$,
there is
\[
F(x^{T+1})-F(x^{*})\leq O\left(\frac{LD_{\psi}(x^{*},x^{1})}{T}+\frac{(M+\sigma\sqrt{C(\delta,p)})\sqrt{D_{\psi}(x^{*},x^{1})}\log T}{\sqrt{T}}\right).
\]

If $T$ is known, by taking $\eta_{t\in\left[T\right]}=\frac{1}{2L}\land\frac{\eta}{\sqrt{T}}$
with $\eta=\Theta\left(\sqrt{\frac{D_{\psi}(x^{*},x^{1})}{(M^{2}+\sigma^{2}C(\delta,p))\log T}}\right)$,
then with probability at least $1-\delta$, there is
\[
F(x^{T+1})-F(x^{*})\leq O\left(\frac{LD_{\psi}(x^{*},x^{1})}{T}+\frac{1}{\sqrt{T}}\left[\frac{D_{\psi}(x^{*},x^{1})}{\eta}+\eta\left(M^{2}+\sigma^{2}C(\delta,p)\right)\log T\right]\right).
\]
In particular, by choosing $\eta=\Theta\left(\sqrt{\frac{D_{\psi}(x^{*},x^{1})}{(M^{2}+\sigma^{2}C(\delta,p))\log T}}\right)$,
there is
\[
F(x^{T+1})-F(x^{*})\leq O\left(\frac{LD_{\psi}(x^{*},x^{1})}{T}+\frac{(M+\sigma\sqrt{C(\delta,p)})\sqrt{D_{\psi}(x^{*},x^{1})\log T}}{\sqrt{T}}\right).
\]
\end{thm}

\begin{proof}
From Lemma \ref{lem:weibull-core-hp}, if $\eta_{t\in\left[T\right]}\leq\frac{1}{2L\lor\mu_{f}}$,
with probability at least $1-\delta$, there is
\begin{equation}
F(x^{T+1})-F(x^{*})\leq\left(\max_{2\leq t\leq T}\frac{2}{1-\mu_{f}\eta_{t}}\right)\left[\frac{D_{\psi}(x^{*},x^{1})}{\sum_{t=1}^{T}\gamma_{t}}+2\left(M^{2}+\sigma^{2}C(\delta,p)\right)\sum_{t=1}^{T}\frac{\gamma_{t}\eta_{t}}{\sum_{s=t}^{T}\gamma_{t}}\right],\label{eq:weibull-cvx-hp-1}
\end{equation}
where $\gamma_{t\in\left[T\right]}=\eta_{t}\prod_{s=2}^{t}\frac{1+\mu_{h}\eta_{s-1}}{1-\mu_{f}\eta_{s}}$.
Note that $\mu_{f}=\mu_{h}=0$ now, hence, both $\eta_{t\in\left[T\right]}=\frac{1}{2L}\land\frac{\eta}{\sqrt{t}}$
and $\eta_{t\in\left[T\right]}=\frac{1}{2L}\land\frac{\eta}{\sqrt{T}}$
satisfy $\eta_{t\in\left[T\right]}\leq\frac{1}{2L\lor\mu_{f}}=\frac{1}{2L}$.
Besides, $\gamma_{t\in\left[T\right]}$ will degenerate to $\eta_{t\in\left[T\right]}$.
Then we can simplify (\ref{eq:weibull-cvx-hp-1}) into
\[
F(x^{T+1})-F(x^{*})\leq\frac{2D_{\psi}(x^{*},x^{1})}{\sum_{t=1}^{T}\eta_{t}}+4\left(M^{2}+\sigma^{2}C(\delta,p)\right)\sum_{t=1}^{T}\frac{\eta_{t}^{2}}{\sum_{s=t}^{T}\eta_{s}}.
\]

If $\eta_{t\in\left[T\right]}=\frac{1}{2L}\land\frac{\eta}{\sqrt{t}}$,
similar to (\ref{eq:cvx-exp-6}), we will have
\[
F(x^{T+1})-F(x^{*})\leq O\left(\frac{LD_{\psi}(x^{*},x^{1})}{T}+\frac{1}{\sqrt{T}}\left[\frac{D_{\psi}(x^{*},x^{1})}{\eta}+\eta\left(M^{2}+\sigma^{2}C(\delta,p)\right)\log T\right]\right).
\]
By plugging in $\eta=\Theta\left(\sqrt{\frac{D_{\psi}(x^{*},x^{1})}{M^{2}+\sigma^{2}C(\delta,p)}}\right)$,
we get the desired bound.

If $\eta_{t\in\left[T\right]}=\frac{1}{2L}\land\frac{\eta}{\sqrt{T}}$,
similar to (\ref{eq:cvx-exp-7}), we will get
\[
F(x^{T+1})-F(x^{*})\leq O\left(\frac{LD_{\psi}(x^{*},x^{1})}{T}+\frac{1}{\sqrt{T}}\left[\frac{D_{\psi}(x^{*},x^{1})}{\eta}+\eta\left(M^{2}+\sigma^{2}C(\delta,p)\right)\log T\right]\right).
\]
By plugging in $\eta=\Theta\left(\sqrt{\frac{D_{\psi}(x^{*},x^{1})}{(M^{2}+\sigma^{2}C(\delta,p))\log T}}\right)$,
we get the desired bound.
\end{proof}

\subsection{Optimal Rate under Sub-Weibull Noise\label{subsec:weibull-cvx-hp-optimal}}

In this section, we provide the full version of Theorem \ref{thm:main-weibull-cvx-hp-optimal}.
\begin{thm}[Full version of Theorem \ref{thm:main-weibull-cvx-hp-optimal}]
\label{thm:weibull-cvx-hp-optimal}Under Assumptions \ref{enu:A1}-\ref{enu:A4}
and \ref{enu:A5D} with $\mu_{f}=\mu_{h}=0$ and let $\delta\in(0,1)$,
if $T$ is known, by taking $\eta_{t\in\left[T\right]}=\frac{T-t+1}{2LT}\land\frac{\eta(T-t+1)}{T^{\frac{3}{2}}}$,
then with probability at least $1-\delta$, there is
\[
F(x^{T+1})-F(x^{*})\leq O\left(\frac{LD_{\psi}(x^{*},x^{1})}{T}+\frac{1}{\sqrt{T}}\left[\frac{D_{\psi}(x^{*},x^{1})}{\eta}+\eta\left(M^{2}+\sigma^{2}C(\delta,p)\right)\right]\right).
\]
In particular, by choosing $\eta=\Theta\left(\sqrt{\frac{D_{\psi}(x^{*},x^{1})}{M^{2}+\sigma^{2}C(\delta,p)}}\right)$,
there is
\[
F(x^{T+1})-F(x^{*})\leq O\left(\frac{LD_{\psi}(x^{*},x^{1})}{T}+\frac{(M+\sigma\sqrt{C(\delta,p)})\sqrt{D_{\psi}(x^{*},x^{1})}}{\sqrt{T}}\right).
\]
\end{thm}

\begin{proof}
From Lemma \ref{lem:weibull-core-hp}, if $\eta_{t\in\left[T\right]}\leq\frac{1}{2L\lor\mu_{f}}$,
with probability at least $1-\delta$, there is
\begin{equation}
F(x^{T+1})-F(x^{*})\leq\left(\max_{2\leq t\leq T}\frac{2}{1-\mu_{f}\eta_{t}}\right)\left[\frac{D_{\psi}(x^{*},x^{1})}{\sum_{t=1}^{T}\gamma_{t}}+2\left(M^{2}+\sigma^{2}C(\delta,p)\right)\sum_{t=1}^{T}\frac{\gamma_{t}\eta_{t}}{\sum_{s=t}^{T}\gamma_{s}}\right],\label{eq:weibull-cvx-hp-optimal-1}
\end{equation}
where $\gamma_{t\in\left[T\right]}=\eta_{t}\prod_{s=2}^{t}\frac{1+\mu_{h}\eta_{s-1}}{1-\mu_{f}\eta_{s}}$.
Note that $\mu_{f}=\mu_{h}=0$ now, hence, $\eta_{t\in\left[T\right]}=\frac{T-t+1}{2LT}\land\frac{\eta(T-t+1)}{T^{\frac{3}{2}}}$
satisfies $\eta_{t\in\left[T\right]}\leq\frac{1}{2L\lor\mu_{f}}=\frac{1}{2L}$.
Besides, $\gamma_{t\in\left[T\right]}$ will degenerate to $\eta_{t\in\left[T\right]}$.
Then we can simplify (\ref{eq:weibull-cvx-hp-optimal-1}) into
\[
F(x^{T+1})-F(x^{*})\leq\frac{2D_{\psi}(x^{*},x^{1})}{\sum_{t=1}^{T}\eta_{t}}+4\left(M^{2}+\sigma^{2}C(\delta,p)\right)\sum_{t=1}^{T}\frac{\eta_{t}^{2}}{\sum_{s=t}^{T}\eta_{s}}.
\]
For $\eta_{t\in\left[T\right]}=\frac{T-t+1}{2LT}\land\frac{\eta(T-t+1)}{T^{\frac{3}{2}}}$,
similar to (\ref{eq:cvx-exp-optimal-3}), we will have
\[
F(x^{T+1})-F(x^{*})\leq O\left(\frac{LD_{\psi}(x^{*},x^{1})}{T}+\frac{1}{\sqrt{T}}\left[\frac{D_{\psi}(x^{*},x^{1})}{\eta}+\eta\left(M^{2}+\sigma^{2}C(\delta,p)\right)\right]\right).
\]
By plugging in $\eta=\Theta\left(\sqrt{\frac{D_{\psi}(x^{*},x^{1})}{M^{2}+\sigma^{2}C(\delta,p)}}\right)$,
we get the desired bound.
\end{proof}

\end{document}